\documentclass{article}

\newif\ifarxiv
\arxivfalse %
\arxivtrue %

\ifarxiv %
\usepackage[final]{hyperref}
\usepackage{geometry}
\geometry{
	a4paper,
	total={165mm,240mm},
}

\usepackage[
backend=bibtex,
style=alphabetic,
sorting=ynt,
maxnames=100,
maxcitenames=4,
natbib=true,
]{biblatex}
\addbibresource{meta_learn.bib}

\usepackage{xcolor}

\definecolor{mydarkblue}{rgb}{0,0.08,0.45}
\definecolor{mycitationcolor}{rgb}{0,0.08,0.45}
\definecolor{mylinkcolor}{rgb}{1,0.0,0} %
\definecolor{mylinkcolor}{rgb}{0,0.08,0.85}%
\hypersetup{ %
	pdfborder=0 0 0,
	colorlinks=true,
	linkcolor=mylinkcolor,
	citecolor=mycitationcolor,
	filecolor=mydarkblue,
	urlcolor=mydarkblue,
}

\usepackage{booktabs} %
\usepackage[algo2e,ruled,vlined]{algorithm2e}
\usepackage{algorithm}%
\usepackage{algpseudocode}%

\usepackage{hyperref}

\fi

\usepackage{amsmath,amssymb,amsfonts,amsthm}
\usepackage{graphicx}

\usepackage{hyperref}

\usepackage{color}
\usepackage{xcolor}
\definecolor{skyblue}{rgb}{0.529,0.808,0.922}
\definecolor{LightCyan}{rgb}{0.88,1,1}
\definecolor{LightLightCyan}{rgb}{1.0,1.0,1.0}
\definecolor{Gray}{gray}{0.85}
\definecolor{DarkGray}{gray}{0.95}
\usepackage{colortbl}

\def\metlearntitle{
Sample Efficient Linear Meta-Learning by Alternating Minimization
}

\usepackage{xspace}
\def\altmin{{MLLAM \xspace}}
\def\altmins{{MLLAMS \xspace}}

\def\cL{{\mathcal{L}}}

\def\xij{x^{(i)}_j}
\def\yij{y^{(i)}_j}
\def\epsi{\varepsilon^{(i)}}
\def\epsij{\varepsilon^{(i)}_j}
\def\u{u}
\def\v{V}
\def\t{t}
\def\vi{v^{(i)}}

\def\dv{{H}}
\def\dvi{h^{(i)}}
\def\cH{\mathcal{H}}
\def\hcH{\widehat{\mathcal{H}}}
\def\cV{\mathcal{W}}
\def\cVi{\cV^{-1}}
\def\cVh{\cV^{\frac12}}
\def\cVih{\cV^{-\frac12}}
\def\VV{{W}}

\def\VVih{{\VV}^{-\frac{1}2}}

\def\inftyone{{\infty,2}}
\def\v{V}
\def\hvi{\widehat{v}^{(i)}}
\def\hvij{{\widehat{v}^{(i)}_j}}
\def\hu{\widehat{u}}
\def\hU{\widehat{U}}
\def\pu{{u^+}}
\def\pU{{U^+}}
\def\iU{{U_{\mathrm{init}}}}
\def\ou{u^*}
\def\ov{V^*}
\def\oU{U^*}
\def\ovi{v^*^{(i)}}
\def\owi{\widetilde{v}^{*(i)}}
\def\ow{\widetilde{V}^*}
\def\ovi{v^{*(i)}}
\def\Si{S^{(i)}}

\def\zi{z^{(i)}}
\def\gi{g^{(i)}}
\def\hgj{{\widehat{g}_j}}
\def\cA{\mathcal{A}}
\def\hcA{{\widehat{\mathcal{A}}}}
\def\cI{\mathcal{I}}
\def\cE{\mathcal{E}}
\def\cN{\mathcal{N}}
\def\bS{\mathbb{S}}
\def\inv{{\dagger}}
\def\invert{{-1}}
\newcommand{\Ip}[2]{\left\langle#1, #2\right\rangle}
\def\tr{\mathrm{tr}}
\def\Id{\mathbf{I}}
\def\Ind{\mathbb{I}}
\def\bR{\mathbb{R}}
\def\Ord{{O}}
\def\tOrd{\widetilde{\Ord}}
\def\bE{\mathbb{E}}
\def\eigmax{\lambda_1}
\def\eigmin{\lambda_r}

\def\sigmin{\sigma_r}

\def\SNRn{{\frac{\sigma}{\sqrt{{\eigmin^*} }}}}
\def\SNRnl{{{\sigma}/{\sqrt{{\eigmin^*} }}}}

\def\SNRre{{\frac{\sigma}{\sqrt{{\eigmin^*}}}}}

\def\cT{{\mathcal{T}}}
\def\cP{{\mathcal{P}}}
\def\hcT{{\widehat{\mathcal{T}}}}
\def\hatt{{\widehat{t}}}

\newtheorem{thm}{Theorem}
\newtheorem{coro}[thm]{Corollary}

\newtheorem{assume}{Assumptions}

\newtheorem{propo}{Proposition}[section]
\newtheorem{lemma}[propo]{Lemma}

\usepackage{subcaption}

\begin{document}
	
	\ifarxiv
	\title{{\metlearntitle}}
	\date{}
	\author{
		Kiran Koshy Thekumparampil$^\dagger$, Prateek Jain$^\ddagger$,  Praneeth Netrapalli$^\ddagger$, Sewoong Oh$^\pm$
		\thanks{Author emails are {\text thekump2@illinois.edu}, \text{prajain@google.com}, {\text{pnetrapalli@google.com}, and \text{sewoong@cs.washington.edu}}. 
		}\\
		\\
		$^\dagger$University of Illinois at Urbana-Champaign,
		$^\ddagger$Google Research, India,\\
		$^\pm$University of Washington, Seattle
	}
	\maketitle
	\else
	\fi
	
	\begin{abstract}
		Meta-learning synthesizes and leverages the knowledge from a given set of tasks to rapidly learn new tasks using very little data. 
		Meta-learning of linear regression tasks, where the regressors lie in a low-dimensional subspace, is an extensively-studied fundamental problem in this domain. 
		However, existing results either guarantee highly suboptimal estimation errors, or require $\Omega(d)$ samples per task (where $d$ is the data dimensionality) thus providing little gain over separately learning each task.
		In this work, we study a simple alternating minimization method (MLLAM), which alternately learns the low-dimensional subspace and the regressors.
		We show that, for a constant subspace dimension %
		MLLAM obtains nearly-optimal estimation error, despite requiring only $\Omega(\log d)$ samples per task. 
		However, the number of samples required per task grows logarithmically with the number of tasks. 
		To remedy this in the low-noise regime, we propose a novel task subset selection scheme that ensures the same strong statistical guarantee as MLLAM, 
		even with {\em bounded} number of samples per task for arbitrarily large number of tasks.
	\end{abstract}

\section{Introduction}
\label{sec:int}

Common real world tasks follow a long tailed distribution where most of the tasks only have a small number of labelled examples 
\cite{wang2017learning}. Collecting more  clean labels is often costly (e.g., medical imaging). 
As each task does not have enough examples to be learned in isolation, meta-learning attempts to 
meta-learn across a large number of tasks by exploiting some structural similarities among those tasks. 
One popular  approach is to learn a  shared representation, where all of those tasks can be solved accurately \cite{sun2017revisiting}. 
Once such a representation has been learnt,  we can rapidly adapt to  new arriving tasks, learning a model with only a few examples. 
Empirical evidences suggest  that this might also explain  recent successes in few-shot supervised learning with optimization based methods like MAML \cite{finn2017model,maml_rep}. 

In this paper, we study the problem of linear meta-representation learning \cite{du2020few,tripuraneni2020provable}, where the goal is to learn a $r$-dimensional linear representation/subspace that is shared by a collection of $t$ linear regression tasks in $d$ dimensions. Each task has $m$ labelled examples. 

We investigate a fundamental question: as the number of tasks grow, can we learn the underlying $r$-dimensional shared representation (subspace) more accurately, and consequently learn more accurate regressors per task? 
The question is important because in general, the number of tasks can be large while a lot of tasks are data starved. Furthermore, in several settings like crowdsourcing or bioinformatics, it might be easier to collect more data for new tasks, instead of collecting more data for the existing tasks. 

Most of the existing work do not provide a satisfactory solution for this fundamental problem. In particular, \citet{du2020few} require $m=\Omega(d)$ samples per task, which is prohibitively large, and in fact with so many samples, one can solve each task in isolation.   The Burer-Monteiro factorization approach of \citet{tripuraneni2020provable} is not able to provide any improvement by increasing the number of tasks, it needs to increase samples per task to $m=\Omega(1/\varepsilon^2 )$ to achieve $\varepsilon$ accuracy. While the method-of-moments approach proposed in \citep{tripuraneni2020provable,kong2020meta} does provide more accurate representation learning with a larger number of tasks,  the method has a highly sub-optimal dependence on the noise variance $\sigma^2$ associated with each task. For example, even when each regression task can be solved exactly with $0$ error, this method will incur a significant error.

\begin{table*}[tbp]
	\centering
	{\caption{Comparison of high-probability error bounds for the distance between the learned ($U$) and the true ($\oU$) subspaces, for linear low-rank meta-learning in $d$ dimensions with $t$ tasks, $m$ samples per task, and noise variance $\sigma^2$. Note that $\tOrd$ and $\widetilde\Omega$ hides $\mathrm{polylog}$ factors in $d$ and $\log\log$ factors in $t$.
	We assume a constant small subspace rank, constant incoherence of tasks, constant magnitude for regressors, and well-conditioned task diversity matrix. Note that non-convex ERM~\citep{du2020few} is a result for general non-linear meta-learning.
		}
	\label{tab:results_altmin_meta}%
	}%
	{\small 
		\begin{tabular}{c   c  c}
			\toprule
			\bfseries Algorithm & \bfseries Error-bound: $\|(\Id - \oU (\oU)^\top)U\|$ 
			& \bfseries Required samples per task \\
			\midrule
			\rowcolor{DarkGray}
			Non-convex ERM~\citep{du2020few} & 
			$\tOrd(\sigma) \sqrt{\frac{t + d}{m\,t}}$ 
			&  $m \geq \widetilde\Omega(d + \log(t))$ \\
			Burer-Monteiro factorization~\citep{tripuraneni2020provable}   & $\tOrd(\sigma)\sqrt{\frac{\max(t, d)}{m\,t}}$ &  $m \geq \widetilde\Omega(\log(t))$ \\
			\rowcolor{DarkGray}
			Method-of-Moments~\citep{tripuraneni2020provable}   & $\tOrd(1 + \sigma^2)\sqrt{\frac{d}{m\,t}}$ &  $\Omega(1)$ \\
			\cellcolor{LightCyan} {\altmin (ours, Theorem~\ref{thm:altmin_logt_informal})}  & \cellcolor{LightCyan}$\tOrd(\sigma)\sqrt{\frac{d}{m\,t}}$ &  \cellcolor{LightCyan} $m \geq \widetilde\Omega((1+\sigma^2)\log t)$
			\\
			\cellcolor{skyblue} {\altmins (ours, Theorem~\ref{thm:altmin_subset_informal})}  & \cellcolor{skyblue}$\tOrd(\sigma)\sqrt{\frac{d}{m\,t}}$ &  \cellcolor{skyblue} $m \geq \widetilde\Omega(1+\sigma^2 \log(t))$
			\\
			Lower-bound~\citep{tripuraneni2020provable}   & $\Omega(\sigma)\sqrt{\frac{d}{m\,t}}$ &  $\Omega(1)$
			\\
			\bottomrule
		\end{tabular}
	}
\end{table*}

\noindent
{\bf Contributions.} 
In this paper, we propose the first efficient  approach for linear meta-learning with provable guarantees that  achieves nearly optimal error rate. According to a Frobenius norm error metric, our bound matches a fundamental lower bound. %
Our first algorithm \altmin is based on alternating minimization, 
inspired by a long line of successes in matrix completion and matrix sensing \cite{jain2013low}. 
Assuming constant dimensionality of the representation, \altmin requires $m=\Omega(\log t + \log\log(1/\varepsilon))$ samples per task to achieve an accuracy of $\varepsilon$ when we have $t$ tasks. Our method obtains nearly optimal dependence on the noise variance $\sigma^2$ and the error in  representation learning drops nearly optimally with growing number of tasks, which is a significant improvement over the state-of-the-art. %

However, the number of samples per task ($m$) still grows logarithmically on $t$. To further improve this dependence, we introduce  MLLAMS 
that applies alternating minimization to only a subset of tasks that are well-behaved.
When the noise is sufficiently small with variance $O(1/\log t)$, this further reduces the requirement down to $m=\Omega(\log\log(1/\varepsilon))$. That is, despite fixed  $m$, MLLAMS can estimate each task more accurately. Furthermore, due to our improved rates on estimation of the $r$-dimensional subspace, the  best known rates for prediction error on new tasks also improve significantly. Table~\ref{tab:results_altmin_meta} compares the error and per-task sample complexity of our method against existing state-of-the-art results; see Section~\ref{sec:problem} for details of the problem setting. 

Broadly, our proof structure follows that of existing alternating minimization results \cite{jain2013low, netrapalliphase} by showing iterative refinement of the estimates. However, existing techniques are able to rely on restricted isometry property style properties, which are significantly more difficult to prove in our case. Furthermore, most of the existing works in this literature analyze non-noisy setting where each observation is sampled exactly from an underlying model. However, in this work, we also allow each observation to be corrupted by a white noise, leading to more challenging per-iterate analysis. 
\\

\noindent
\textbf{Notations:} For an whole number $n$, $[n] = \{1,\ldots,n\}$. $\|A\|$ and $\|A\|_F$ denote the spectral and Frobenius norms of the matrix A. $\Ip{A}{B}$ denotes inner produce between two matrices. $A^\inv$ is the Moore-Penrose pseudoinverse and $A^\top$ is the transpose of the matrix $A$. $x \sim \cN(0, \Id_{d \times d})$ means that $x$ is $d$ dimensional standard isotropic Gaussian random vector.

\subsection{Related work} 
\label{sec:related} 

\noindent{\bf Representation learning for meta-learning.} There is a large body of  work in meta-learning from multiple tasks  since the seminal work in learning to learn \cite{Thrun1998learning}, inductive bias learning  \cite{baxter2000model},  and multitask learning \cite{caruana1997multitask}. One popular line of work starting from \cite{intrator1996making,baxter1995learning} is to learn  a low-dimensional representation  for a set of  related tasks and  use the representation  to efficiently train a model for a new arriving task. Recently, these representation learning approaches are gaining more attraction as recent empirical evidence  indicates that the success of other popular  meta-learning approaches such as MAML \cite{finn2017model} is   due to their capability to learn useful low-dimensional representations  \cite{raghu2019rapid}.  

\cite{ando2005framework,rish2008closed,orlitsky2005supervised} address the problem of recovering linear regression  parameters that  lie on an unknown $r$-dimensional subspace $U^*$, where all tasks can be accurately solved. Nuclear-norm minimization approaches are proposed in \cite{argyriou2008convex,harchaoui2012large,amit2007uncovering,pontil2013excess} but they do not provide subspace/generalization error guarantees and suffer from large training time. 

Closest to our work is \cite{tripuraneni2020provable} that analyzes the landscape of the empirical risk with Burer-Monteiro factorization.  It is shown that  $mt = \widetilde{\Omega}( \max\{t,d\}r^4 + \max\{t,d\}r^2\sigma^2/\epsilon^2)$ is sufficient to achieve a rescaled error $(1/\sqrt{r})\| ({\mathbf I}-U^*(U)^*)^T)U \|_F \leq \epsilon$,  where we assumed incoherent  and well-conditioned regression parameters to simplify the condition.   In particular, more tasks do not give any gain beyond a certain point if $m$ is fixed.  Further, it is also required that all tasks are of equal strengths, i.e.,~$\|v^{(i)}\|=\Theta(1)$ for all $i\in[t]$. Another approach is to find the principal directions of a particular 4th moment matrix  \cite{tripuraneni2020provable,kong2020meta}. This only requires $mt=\widetilde{\Omega}( (1+\sigma^2)dr^2/\epsilon^2)$, but the algorithm is inexact;  
the error is bounded away from zero even if there is no noise and sample size is sufficiently large to learn all the parameters. 
This is in a stark contrast with our approach, as illustrated in Figure~\ref{fig:alt_mom_vs_sigma}. 
\citet{du2020few} studies  the  global minimizer of a non-convex optimization in Eq.~\eqref{eq:risk} without analyzing an efficient algorithm to find it. It is  shown that a small generalization error can be achieved if $m = \widetilde\Omega( d)$. 

We also point out a concurrent and independent work \cite{collins2021exploiting}, which proposes and analyzes a slightly different variant (with descent step on $U$) \citep[Algorithm 2]{collins2021exploiting} of our alternating minimization algorithm (Algorithm~\ref{alg:altmin}, \altmin) for a similar linear meta-learning setting. However, this work assumes that the linear meta-learning problem is noiseless, i.e.~$\sigma=0$ (as defined in our Assumption~\ref{assume:linear_meta_problem}), and then it provides a per task sample complexity of $m \geq \widetilde{\Omega}(({{\eigmax^*}/{\eigmin^*}})^2 \,r^3 \log(t))$ and a total sample complexity of $m \t \geq \widetilde{\Omega} (({{\eigmax^*}/{\eigmin^*}})^2 d r^2)$. In contrast, our results are for a more natural noisy setting, and even for the noiseless setting we obtain a tighter per task sample complexity of $m \geq \widetilde{\Omega}(r^2)$ and a total sample complexity of $m \t \geq \widetilde{\Omega} (({{\eigmax^*}/{\eigmin^*}}) (d+r^2) r^2)$ (Corollary~\ref{coro:altmin_subset_informal}) in terms of the condition number $({{\eigmax^*}/{\eigmin^*}})$ and the rank $r$, and our per task complexity does not scale with the number of tasks $t$. \citet{collins2021exploiting} further show that alternating minimization performs better than other baselines for personalized federated learning of neural network classifiers for some datasets.

\medskip\noindent{\bf Matrix sensing.} 
Starting from matrix sensing and completion problems \cite{candesrecht,svp,jain2013low}, recovering a low-rank matrix from linear measurements have been a popular topic of research. Linear meta-learning is a special case of matrix sensing, but with special sensing operator of the form $\mathcal{A}(UV^T)=[A_1(UV^T), \dots, A_{mt}(UV^T)]$ where $A_{ij}(UV^T)=\langle x_{ij}e_i^\top , UV^\top\rangle$. This operator cannot satisfy sensing properties like restricted isometry property, in general because of sparse sensing matrix, so existing matrix sensing results do not apply directly. Furthermore, \cite{jain2013provable,zhong2015efficient} studied a similar problem but their results also require $O(d)$ samples per task, which limits it's applicability to the meta-learning setting where each task has a small number of samples. 
\section{Problem formulation}
\label{sec:problem}
Suppose there are $\t$ $d$-dimensional linear regression tasks, and each of them have $m$ samples. That is for the $i$-th task ($i \in [t]$), we are given $m$ samples $\{(\xij \in \mathbb{R}^d, \yij \in \mathbb{R})\}_{j=1}^m$, where $(\xij, \yij)$ is the $j$-th pair of example and observation. The standard goal is to learn accurate regressors $\widetilde{v}^{*(i)}$ for each of the tasks. However, in the meta-learning setting, all the tasks are related and share a  common but unknown {\em low-dimensional representation} parameterized by $\oU\in \mathbb{R}^{d\times r}$ where $r\ll d$. Here, the goal is to learn $\oU$ and the task specific regressors $v^{(i)}$ s.t. $v^{(i)}$'s are accurate regressor for samples $\{((\oU)^\top\xij \in \bR^r, \yij \in \bR)\}_{j=1}^m$, for $i \in [t]$. 
This is equivalent to finding a set of accurate regressors $\widetilde{v}^{(i)}$'s, which lie in a low-dimensional subspace. 

A natural requirement of the problem is to then learn the tasks accurately with very small number of samples per task, especially for large $t$. As a task specific regressor has only $r$ parameters, given $\oU$, we expect the number of samples per task to depend only on $r$, instead of $d$. Furthermore, the total number of samples $m\cdot \t$ should scale at most linearly with the data dimension $d$. However, simultaneously learning the representation $U$ and the regressors $v^{(i)}$ is challenging.
In fact, since the NP-hard low-rank matrix completion problem~\cite{hardtmeka} can be reduced to the linear meta-learning problem, the latter is NP-hard. Therefore, similar to \citet{tripuraneni2020provable}, we study the problem in the following tractable random design setting.
\begin{assume}\label{assume:linear_meta_problem}
Let $\oU \in \mathbb{R}^{d\times r}$ be an orthonormal matrix. For a task $i \in [t]$, with task specific regressor $\ovi \in \mathbb{R}^r$ and $j$-th example $\xij \sim \cN(0, \Id_{d \times d})$, its observation is\vspace*{-5pt}
\begin{equation}
\label{eq:truedata}
\yij  \;=\; \langle \xij ,  U^*\ovi \rangle + \epsij \;,\vspace*{-5pt}
\end{equation}
where $\epsij \sim \cN(0,\sigma^2)$ is the measurement noise which is independent of $\xij$. So, the optimal regressor $\owi$ for each task is given by: $\owi=\oU\ovi$. %
We denote the matrix of the optimal regressors as: $\ow = \oU (\ov)^T$ where 	$(\ov)^T = [v^{*(1)}, \ldots, v^{*(t)}]$. %
\end{assume}

\begin{assume}\label{assume:incoherence}
Let $\eigmax^*$ and $\eigmin^*$  denote the largest and smallest eigenvalues of the task diversity matrix $(r/t) (\ov)^T \ov \in {\mathbb R}^{r\times r}$ respectively. We assume that $\ov$ is $\mu$-incoherent, i.e., 
\begin{eqnarray}
	\max_{i\in[t]}  \| \ovi\|^2 \leq\; \mu\,  \eigmin^*  \;.
	\label{eq:incoherence}\vspace*{-5pt}
\end{eqnarray}\vspace*{-5pt}
\end{assume}
Our goal is to recover the subspace $U^*$, up to a nearly optimal error, from a small number of samples per task. Recovering $\oU$ enables the estimation of the regressor of any new task in the same subspace, using only $\approx O(r)$ samples. 
To this end, we minimize the empirical risk of parameter matrices $U\in{\mathbb R}^{d\times r}$  and $\v=[v^{(1)}, \ldots, v^{(t)}]^T \in{\mathbb R}^{t\times r}$: \vspace*{-5pt}
\begin{align}
& \cL(U, \v) \;=\; \sum_{i=1}^{t} \sum_{j=1}^{m} \frac12 \left(\yij - \big\langle  U\vi , \xij \big\rangle \right)^2 \;. \vspace*{-5pt}
\label{eq:risk}
\end{align}
The problem is non-convex due to bi-linearity of $U$and $V$.
$\widetilde{O}$ and $\widetilde{\Omega}$ hide logarithmic terms in $d$ and  $r$.

	\section{Main results}
\textbf{\textit{Alternating minimization}:}
We first present our main result for a standard alternating minimization method (Algorithm~\ref{alg:altmin}) when applied to the meta-learning linear regression problem in the problem setting described in Section~\ref{sec:problem}. 
\begin{thm}\label{thm:altmin_logt_informal}
	Let  there be $t$ linear regression tasks, each with $m$ samples satisfying Assumptions \ref{assume:linear_meta_problem} and \ref{assume:incoherence}, and
	\begin{align*}
		&m \geq \widetilde\Omega ( (1+ r(\SNRnl )^2) r \log t + r^2  ), 
		\; \text{ and } 
		\;
		m \t \geq \widetilde{\Omega} ( (1+(\SNRnl )^2 )({{\eigmax^*}/{\eigmin^*}}) \mu d r^2 ). 
	\end{align*}
	Then \altmin (Algorithm~\ref{alg:altmin}), 
	initialized at $\iU$ s.t. $\|(\Id-\oU(\oU)^\top ) \iU\|_F \leq \min (3/4, \Ord (\sqrt{ {\eigmin^*}/{\eigmax^* }} ) ) $ and
	run for $K=\lceil \log_2({{\eigmin^*} {\eigmin^*} {m\,t} / {\eigmax^*} {\sigma^2} {\mu\,d\,r^2}}) \rceil$ iterations, outputs $U$ so that the following holds (w.p. $\geq 1- K/(dr)^{10}$): 
	\begin{eqnarray}	
	\frac{\|(\Id - \oU (\oU)^\top)U \|_F}{\sqrt{r}} \; \leq\;  \tOrd \left( \bigg( \SNRn \bigg) \sqrt{\frac{\mu\,r\, d }{m\, t}}  \,\right) \;.
	\end{eqnarray}
\end{thm}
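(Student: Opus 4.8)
The plan is to establish a one-step contraction for a single round of alternating minimization and then iterate it. Write $U$ for the current (orthonormal) subspace iterate and $\delta:=\|(\Id-\oU(\oU)^\top)U\|_F$; one round should produce $U^+$ with $\|(\Id-\oU(\oU)^\top)U^+\|_F\le\tfrac12\delta+\epsilon_0$, where $\epsilon_0=\tOrd(\sigma\,r\sqrt{\mu d/(mt\,\eigmin^*)})$ is the target floor (so $\epsilon_0/\sqrt r$ is the claimed bound). The structural fact that drives everything, and the reason $m$ need not scale with $d$, is that for a fixed $U$ the signal part of $\yij$ and the prediction $\langle Uv,\xij\rangle$ depend on $\xij$ only through its projection onto $\mathrm{span}(U)+\mathrm{span}(\oU)$, a subspace of dimension at most $2r$; hence per task one needs only enough Gaussian rows for a $(1\pm\gamma)$-isometry of the design restricted to $2r$ dimensions, which holds simultaneously for all $t$ tasks with probability $1-(dr)^{-10}$ once $m\gtrsim r\log t+\mathrm{polylog}(dr)$ with $\gamma$ an absolute constant -- this is the source of the $\log t$ term.

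First I would write both updates in closed form. The $V$-step is, per task, the least-squares solution $\hvi=(U^\top(X^{(i)})^\top X^{(i)}U)^{-1}U^\top(X^{(i)})^\top(X^{(i)}\oU\ovi+\epsi)$, $X^{(i)}\in\bR^{m\times d}$ being the task-$i$ covariate matrix; on the isometry event this gives $U\hvi=UU^\top\oU\ovi+(\text{term of size }\Ord(\gamma\delta\|\ovi\|))+(\text{term of size }\Ord(\sigma\sqrt{r/m}))$, and, with Assumption~\ref{assume:incoherence} and the near-orthonormality of $U^\top\oU$, it shows that $\widehat V$ (rows $(\hvi)^\top$) inherits $\Ord(\mu)$-incoherence and an $\Ord(1)$-conditioned diversity matrix $(r/t)\widehat V^\top\widehat V$ from $\ov$ (this last step is where the noise-dependent per-task terms $r(\SNRnl)^2 r\log t$ enter, to keep the noise part of $\hvi$ below $\|\ovi\|$). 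The $U$-step is a least squares in $\mathrm{vec}(U)\in\bR^{dr}$ with design vectors $\hvi\otimes\xij$, i.e. $\mathrm{vec}(U^+)=M^{-1}\sum_{i,j}(\hvi\otimes\xij)\yij$ followed by orthonormalizing $U^+$, with $M=\sum_{i,j}(\hvi(\hvi)^\top)\otimes(\xij(\xij)^\top)$ of conditional mean $m(\widehat V^\top\widehat V)\otimes\Id_d$.

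The heart of the proof is the per-step bound: substitute $\yij=\langle\xij,\oU\ovi\rangle+\epsij$ into $\mathrm{vec}(U^+)$ and split into (a) an ``ideal'' part which, because $\sum_j\xij(\xij)^\top$ restricted to the relevant $\le 2r$-subspace is $\approx m\Id$, reduces to a matrix with column space inside $\mathrm{span}(\oU)$ and is killed by $\Id-\oU(\oU)^\top$; (b) a perturbation driven jointly by the $V$-step residual $(\Id-UU^\top)\oU\ovi$ (of size $\lesssim\delta\|\ovi\|$) and the fluctuation of $\sum_j\xij(\xij)^\top$ about $m\Id$, which after applying $M^{-1}$ and projecting by $\Id-\oU(\oU)^\top$ is $\le\tfrac12\delta$ once $\gamma$ is a small constant and $mt\gtrsim(\eigmax^*/\eigmin^*)\mu dr^2$; and (c) the noise term, which up to lower order equals $\tfrac1m\sum_i\big(\sum_j\epsij(\Id-\oU(\oU)^\top)\xij\big)(\hvi)^\top(\widehat V^\top\widehat V)^{-1}$ and whose Frobenius norm I would control by a $\|\cdot\|\,\|\widehat V^\dagger\|_F$-type estimate, using that $\epsij$ is independent of $(\Id-\oU(\oU)^\top)\xij$, to get $\Ord(\epsilon_0)$; the requirement $mt\gtrsim(\SNRnl)^2(\eigmax^*/\eigmin^*)\mu dr^2$ is what makes $\epsilon_0$ smaller than the initialization radius, so the $1+(\SNRnl)^2$ factor in the total-sample-complexity hypothesis appears naturally. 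Steps (b),(c) are exactly where an analysis in the style of \cite{jain2013low,netrapalliphase} would invoke a restricted isometry property of the operator $\cA$ but cannot, since its sensing matrices $\xij e_i^\top$ are sparse; instead they must be proven from scratch by matrix/vector Bernstein bounds over the $mt$ independent summands, with Assumption~\ref{assume:incoherence} controlling per-summand magnitudes and the $2r$-dimensional reduction controlling variances. Carrying this out -- replacing RIP by these bespoke concentration bounds, keeping the thresholds as tight as $mt\gtrsim(\eigmax^*/\eigmin^*)\mu dr^2$ and $m\gtrsim r\log t$, and propagating the additive white noise cleanly through every step -- is the main obstacle.

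Finally, iterating $\delta^+\le\tfrac12\delta+\epsilon_0$ from $\delta_0=\min(3/4,\Ord(\sqrt{\eigmin^*/\eigmax^*}))$ gives $\|(\Id-\oU(\oU)^\top)U_K\|_F\le 2^{-K}\delta_0+2\epsilon_0$, and the stated $K=\lceil\log_2((\eigmin^*)^2 mt/(\eigmax^*\sigma^2\mu dr^2))\rceil$ is chosen precisely so that $2^{-K}\delta_0=\Ord(\epsilon_0)$ under the assumed sample-complexity bounds; dividing by $\sqrt r$ then gives $\|(\Id-\oU(\oU)^\top)U_K\|_F/\sqrt r=\tOrd(\SNRn\sqrt{\mu rd/(mt)})$, and a union bound over the $K$ rounds and $t$ tasks gives the $K/(dr)^{10}$ failure probability. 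One point I would handle with care: the round-$k$ concentration events refer to $U_k$, which is a function of the data, so I would either run the rounds on fresh per-task sample batches -- the standard fix, which also forces $m$ to grow like $\log t+\log\log(1/\varepsilon)$ -- or, to avoid that extra factor, upgrade the pointwise isometry statements to ones uniform over the shrinking $\Ord(\delta_k)$-ball around $\oU$ by a covering argument.
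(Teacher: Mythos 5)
Your proposal is correct and follows essentially the same route as the paper: a per-iterate contraction $\delta^+\le\tfrac12\delta+\epsilon_0$, obtained by analyzing the $V$-step as task-wise least squares in the $r$-dimensional projected design (so that $m\gtrsim r\log t$ rather than $m\gtrsim d$ suffices for a uniform isometry over all tasks), followed by the $U$-step as a $dr$-dimensional least squares, decomposed into an ideal part living in $\mathrm{span}(\oU)$, a perturbation driven by the $V$-step residual, and a noise term; propagating incoherence and conditioning from $\ov$ to $\widehat V$; controlling the $U$-step operator (your $M$) by proving an approximate identity for it from scratch; and unrolling with a union bound over $K$ rounds, with $K$ tuned so that $2^{-K}\delta_0\lesssim\epsilon_0$. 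The paper carries this out in Lemmas~\ref{lem:v_update_rkr_logt}, \ref{lem:u_update_rkr_logt}, \ref{lem:qr_r_inverse_logt}, \ref{lem:shuffling_rkr_logt} and Theorem~\ref{thm:altmin_logt}; what you call $M$ and $\mathrm{vec}(U)$ it treats as the self-adjoint operator $\cA$ on $\bR^{d\times r}$, and the concentration is done by Hanson--Wright plus $\epsilon$-net arguments over $\bR^{d\times r}$ and $\bR^r$ rather than matrix Bernstein, but these are cosmetic differences.

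One point worth correcting: the paper's fix for the data dependence of $U_k$ is \emph{not} a $K$-way split of each task's $m$ samples, but a $K$-way shuffle-and-partition of the \emph{tasks} (Algorithm~\ref{alg:altmin}, line 3), combined with a single $2$-way split of each task's samples (first half for the $V$-step, second half for the $U$-step). A $K$-way split of the per-task samples would force $m\gtrsim Kr\log t$, which is not consistent with the $m\gtrsim r\log t+r^2\log K$ rate you correctly quote; that rate is exactly what task partitioning delivers (the $K$-factor is charged to the total sample complexity $mt$ through each iterate seeing only $t/K$ tasks, and $K$ only appears inside logarithms in the per-task bound). Your alternative suggestion---upgrading the isometry events to be uniform over a shrinking $\Ord(\delta_k)$-ball around $\oU$ by a covering argument to avoid splitting altogether---is not pursued in the paper and would be a genuinely different (and potentially cleaner) way to remove the $\log t$ dependence of $m$; the paper instead attacks that issue with the task-subset-selection variant \altmins in Theorem~\ref{thm:altmin_subset_informal}.
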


\noindent
{\bf Remark 1:} Our error rate is nearly optimal, as it matches best possible rate when $\ov$ is specified a priori. This is made formal in the following lower bound, which follows from \citep[Theorem 5]{tripuraneni2020provable}. The upper and lower bounds match up to polynomial factors in the incoherence $\mu$ and the condition number ${\eigmax^*}/{\eigmin^*}$.
\begin{coro}\citep[Theorem 5]{tripuraneni2020provable}  
	\label{coro:meta_learn_basis_lb_informal}    
	Let $r\leq d/2$ and $mt\geq r(d-r)$, then for all $V^*$, w.p.~$\geq 1/2${
	\small
	\begin{eqnarray*}
		\inf_{\widehat{U}} \sup_{U\in{\rm Gr}_{r,d} }  \frac{\| ( {\mathbf I}-U^*(U^*)^\top ) \widehat{U} \|_F}{\sqrt{r}} \;\geq\; \Omega\Big(  \Big(\frac{\eigmin^*}{\eigmax^*} \frac{\sigma}{\sqrt{\eigmin^*} }\Big) \sqrt{ \frac{d\,r}{m\,t}} \Big) \;,
	\end{eqnarray*}
}where $G_{r,d}$ is the Grassmannian manifold of $r$-dimensional subspaces in ${\mathbb R}^d$, the infimum for $\widehat{U}$ is taken over the set of all measurable functions that takes $mt$ samples in total from the model in Section~\ref{sec:problem} satisfying Assumption~\ref{assume:linear_meta_problem} and \ref{assume:incoherence}. 
\end{coro}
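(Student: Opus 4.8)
The plan is to prove this via Fano's method; this is essentially the argument behind \citep[Theorem 5]{tripuraneni2020provable}, from which the stated bound follows after rescaling $\ov$ so that $(r/t)(\ov)^\top\ov$ has the prescribed extreme eigenvalues $\eigmin^*\le\eigmax^*$ (the subspace estimation error scales inversely with the signal strength, which is what turns $\sigma/\sqrt{\eigmax^*}$ into the prefactor $(\eigmin^*/\eigmax^*)\,\SNRnl$ once one is willing to lose the condition-number factor $\eigmin^*/\eigmax^*$ anticipated in Remark~1). I would keep $\ov$ arbitrary but fixed throughout and build a hard instance family indexed only by $U$, which is all the statement requires.

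First I would construct a \emph{local packing} of $\Gr{r}{d}$. Fix any orthonormal $U_0\in\bR^{d\times r}$ and a small radius $\delta>0$; for $\Delta\in\bR^{d\times r}$ with $U_0^\top\Delta=0$, let $U(\Delta)$ be the Gram--Schmidt orthonormalization of $U_0+\Delta$, so that $U(\Delta)=U_0+\Delta+\Ord(\|\Delta\|^2)$ and hence both $\|U(\Delta_1)-U(\Delta_2)\|_F$ and the subspace distance $\|(\Id-U(\Delta_1)U(\Delta_1)^\top)U(\Delta_2)\|_F$ agree with $\|\Delta_1-\Delta_2\|_F$ up to a $(1+o(1))$ factor. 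Since the admissible $\Delta$'s form an $r(d-r)$-dimensional space, a Varshamov--Gilbert / constant-weight-code argument produces $\{\Delta_k\}_{k=1}^M$ with $M\ge 2^{c\,r(d-r)}$, each $\|\Delta_k\|_F=\delta\sqrt r$, and $\delta\sqrt r/2\le\|\Delta_k-\Delta_l\|_F\le 2\delta\sqrt r$ for $k\ne l$; set $U_k:=U(\Delta_k)$, so the associated subspaces form a packing of $\Gr{r}{d}$ whose mutual sine-distance and diameter are both $\asymp\delta\sqrt r$.

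Next I would control the pairwise KL divergences. With $\ov$ fixed, the law of $\{(\xij,\yij)\}$ under Assumption~\ref{assume:linear_meta_problem} depends on the parameter only through the Gaussian conditional means $\langle\xij,U\ovi\rangle$, so
\begin{equation*}
\mathrm{KL}\!\left(P_{U_k}\,\middle\|\,P_{U_l}\right)=\frac{m}{2\sigma^2}\sum_{i=1}^{t}\big\|(U_k-U_l)\ovi\big\|^2=\frac{m}{2\sigma^2}\big\|(U_k-U_l)(\ov)^\top\big\|_F^2\le\frac{m}{2\sigma^2}\,\|U_k-U_l\|_F^2\,\|\ov\|^2\le\frac{2mt\,\eigmax^*}{\sigma^2}\,\delta^2 ,
\end{equation*}
using $\|\ov\|^2=\eigmax^* t/r$ and $\|U_k-U_l\|_F^2\lesssim r\delta^2$. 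Then Fano's inequality gives, for every estimator $\widehat U$ (which I may take to have orthonormal columns, the separation transferring from the $U_k$'s to $\widehat U$ via the triangle inequality for the Grassmannian sine-distance),
\begin{equation*}
\max_{k}\ \mathbb{P}_{U_k}\!\left[\ \frac1{\sqrt r}\big\|(\Id-U_kU_k^\top)\widehat U\big\|_F\ \ge\ c'\delta\ \right]\ \ge\ 1-\frac{\max_{k\ne l}\mathrm{KL}(P_{U_k}\|P_{U_l})+\log 2}{\log M}\ \ge\ 1-\frac{2mt\,\eigmax^*\delta^2/\sigma^2+\log 2}{c\,r(d-r)} .
\end{equation*}
Choosing $\delta^2\asymp\frac{\sigma^2 r(d-r)}{mt\,\eigmax^*}\asymp\frac{\sigma^2 rd}{mt\,\eigmax^*}$ makes the right side $\ge 1/2$; this $\delta$ is $\Ord(1)$ precisely because $mt\ge r(d-r)$, which is also what keeps the first-order expansions of the previous paragraph legitimate. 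Substituting, the minimax risk in the rescaled metric is at least $c'\delta\asymp\frac{\sigma}{\sqrt{\eigmax^*}}\sqrt{\frac{dr}{mt}}\ge\big(\frac{\eigmin^*}{\eigmax^*}\,\SNRnl\big)\sqrt{\frac{dr}{mt}}$, the last inequality using $\eigmin^*\le\eigmax^*$; this is the asserted bound.

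The hard part will be the packing step: it must simultaneously (i) have cardinality $\exp(\Omega(rd))$, (ii) have diameter comparable to its minimum separation — so that the \emph{worst-case} pairwise KL that Fano consumes is no larger than the typical one — and (iii) be separated in the \emph{subspace} metric defining the loss even though the KL naturally sees the ambient \emph{matrix} distance $\|U_k-U_l\|_F$. The local, exponential-map-style parametrization $U(\Delta)$ is the device that reconciles (ii) and (iii), and it is precisely why the hypothesis $mt\ge r(d-r)$ (equivalently $\delta=\Ord(1)$) is needed.
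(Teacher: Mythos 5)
Your proposal is sound and takes the same route the paper implicitly takes: the paper states the result as a direct import of \citep[Theorem 5]{tripuraneni2020provable}, whose proof is exactly this Fano / local-packing argument; the appendix proof of Theorem~\ref{thm:meta_learn_basis_lb} just says ``similar to Tripuraneni et al.\ but in Frobenius norm'' and omits details, so you are fleshing out precisely the argument they cite. Your KL computation, the local parametrization $U(\Delta)=U_0+\Delta+\Ord(\|\Delta\|^2)$ with $U_0^\top\Delta=0$ (which makes the subspace sine-distance $\|(\Id - U(\Delta_1)U(\Delta_1)^\top)U(\Delta_2)\|_F$ agree with $\|\Delta_1-\Delta_2\|_F$ to first order), and the Gilbert--Varshamov packing in the $r(d-r)$-dimensional tangent space are the standard ingredients and are all used correctly. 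You also correctly note that your argument actually produces the sharper prefactor $\sigma/\sqrt{\eigmax^*}=\sqrt{\eigmin^*/\eigmax^*}\cdot\sigma/\sqrt{\eigmin^*}$, which dominates the stated $(\eigmin^*/\eigmax^*)\cdot\sigma/\sqrt{\eigmin^*}$ since $\eigmin^*\le\eigmax^*$, so the stated corollary follows a fortiori.

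Two small points worth tightening. First, the claim that $mt\ge r(d-r)$ by itself makes $\delta=\Ord(1)$ is not quite right: that assumption only gives $\delta^2\lesssim\sigma^2/\eigmax^*$, which is a constant only when the noise is at most comparable to the signal. For the argument to be airtight you should cap $\delta$ at a small absolute constant; in the regime where the Fano-optimal $\delta$ would exceed that cap, the lower bound is a constant and the statement is anyway trivially true because the rescaled Frobenius loss $\frac1{\sqrt r}\|(\Id-\oU(\oU)^\top)\widehat U\|_F$ is bounded by $1$. Second, you invoke ``the triangle inequality for the Grassmannian sine-distance'' to transfer separation from the packing to $\widehat U$; this is fine, but since $\|(\Id-U_k U_k^\top)U_l\|_F$ is not literally symmetric-looking one should note it equals the usual $\sqrt{\sum_j \sin^2\theta_j}$ chordal metric on $\Gr{r}{d}$ (cf.\ Lemma~\ref{lem:distance-relations}), which is indeed a metric, so Fano's two-point test applies verbatim. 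Neither point changes the substance.
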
	

\noindent
{\bf Remark 2}: 
To the best of our knowledge, Theorem~\ref{thm:altmin_logt_informal} presents the first efficient method for achieving optimal error rate in $\sigma$, $d$ and $r$. 
\citet{tripuraneni2020provable} propose two approaches. %
The first one is the Burer-Monteiro factorization approach, which  
achieves  a rescaled Frobenius norm error bound of $O((\sigma/\sqrt{\eigmin^*}) \sqrt{\max\{t,d\} r^2 \log(mt)/(mt)})$ if the sample size is 
$mt \geq O( \max\{t,d\} r^4 (\eigmax^*/\eigmin^*)^4  {\rm polylog}(mt,d))$ and incoherence is $\mu \leq O( \eigmax^*/\eigmin^* )$.
Several remarks on its sub-optimality are in order: $(a)$ when $t\geq d$ and for $m =  \Theta(\log(t))$, the error does not decrease  as we increase the number of tasks $t$, 
$(b)$ even when $t<d$ the error rate is sub-optimal by a factor of $\sqrt{r}$, and 
$(c)$ each task requires $m\geq O(r^4 {\rm polylog}(mt))$ samples. 
In contrast, error of \altmin decays at a rate of $1/\sqrt{t}$ when $m=\Theta(\log(t))$, 
and this rate is optimal as it matches a lower bound, and each task requires only $m=\Omega(r^2 + r\log(t))$ samples.

The second approach, based on the method-of-moments, achieves a rescaled Frobenius norm  error bound of 
$\widetilde{O}\big(  (\sigma/ \sqrt{\eigmin^*})^2  \sqrt{(\eigmax^*/\eigmin^*)(dr^2/(mt))}    +  \sqrt{\mu d r^2 (\eigmax^*/\eigmin^*)   / (mt)}   \big)$ 
if  $m\geq 2$. 
The first term  is suboptimal by a factor fo $\sqrt{r}$. 
The second term is more problematic as it does not depend on the noise $\sigma$; 
as we decrease $\sigma$, the error does not vanish even if we have enough samples to learn the parameters exactly.  
This is illustrated in the simulation result in Figure~\ref{fig:alt_mom_vs_sigma}.

\noindent{\bf Remark 3}: 
One can study the problem in a stochastic setting where we sample a task $i$ and compute stochastic gradient update for $U$ based only on $i$-th task's samples. In this case, our proof techniques could be combined with that of \citet{jainparallel2018} to obtain a nearly optimal and efficient one-pass algorithm. 
But we leave further investigation into such result for future work. 

\noindent{\bf Remark 4}: Our result holds if the initial point $\iU$ is reasonably accurate. 
One choice of initialization is to use the Method-of-Moments (MoM)~\citep{tripuraneni2020provable}. Due to sub-optimality of MoM approach (Theorem~\ref{thm:svd_mom} in Appendix), %
we get an additional sample complexity requirement of 
$mt \geq \widetilde\Omega ( (\eigmax^*/\eigmin^*) dr^2 \, ( \mu (\eigmax^*/\eigmin^*)+r (\sigma/\sqrt{\eigmin^*})^4 )$.  
Note that this does not degrade the error rate, $O(\sqrt{dr/mt})$. 

\noindent{\bf Remark 5}:
Suppose we run Algorithm~\ref{alg:altmin}, under the conditions of Theorem~\ref{thm:altmin_logt_informal} to get an estimated subspace $U$. Let a new task, whose task specific regressor $v^{*+}$ lie in $\oU$, be introduced with $m^+$ samples. Now, we can apply the step \ref{algo_line:altmin_v_update} of Algorithm~\ref{alg:altmin}, with $U$ and the new samples, to meta-learn an estimate $v^{+}$ of $v^{*+}$. Then by \citet[Theorem 4]{tripuraneni2020provable}, the mean-squared-error (MSE) of the estimated regressor is $\tOrd((\SNRnl) (\mu dr^2/mt + r/m^{+}))$. Therefore, as long as $mt$ was large enough, we only need $m^+ = \Omega(r)$ additional samples to get an arbitrarily small MSE, as opposed to $m^+ = \Omega(d)$ of trivial baseline. We also improve upon other baselines from \citep{tripuraneni2020provable} (see Table~\ref{tab:results_altmin_meta}) in terms of dependence on $\sigma$ and $t$.
\\

\noindent\textbf{\textit{Task subset selection}:}
The downside of our Algorithm~\ref{alg:altmin} is that the requirement on $m$ increase with $t$ (i.e.,~$m=\Omega(\log t)$), which is not natural as the number of required samples per task should not increase as the number of tasks increase. To remove this dependency, we propose a new algorithm (Algorithm~\ref{alg:altselect}) that samples a set of tasks at each iteration to ensure we use only the ``well-behaved" tasks.

\begin{thm}\label{thm:altmin_subset_informal}
	Let  there be $t$ linear regression tasks, each with $m$ samples satisfying Assumptions \ref{assume:linear_meta_problem} and \ref{assume:incoherence}, and
	\begin{align*}
	&m \geq \widetilde\Omega ( (\SNRnl )^2 r^2 \log t + r^2  + \log(\mu)), \; \text{ } t \geq \widetilde\Omega ( \mu^2\,r^2 ), 
	\; \text{ and } \;
	m \t \geq \widetilde{\Omega} ( (1+ (\SNRnl )^2)( {\eigmax^*}/{\eigmin^*}  ) \mu dr^2 ). 
	\end{align*}
	Then \altmins (Algorithm~\ref{alg:altselect}), 
	initialized at $\iU$ s.t. $\|(\Id-\oU(\oU)^\top ) \iU\|_F \leq \min (3/4, \Ord (\sqrt{ {\eigmin^*}/{\eigmax^* }} ),  \Ord (1/\log t ) ) $ and
	run for $K=\lceil \log_2({{\eigmin^*} {\eigmin^*} {m\,t} / {\eigmax^*} {\sigma^2} {\mu\,d\,r^2}}) \rceil$ iterations, outputs $U$ so that, w.p. $\geq 1- K/(dr)^{10}$
	\begin{eqnarray}	
	\frac{\|(\Id - \oU (\oU)^\top)U \|_F}{\sqrt{r}} \; \leq\;  \tOrd \left( \bigg( \SNRn \bigg) \sqrt{\frac{\mu\,r\, d }{m\, t}}  \,\right) \;.
	\end{eqnarray}
\end{thm}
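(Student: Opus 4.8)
The plan is to prove this by iterative refinement, following the template behind Theorem~\ref{thm:altmin_logt_informal}. I would first split the $m$ per-task samples into $K$ disjoint batches, one per iteration, so that the iterate $U_k$ at step $k$ is independent of the fresh batch it is fed, and reduce the claim to a one-step contraction for $\mathrm{dist}(U,\oU):=\frac{1}{\sqrt r}\|(\Id-\oU(\oU)^\top)U\|_F$:
\begin{equation}
\mathrm{dist}(U_{k+1},\oU)\;\le\;\tfrac12\,\mathrm{dist}(U_k,\oU)\;+\;\tOrd\!\left(\SNRn\sqrt{\tfrac{\mu r d}{m t}}\right),
\end{equation}
maintained together with the invariant that $\mathrm{dist}(U_k,\oU)$ never exceeds its initial value. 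Unrolling this recursion for $K=\lceil\log_2(\cdots)\rceil$ steps and summing the geometric series gives the stated bound; the choice of $K$ is exactly what drives the leading $2^{-K}\mathrm{dist}(\iU,\oU)$ term below the noise floor.

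The two places where the proof departs from that of \altmin — and where the improved per-task sample complexity comes from — concern the selection of tasks. I would fix the computable rule: task $i$ is \emph{good} at step $k$ if the local empirical covariance $B_k^{(i)}:=\tfrac{1}{m_k}\sum_j(U_k^\top\xij)(U_k^\top\xij)^\top$ satisfies $\|B_k^{(i)}-\Id\|\le\gamma$ for a small absolute constant $\gamma$, and let $S_k$ be the set of good tasks. Because $B_k^{(i)}$ is an $r\times r$ sample covariance of $m_k$ (conditionally) standard Gaussians, a single-task concentration bound shows each task is good with probability $\ge 1-\beta$ once $m_k\gtrsim r^2+\log(1/\beta)$ — crucially with no $\log t$ factor, since we no longer need all $t$ local designs to be simultaneously well conditioned. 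A Chernoff bound over the independent indicators $\Ind[i\in S_k]$ then gives $|S_k|\ge(1-2\beta)t$ whp, using $t\gtrsim 1/\beta$. The second ingredient is a restricted-diversity lemma: the truncated task-diversity matrix $\tfrac{r}{|S_k|}\sum_{i\in S_k}\ovi(\ovi)^\top$ still has smallest eigenvalue $\Omega(\eigmin^*)$ and largest eigenvalue $\Ord(\eigmax^*)$. Here incoherence (Assumption~\ref{assume:incoherence}) is used sharply: the spectral mass removed by deleting the at most $2\beta t$ bad tasks is $\le 2\beta t\max_i\|\ovi\|^2\le 2\beta t\mu\eigmin^*$, a small fraction of $(t/r)\eigmin^*=\lambda_{\min}((\ov)^\top\ov)$ provided $\beta\lesssim 1/(\mu r)$; requiring $\beta\lesssim 1/(\mu r)$ in turn forces $m_k\gtrsim r^2+\log(\mu r)$ (accounting for the $\log\mu$ term in the statement) and, via the Chernoff step with slack, $t\ge\widetilde\Omega(\mu^2 r^2)$.

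Given these two facts, the per-iterate analysis is the one behind Theorem~\ref{thm:altmin_logt_informal} but carried out over $S_k$ rather than $[t]$: the $V$-update gives $\hvi=\ovi+\mathrm{bias}^{(i)}+\xi^{(i)}$ with $\|\mathrm{bias}^{(i)}\|\lesssim\mathrm{dist}(U_k,\oU)\sqrt{\eigmax^*}$ (using $\|B_k^{(i)}-\Id\|\le\gamma$) and conditionally Gaussian noise $\xi^{(i)}=(B_k^{(i)})^{-1}\tfrac1{m_k}\sum_j\epsij U_k^\top\xij$ of size $\approx\sigma\sqrt{r/m_k}$; the $U$-update, which solves a least-squares problem in $U$ and re-orthonormalizes, then contracts the subspace error, using the restricted-diversity lemma to invert $\sum_{i\in S_k}\hvi(\hvi)^\top$ and matrix Bernstein to bound the noise term $\sum_{i\in S_k}\sum_j\epsij\xij(\hvi)^\top$, which produces the $\sqrt{\mu r d/(mt)}$ factor. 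Two subtleties must be handled. First, the selection is data-driven: conditioning on $\{i\in S_k\}$ distorts the law of $\{\xij\}_j$. This I would resolve by observing that $B_k^{(i)}$ depends only on the in-subspace part $U_k^\top\xij$, so the orthogonal part $(\Id-U_kU_k^\top)\xij$ is untouched by the conditioning, while the conditioned in-subspace part is still $\Ord(1)$-sub-Gaussian because the good event has probability $\ge\tfrac12$; the per-iterate concentration bounds are then re-run with these slightly inflated constants. Second, the noise vectors $\xi^{(i)}$ are \emph{not} culled by the selection, so bounding $\max_{i\in S_k}\|\xi^{(i)}\|$ still costs a union bound over tasks — this is the source of the surviving $(\SNRnl)^2 r^2\log t$ term in the requirement on $m$, and it is also why the initialization must be $\Ord(1/\log t)$-accurate: it keeps $\mathrm{dist}(U_k,\oU)\cdot\mathrm{polylog}(t)$ below a constant, so that the \emph{estimated} regressors $\hvi$ (not merely the true $\ovi$) still form a well-conditioned diversity matrix over $S_k$ throughout the iterations.

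The step I expect to be the main obstacle is the first subtlety above: reconciling a data-adaptive task-selection rule with the fresh-randomness arguments that all the per-iterate concentration bounds implicitly assume. The projection decomposition makes this tractable, but getting clean constants requires carefully tracking how conditioning on the good event inflates the conditional sub-Gaussian norms and second moments that feed into both the $V$- and the $U$-update; everything else — the geometric unrolling, the matrix-Bernstein noise bounds, and the restricted-diversity lemma — is a quantitative adaptation of the corresponding pieces of the proof of Theorem~\ref{thm:altmin_logt_informal}.
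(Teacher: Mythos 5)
Your high-level ingredients — the local-Hessian selection rule, the restricted-diversity lemma showing that discarding at most $O(1/(\mu r))$-fraction of tasks perturbs $\lambda_{\min}((\ov)^\top\ov)$ by only a constant fraction, the projection-decomposition observation that the selection event depends only on $U^\top x_j^{(i)}$ so the orthogonal part $(\Id-UU^\top)x_j^{(i)}$ retains fresh Gaussian randomness, and the explanation of why the $(\SNRnl)^2 r^2\log t$ term and the $O(1/\log t)$ initialization survive (union bound over tasks in the noise and in the $V$-update deviation) — all match the paper's proof (Lemmas~\ref{lem:v_update_rkr_select_subset}, \ref{lem:v_update_rkr_subset}, \ref{lem:v_update_rkr_matrices_subset} and the unrolling in Section~\ref{sec:altmin_subset_pf}). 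The paper even makes your "conditionally $O(1)$-sub-Gaussian" step sharper: after selection, $\sigma_{\max}(U^\top S^{(i)} U)\le 2$ holds \emph{deterministically}, so no conditional-tail inflation argument is needed at all.

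However, there is a genuine gap in the very first step: you propose to refresh randomness by splitting each task's $m$ samples into $K$ disjoint batches, one per iteration. The paper does the opposite: it shuffles the $t$ tasks and partitions them into $K$ groups $\cT_k=[1+t(k-1)/K,\,tk/K]$, using a fresh group of $t/K$ tasks per iteration and only a binary split of the $m$ samples within a task (half for the $V$-step, half for the $U$-step). The distinction is not cosmetic. With your splitting, the per-batch requirement $m/K \gtrsim r^2 + \log(1/\beta)$ becomes $m \gtrsim K\big(r^2 + \log(\mu r)\big)$. Since $K=\lceil\log_2(\eigmin^{*2}mt/(\eigmax^*\sigma^2\mu d r^2))\rceil$ grows like $\log t$ (and in the noiseless Corollary~\ref{coro:altmin_subset_informal}, $K$ is taken arbitrarily large to drive the error to zero), this reintroduces exactly the $m\gtrsim r^2\log t$ dependence that Theorem~\ref{thm:altmin_subset_informal} is designed to eliminate, and makes the corollary outright unachievable. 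The paper instead pays for freshness in the task count: $t\ge\Omega(\mu^2 r^2 K\log(K/\delta))$, which only imposes $t\ge\widetilde\Omega(\mu^2 r^2)$ since $K$ is logarithmic. To fix your argument you should replace the per-task sample batching with task shuffling and $K$-way task partitioning, and add the corresponding lemma (Lemma~\ref{lem:shuffling_rkr_logt} in the paper) showing that each random $t/K$-subset of tasks preserves the eigenvalues of the diversity matrix up to constants via an Azuma-type martingale bound for sampling without replacement.
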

\noindent\textbf{Remark 6}: Note that when $\SNRnl \leq 1/{\log^2 t}$, \altmins only needs $m \geq \Omega(r^2 + \log(\mu))$ samples per task. Since, \altmins selects a fraction of tasks to perform updates, the time-complexity of the method is similar to that of MLLAM. 
\begin{coro}\label{coro:altmin_subset_informal}
	Consider $t$ linear regression tasks, each with $m$ samples satisfying Assumptions \ref{assume:linear_meta_problem} and \ref{assume:incoherence} with $\sigma=0$, and
	\begin{align*}
	&m \geq \widetilde\Omega (r^2 + \log(\mu)), \;
	t \geq \widetilde\Omega ( (\mu r)^2 ), 
	\; \text{ and } \;
	m \t \geq \widetilde{\Omega} (({{\eigmax^*}/{\eigmin^*}}) \mu d r^2). 
	\end{align*}
	Then \altmins (Algorithm~\ref{alg:altselect}), initialized at $\iU$ s.t. $\|(\Id-\oU(\oU)^\top ) \iU\|_F \leq\min (3/4, \Ord (\sqrt{ {\eigmin^*}/{\eigmax^* }} ),  \Ord (1/\log t ) )$, and run for  $K$ iterations outputs $U$ so that the following holds (w.p. $\geq 1- K/(dr)^{10}$): 
	\begin{eqnarray}	
	\frac{\|(\Id - \oU (\oU)^\top)U \|_F}{\sqrt{r}} \; \leq\;  \tOrd \left( \frac{\sqrt{{\eigmin^*}/{\eigmin^*}}}{\sqrt{r} 2^{K}} \,\right) \;.
	\end{eqnarray}
\end{coro}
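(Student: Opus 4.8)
The plan is to obtain Corollary~\ref{coro:altmin_subset_informal} directly from the analysis behind Theorem~\ref{thm:altmin_subset_informal} by specializing to the noiseless case $\sigma=0$, but reading off the \emph{per-iteration} contraction established inside that proof rather than the telescoped final bound (which degenerates to $0$ when $\sigma=0$ and therefore carries no information).

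First I would check that the corollary's hypotheses are exactly the $\sigma=0$ instance of the theorem's hypotheses. Since $\SNRnl=\sigma/\sqrt{\eigmin^*}=0$, the per-task bound $m\geq\widetilde\Omega((\SNRnl)^2 r^2\log t + r^2 + \log\mu)$ collapses to $m\geq\widetilde\Omega(r^2+\log\mu)$, the task-count bound $t\geq\widetilde\Omega(\mu^2 r^2)$ is unchanged, and the total-sample bound $mt\geq\widetilde\Omega((1+(\SNRnl)^2)(\eigmax^*/\eigmin^*)\mu dr^2)$ collapses to $mt\geq\widetilde\Omega((\eigmax^*/\eigmin^*)\mu dr^2)$; the initialization radius for $\iU$ is identical. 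Consequently every high-probability event invoked in the proof of Theorem~\ref{thm:altmin_subset_informal} — correctness of the task-subset-selection step, the (restricted) well-conditionedness of the per-task design matrices used in the $v^{(i)}$ and $U$ updates, and preservation of incoherence along the iterates — holds here with the same union-bounded failure probability $K/(dr)^{10}$.

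Second, the heart of the argument is to extract from that proof the per-iterate recursion it establishes on the good event, which has the shape $\|(\Id-\oU(\oU)^\top)U_{k+1}\|_F \leq \tfrac12\,\|(\Id-\oU(\oU)^\top)U_k\|_F + (\text{additive term proportional to }\sigma)$, where the additive term is the irreducible noise floor and, in the proof of the theorem, $K=\lceil\log_2(\cdots)\rceil$ is chosen precisely so that $2^{-K}\|(\Id-\oU(\oU)^\top)\iU\|_F$ matches that floor. Setting $\sigma=0$ annihilates the additive term, leaving the clean contraction $\|(\Id-\oU(\oU)^\top)U_{k+1}\|_F\leq\tfrac12\|(\Id-\oU(\oU)^\top)U_k\|_F$. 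Unrolling over $K$ steps (with $K$ now a free parameter, since there is no floor to balance against) gives $\|(\Id-\oU(\oU)^\top)U_K\|_F\leq 2^{-K}\|(\Id-\oU(\oU)^\top)\iU\|_F$; substituting the initialization bound $\|(\Id-\oU(\oU)^\top)\iU\|_F\leq\min(3/4,\ \Ord(\sqrt{\eigmin^*/\eigmax^*}),\ \Ord(1/\log t))$ and dividing by $\sqrt r$ yields the claimed geometric-decay bound.

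The main obstacle is not in the corollary itself — once the per-iterate recursion of Theorem~\ref{thm:altmin_subset_informal} is available as an intermediate statement, the corollary is bookkeeping — but rather in cleanly isolating the contraction constant $1/2$ from the $\sigma$-dependent term in that recursion, so that no $\sigma$-independent error slips through. In particular one must verify that the error of the intermediate regressors $v^{(i)}$ produced by the $v$-update is controlled purely by the current subspace error (and hence vanishes as it does) when $\sigma=0$, and that the task-selection randomness contributes no additive slack. I would therefore prove this corollary as a byproduct of Theorem~\ref{thm:altmin_subset_informal}: state the per-iterate recursion as a lemma, then instantiate it with the $\sigma$-dependent stopping rule for the theorem and, separately, with $\sigma=0$ and arbitrary $K$ for the corollary.
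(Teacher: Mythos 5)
Your proposal is correct and matches the route the paper implicitly takes: the paper provides no standalone proof of this corollary, but the detailed proof of Theorem~\ref{thm:altmin_subset} (Appendix~\ref{sec:altmin_subset_pf}, reusing the per-iterate machinery from Appendix~\ref{sec:altmin_logt_pf}) already produces the unrolled recursion $\|(\Id-\oU(\oU)^\top)U_K\|_F\leq 2^{-K}\|(\Id-\oU(\oU)^\top)\iU\|_F + \Ord\big(\text{term}\propto\SNRre\big)$, whose additive term vanishes at $\sigma=0$, leaving exactly the claimed geometric decay for arbitrary $K$. One small imprecision: you describe the "telescoped final bound" as degenerating to $0$; in fact what degenerates is only the theorem's headline conclusion (after substituting the noise-balancing choice of $K$), whereas the intermediate unrolled recursion is exactly the object you need and does not degenerate — but you clearly use the right object, so this is a wording issue only.
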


\noindent\textbf{Remark 7}: 
In the above corollary, for the noiseless setting $\sigma=0$, the number of samples per task does not grow with $t$, and thus it is nearly optimal. Note that the desired initialization point can be obtained using MoM.
We leave the extension to noisy setting for future work.

Proofs of Theorems~\ref{thm:altmin_logt_informal} \& \ref{thm:altmin_subset_informal} are in the Appendices~\ref{sec:altmin_logt_pf} \& \ref{sec:altmin_subset_pf}.

	\section{Alternating minimization}
\label{sec:alg}
In this section we discuss the alternating minimization algorithm we study in this paper. The algorithm follows the standard alternating minimization procedure \cite{jain2013low,csiszar} where we update the representation matrix $U$ and regressors $V$ alternately. Note that, given $U$, we can estimate each of the regressor $\vi$ {\em separately} using standard least squares regression, i.e., 
\begin{align*}
\vi=\arg\min_v \sum_{j} (\yij-\langle\xij, U v\rangle)^2\,.
\end{align*}
Similarly, given the updated regressors $\vi$'s, we can now update $U$ as: 
\begin{align*}
\hU=\arg\min_{\hU} \sum_{i,j}(\yij-\langle\xij, \hU \vi\rangle)^2\,.
\end{align*}
To ensure certain normalization, we analyze a modification of the algorithm where the next iterate for $U$ is the orthonormal subspace containing $\hU$, which we can obtain using the QR-decomposition of $\hU$. 

Our analysis requires that when we update $V$ using current $U$, we require $U$ to be independent from the training datapoints. Similarly, during the update for $U$, we require $V$ to be independent of the datapoints. We ensure the independence using two strategies: a) similar to standard online meta-learning settings \cite{finn2017model}, we select random (previously unseen) tasks and update $U$ and $V$, b) within each task, we divide the datapoints into two sets to update $V$ and $U$ separately. 

Our update for $\vi$ require $O(mr^2+r^3)$ time complexity, which can be brought down to $O( m\cdot r)$ by using gradient descent for solving the least squares. Our analysis shows that under the sample complexity assumptions of Theorem~\ref{thm:altmin_logt_informal}, each of the least squares problem has a constant condition number. So, the total number of iterations scale as $\log \frac{1}{\epsilon}$ to achieve $\epsilon$ error. If we set $\epsilon=1/poly(t,\sigma)$, then using standard error analysis, we should be able to obtain the optimal error rate in Theorem~\ref{thm:altmin_logt}. Similarly, {\em exact} update for $U$ requires $O((dr)^3+mt \cdot (dr)^2)$ time, that decreases to $O(mt\cdot d\cdot r)$ by using gradient descent updates. %

\begin{algorithm}[t!]
	\SetAlgoLined
	\DontPrintSemicolon
	\SetKwProg{myproc}{Procedure}{}{}
	{\bf Required}: Data: $\{(\xij \in \bR^d, \yij \in \bR)\}_{j=1}^m$ for all $1\leq i\leq t$, $K$: number of steps.\\
	\nl Initialize $U \leftarrow \iU$ \\
	\nl Randomly shuffle the tasks \{1,\ldots,t\}\\
	\For{$1\leq k\leq K$}{ 
		\nl $ {\cal T}_{k} \gets [1 + \frac{t(k-1)}{K},  \frac{tk}{K} ]   $\\
		\For{$  i \in {\cal T}_{k}  $}{
			\nl $\displaystyle \vi \gets \arg\min_{\widehat{v} \in {\mathbb R}^{r}}   \sum_{j\in[m/2]}  \left(\yij - \big\langle  U \widehat{v} , \xij \big\rangle \right)^2 $ \label{algo_line:altmin_v_update}\\
		}
		\nl $ \displaystyle \hU \leftarrow  \arg\min_{\widehat{U}\in {\mathbb R}^{d\times r}}  \sum_{i={\cal T}_k } \sum_{j=1+\frac{m}{2} }^{m}  \left(\yij - \big\langle  \widehat{U} \vi , \xij \big\rangle \right)^2 $\\
		\nl $ U \gets \mathrm{QR}(\hU)$
	}
	\Return U
	\caption{MLLAM: Meta-Learning Linear regressors via Alternating Minimization}
	\label{alg:altmin}
\end{algorithm}

\subsection{Subset Selection}
Algorithm~\ref{alg:altmin} computes regressors $\vi$ for each of the task and use that to update $U$. Now, the Hessian for $\vi$ is given by: $H^{(i)}=\frac{1}{m}U^\top \sum_j \xij(\xij)^\top U$. For sub-Gaussian $\xij$'s, $\|H^{(i)}-I\|\leq \sqrt{{r}/{m}}\sqrt{\log 1/\delta}$ with probability $1-\delta$. This implies that if, $m$ is independent of $t$, and if $t\gg m$ then the Hessian of some of the tasks can be highly ill-conditioned, leading to large estimation error in some of the regressors, which in turn leads to a large error in estimation of $U$. In Theorem~\ref{thm:altmin_logt_informal}, we avoid this issue by selecting $m$ such that it grows logarithmically with $t$. 

However, intuitively the number of required samples for each task should not increase with the number of tasks, especially in noise-less settings, where $t\geq \widetilde{\Omega}(d)$ should be enough to ensure exact recovery of $U$. Practically, also a few poor tasks should not affect representation of the data significantly. So, in Algorithm~\ref{alg:altselect}, we propose a method to ignore the poor ill-conditioned tasks. To ensure this, we compute Hessian  $H^{(i)}$ for each task, and ignore tasks whose Hessian's eigenvalue is small (see Line~\ref{algo_line:altmin_subset_select} in Algorithm~\ref{alg:altselect}). As mentioned in Theorem~\ref{thm:altmin_subset_informal}, while we condition on a task being {\em good}, we are still able to provide a similar result as Theorem~\ref{thm:altmin_logt_informal} but with $m$ which is independent of $t$ in low-noise settings, e.g., when $\sigma\leq 1/\log t$. 
\begin{algorithm}[t!]
	\SetAlgoLined
	\setcounter{AlgoLine}{0}
	\DontPrintSemicolon
	\SetKwProg{myproc}{Procedure}{}{}
	{\bf Required}: Data: $\{(\xij \in \bR^d, \yij \in \bR)\}_{j=1}^m$ for all $1\leq i\leq t$, $K$: number of steps.\\
	\nl Initialize $U\gets \iU$ \\
	\nl Randomly shuffle the tasks \{1,\ldots,t\}\\
	\For{$ 1 \leq i \leq t$}{
		\nl $\Si\gets \frac{2}{m} \sum_{j\in[m/2]} \xij (\xij)^\top$
	}
	\For{$1\leq k\leq K$}{
		\nl $ {\cal T}_{k} \gets \big\{  i\in [1 + \frac{t(k-1)}{K},  \frac{tk}{K} ] \;\big|\;\; \sigma_{\rm max}(U^\top \Si U)\leq 10;\ \sigma_{\rm min}(U^\top \Si U)  \geq \frac12  \big\}$ \label{algo_line:altmin_subset_select}\\ \\
		\For{$ i \in {\cal T}_k$}{
			\nl $\displaystyle \vi \gets \arg\min_{\widehat{v} \in {\mathbb R}^{r}}   \sum_{j\in[m/2]}  \left(\yij - \big\langle  U \widehat{v} , \xij \big\rangle \right)^2 $\\
		}
		\nl $ \displaystyle \hU \leftarrow  \arg\min_{\widehat{U}\in {\mathbb R}^{d\times r}}  \sum_{i={\cal T}_k } \sum_{j=1+\frac{m}{2} }^{m}  \left(\yij - \big\langle  \widehat{U} \vi , \xij \big\rangle \right)^2 $\\
		\nl $ U \gets \mathrm{QR}(\hU)$
	}
	\Return U
	\caption{MLLAMS: Meta-Learning Linear regressors via Alternating Minimization over task Subsets}
	\label{alg:altselect}
\end{algorithm}

\section{Proof sketch for noiseless case} 
\label{sec:sketch}

Here we provide proof sketches of Theorem~\ref{thm:altmin_logt_informal}. 
To highlight the main ideas behind our analysis, we start with 
the simplest case when there is no noise ($\sigma^2=0$) and 
all the regressors lie on a single dimensional subspace ($r=1$). 
The analysis gets quite challenging as we go to multi-dimensional shared subspace ($r>1$), 
and we illustrate these challenges and how to resolve them in Section~\ref{sec:sketch_r}.

\subsection{Proof sketch for the one-dimensional case}
Let $\ou \in \bR^d$ be the unit vector of the one-dimensional  true subspace, and $v^* \in \bR^t$ the vector of the true regressor coefficients of the $t$ tasks. In the noiseless setting ($\epsij = 0$), the $k$-th step of \altmin 
can be written as follows. 
{\small
\begin{align*}
&\text{For all $i\in \cT_k$ } \nonumber \\
&\;\;\;\;\; \vi \leftarrow  ({u^\top \Si_{1} u })^{-1} {u^\top \Si_{1} (\ou)} \ovi \;,  \\
&\hu \leftarrow   \Big( \sum_{i \in \cT_k} (\vi)^2 \Si_{2} \Big)^\inv\Big( \,   \sum_{i\in \cT_k}  \ovi \vi \Si_{2} \ou \,\Big) \;,\ \pu  \leftarrow  \frac{\hu}{\|\hu\|},%
\end{align*}
}where $\Si_{\ell} = \frac{2}{m} \sum_{j=(\ell-1)m/2 + 1}^{\ell m/2} \xij (\xij)^\top$ is the data covariance matrix of a half of the dataset $[m]$ of task $i \in [t]$. Our incoherence condition for rank-$1$ case simplifies to $\|v\|_\infty^2 \leq \frac{\mu}{t} \|v\|^2$.
The distance between two unit norm vectors $\u$ and $\ou$ is commonly measured 
by the angular distance defined as $\sin\theta(\u, \ou) \triangleq \|(\Id - \ou (\ou)^\top) u\|^{1/2}$, where $\Id - \ou (\ou)^\top$ is the projection operator to the sub-space orthogonal to $\ou$. 
In the following we let  $q \triangleq \Ip{\ou}{\u}$ and use the relation 
 $\sin\theta(u, \ou) = \|u - \ou q\|$ in the analysis.  
 We use the fact that   if we have a good previous iterate $u$ close to $\ou$, i.e.~$\sin\theta(\u, \ou) \leq 3/4$, then $1/2 \leq |q| \leq 1$. 

Our analysis shows that 
we get geometrically closer to the true subspace $\ou$ at every iteration in this $\sin\theta$ distance, 
when initialized sufficiently close to $\ou$. 

Our strategy is to show that the $v$-update achieves $|\vi - q^{-1} \ovi| \leq C  \|\ovi\|\sin\theta(\u, \ou) $ for some constant $C$, and the $u$-update achieves  $\sin\theta(\pu, \ou) \leq   (c/\|v^*\|) \|v - q^{-1} v^*\| )$  where the constant $c$ can be made as small as we want 
in the assumed sample regime. Together, they imply the desired theorem. 
\\

\noindent\textbf{$v$-update:} 
We can write $\vi q^{-1} - \ovi$ as
\begin{align*}
&\vi - q^{-1} \ovi %
= {u^\top \Si_{1} (q \ou - u)}{(u^\top \Si_{1} u )^{-1}} q^{-1} \ovi \,.
\end{align*}
In expectation, $\|\bE[u^\top \Si_{1} (q \ou - \u)]\| = \|\u^\top (q \ou - \u)\| = 1-q^2 \leq (\sin\theta(u, \ou))^2$ and $\bE[\u^\top \Si_{1} \u] = \|\u\|^2= 1$. Therefore, by Lemma~\ref{lem:v_update_rkr_logt}, if $\sin\theta(u, \ou) \leq \frac1{32}$ and there is enough samples per task, i.e.~$m \geq \Omega(\log({t}/{K\,\delta}))$, we can bound their deviations in terms of $\sin \theta(u, \ou)$. 
This implies that, 
with a probability of at least $1 - \delta/2$, %
\begin{align}
\frac{|\vi - q^{-1} \ovi |}{|\ovi |} &\leq \frac{\sin\theta (u, \ou)}4  \text{\,,  for all } i \in \cT_k,
\label{eq:pf_sketch_rk1_v_update}
\end{align}
where we used the fact that $|q|\geq 1/2$. 
This in turn implies that $ (1/4) |\ovi| \leq |\vi|$ %
and $v$ is incoherent.
\\

\noindent\textbf{$u$-update:}  %
We bound the distance between $\hu$ and $\ou$:
\begin{align}
&\hu - \ou q \nonumber \\
&=  \Big( \underbrace{\sum_{i \in \cT_k } \frac{(\vi)^2}{\|v\|^2} \Si_{2} \Big)^\inv}_{:=A} \Big( \,   \underbrace{\sum_{i\in \cT_k }  \frac{\vi \dvi}{\|v\|^2} \Si_{2}}_{:= \widehat{H}} \ou q  \,\Big), 
\end{align}
where $\dvi = q^{-1} \ovi - \vi$.
Notice that, in expectation, $\bE[A] = \Id$ and $\bE[\widehat{H} \ou q] = \frac{v^\top h}{\|v\|^2} \ou q \leq \frac{\|h\|}{\|v\|}$. Therefore, by Lemma~\ref{lem:u_update_rkr_logt}, when there are enough samples, i.e.~${mt} \geq K\Omega(\mu d \log(\frac1\delta))$ deviations form these expected values can be bounded using the distance between $v$ and $v^*$, $\|h\|$.
That is with a probability of at least $1 - \frac\delta2$, $A$ is invertible and well-conditioned, 
$$A^\invert= \Id + E_1 \text{, \;\;and\;\; } H \ou q =  \frac{v^\top h}{\|v\|^2} \ou q + e_2,$$ where $\|E_1\| \leq \frac1{16}$ and $\|e_2\| \leq \frac1{32} \Big( \frac{\|h\|}{\|v\|} + \sqrt{\frac{t}{\mu}} \frac{\|h\|_\infty}{\|v\|} \Big)$. Note that we had to critically use incoherence of intermediate $v$ to bound $e_2$.
Therefore
\begin{align*}
\hu - \ou q = \underbrace{\frac{v^\top h}{\|v\|^2} \ou q}_{:= \hu_{\parallel}} + \underbrace{q \frac{v^\top h}{\|v\|^2} E_1 \ou + (\Id+E_1) e_2}_{:=f}\,.
\end{align*}
Notice that $\hu_{\parallel}$ is parallel to $\ou$. Rest of the terms are grouped together as $f$. The angle distance $\sin(\pu, \ou)$ only depends on the portion of $\pu$ which lie in the orthogonal subspace to $\ou$. Therefore, $\|\hu_{\parallel}\|$ does not directly contribute to the distance, and this is formalized below. Clearly, $\|(\Id - \ou (\ou)^\top ) \pu \| = \min_{q^+} \|\pu - \ou q^+\| $. This follows from the trivial solution of the scalar quadratic problem $\min_{q^+ \in \bR} \|u - \ou q^+\|^2$. Thus,
\begin{align}
\sin\theta(\pu, \ou) %
&= \min_{q+} \|\pu - \ou q^+\| \nonumber \\
&\leq \Big\|\frac{\hu}{\|\hu\|} - \Big(1+ \frac{h^\top v}{\|v\|^2}\Big)\ou \frac{q}{\|\hu\|} \Big\| \nonumber \\
&\leq \frac{\|f\|}{ \|\hu\|} \leq \frac{\|f\|}{q\|\ou\| - \|f\| - {\|h\|}/{\|v\|}}\,. \label{eq:pf_sketch_rk1_u_update}
\end{align}
\\

\noindent{\bf Putting them together:}
We bound $f$ using definitions of $E_1$ and $e_2$, incoherence, and \eqref{eq:pf_sketch_rk1_v_update} as
\begin{align*}
\|f\| &\leq \frac1{16} \frac{\|h\|}{\|v\|} + \frac1{32} \Big( \frac{\|h\|}{\|v\|} + \sqrt{\frac{t}{\mu}} \frac{\|h\|_\infty}{\|v\|} \Big) \leq \frac18 \sin\theta (u, \ou)\,.
\end{align*}
Combining this with \eqref{eq:pf_sketch_rk1_u_update}, we see that with a probability of at least $1 - \delta$, the angle distance geometrically decreases at each step, i.e.~
\begin{align}
\sin\theta(\pu, \ou) &\leq \frac12 \sin \theta (u, \ou).
\end{align}
Finally, if the initialization is good, i.e.~$\sin\theta(u_{\mathrm{init}}, \ou)  \leq \frac1{16}$, we can unroll the above inequality across iterations. Taking union bound over the iterations we get that, with a probability of at least $1 - K \delta$, the output $u$ after $K$ iterations satisfies
\begin{align}
\sin\theta(u, \ou) &\leq \frac1{2^K} \sin\theta(u_{\mathrm{init}}, \ou).
\end{align}
To achieve this, we need at least $m \geq \Omega(\log(\frac{t}{K\delta}))$ samples per task and at least $mt \geq \Omega(K \mu d \log(\frac1\delta))$ total samples.

\subsection{Proof sketch for the $r$-dimensional case}
\label{sec:sketch_r} 
Here we do not use $\sin \theta_1(U, \ou)$ distance, as the analysis of $\sin \theta_1$ gets more complicated in the general $r$-dimensional case. Therefore we use $\ell$-$2$ norm based error, $\Delta(U, \oU) := (\sum_{r'=1}^r \sin^2 \theta_{r'}(U, \oU))^{1/2} := \|(\Id - \oU (\oU)^\top) U\|_F$. Let $Q = (\oU)^\top U$, then $\Delta(U, \oU) = \|U - \oU Q\|_F$, and $1/2 \leq \|Q\| \leq 1$ if $\Delta(U, \oU) \leq 3/4$.
\begin{align*}
&\text{For all $i\in \cT_k$ } \nonumber \\
&\;\;\;\;\; \vi \leftarrow  (U^\top \Si_{1} U)^\inv U^\top \Si_{1} \oU \ovi \;, \\
&\hU \leftarrow   \Big(\cA^\inv\Big( \sum_{i\in \cT_k}  \Si_{2} \oU \ovi (\vi W^{-\frac12})^\top  \,\Big)\Big) W^{-\frac12} \;, \\
&U  \leftarrow  \mathrm{QR}(\hU)\;,
\end{align*}
where $W = V^\top V$, $\cA:\bR^{d \times r} \to \bR^{d \times r}$ is linear operator such that $\cA(U) = \sum_{i \in \cT_k} \Si_{2} U W^{-\frac12}\vi (\vi)^\top W^{-\frac12}$, and $\Si_{\ell}$ are defined as in the one-dimensional case.
\\

\noindent\textbf{$V$-update:} We will prove that $\|\vi - Q^{-1} \ovi\| = \Ord(\Delta(U, \oU))$. Let $\dvi := \vi Q^{-1} - \ovi$, then
\begin{align*}
\dvi 
= (U^\top \Si_{1} U )^\inv \underbrace{U^\top \Si_{1} (\oU Q - U) Q^\inv \ovi}_{:=G}.
\end{align*}
Notice that, in expectation, $\|\bE[U^\top \Si_{1} U]\| = 1$ and $\|\bE[G]\| = \|U^\top  (\oU Q - U)\| = \|Q^\top Q - \Id \| = \Delta^2(U, \oU)$. Therefore, by Lemma~\ref{lem:v_update_rkr_logt}, if $\Delta^2(U, \oU) \leq \frac1{32}$ and there is enough samples per task, i.e.~$m \geq \Omega(r\,\log(\frac{t}{K\,\delta}))$, we can bound their deviations in terms of $\sin \theta(u, \ou)$. 
This implies that, 
with a probability of at least $1 - \delta/2$,
\begin{align}
\|\dvi \| &\leq \frac{\|\ovi \| \Delta^2(U, \oU)}4 \text{\,, for all } i \in \cT_k.
\label{eq:pf_sketch_rkr_v_update}
\end{align}
Furthermore, $\|\vi\| \leq 4\|\ovi\|$ and $V$ is incoherent.
\\

\noindent\textbf{$U$-update:} 
We bound the distance between $\hU$ and $\oU$:
\begin{align*}
(\hU - \oU Q) W^{\frac12}
&=  \cA^\inv \Big( \,   \underbrace{\sum_{i\in \cT_k } \Si_{2} \oU Q \dvi (\vi)^\top W^{-\frac12} }_{:= -\hcH(\oU Q)}  \,\Big).
\end{align*}
Notice that, in expectation, $\bE[\hcH (\oU Q)] = \cH (\oU Q) := \oU Q \sum_{i\in \cT_k } \dvi (\vi)^\top W^{-\frac12}$ and $\cH (\oU Q) \leq \|H\|_F$ and $\bE[\cA]$ is the identity map $\cI$. Like in the $1$-dimensional case, by Lemma~\ref{lem:u_update_rkr_logt}, when there are enough samples, i.e.~${mt} \geq K\Omega(\mu d r^2 \log(\frac1\delta))$ deviations from these expected values can be bounded using the distance between $\v$ and $\ov$, $\|H\|$. %
That is, with a probability of at least $1 - \delta/2$, $\cA$ is invertible and well-conditioned in Frobenius operator norm, 
$$\cA^\invert= \cI + \cE_1 \text{, \;\;and\;\; } \hcH(\oU Q) =  \cH(\oU Q) - E_2,$$ where $\|\cE_1\|_F \leq 1/{16}$ and $\|E_2\|_F \leq 1/{32} ({\|H\|_F}+ \sqrt{{t}/{\mu}} {\|H\|_\inftyone} )$. Note that we had to critically use incoherence of intermediate $V$ to bound $E_2$.
Therefore,{\small
\begin{align*}
(\hU - \oU Q)W^{\frac12} = -\cH(\oU Q) - \underbrace{cE_1  \cH(\oU Q) + (\cI+\cE_1) E_2}_{:=F}\,.
\end{align*}}
Now, using similar arguments as in the one-dimensional case, we get
\begin{align*}
\Delta(&\pU, \oU) %
\leq \Big\|\hU R^{-1} - \oU Q + \cH(\oU Q) \Big\|_F \|W^{-\frac12}\| \nonumber \\
&\leq \frac{\|F\|_F}{\|R^{-1}\|} \leq \frac{\|F\|_F \lambda_r^{-\frac12}}{\|Q \oU\| - (\|F\|_F + \|H \|_F) \lambda_r^{-\frac12}}\,. \label{eq:pf_sketch_rkr_u_update}
\end{align*}
\\

\noindent{\bf Putting them together:}
Using similar arguments as in one-dimensional case, 
if the initialization is good, i.e.~$\Delta (\iU, \oU)  \leq 1/{16}$, 
we can show that
with a probability of at least $1 - \delta$, the next iterate $\pU$ satisfies: 
$\Delta (\pU, \oU) \leq \frac12 \Delta (U, \oU)\,.$ 
To achieve this, we need at least $\Omega(r\log(\frac{t}{K\delta}))$ samples per task ($m$) and at least $\Omega(K \mu d r^2 \log(\frac1\delta))$ total samples ($mt$). Result now follows by applying the above result $K$ times.

	\begin{figure*}[t!]
	\centering
	\begin{subfigure}[t]{0.3\linewidth}
		\centering
		\includegraphics[width=0.9\textwidth]{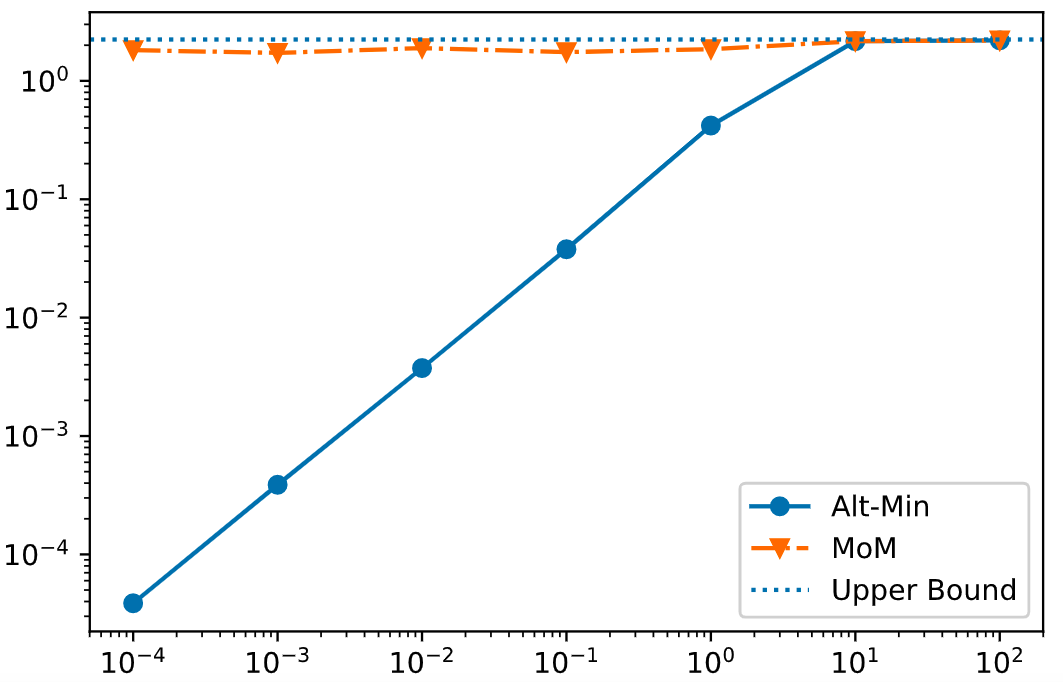}
		\put(-95,-6){\tiny {Noise magnitude {($\sigma$)}}}
		\put(-140, 10){\scalebox{.6}{\rotatebox{90}{{Error: $\|(\Id - \oU (\oU)^\top ) U\|_F$}}}}\vspace*{-5pt}
		\caption{}
		\label{fig:alt_mom_vs_sigma}
	\end{subfigure}\hspace*{10pt}
	\begin{subfigure}[t]{0.3\linewidth}
		\centering
		\includegraphics[width=0.9\textwidth]{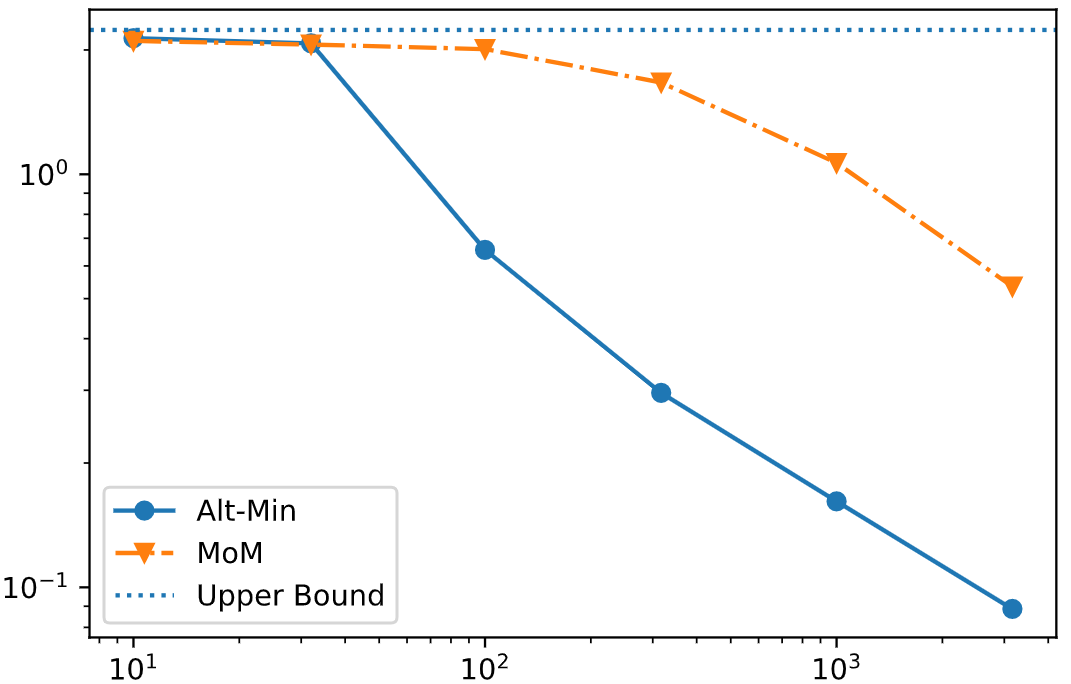}
		\put(-90,-6){\tiny {Number of tasks {($t$)}}}
		\put(-140, 10){\scalebox{.6}{\rotatebox{90}{{Error: $\|(\Id - \oU (\oU)^\top ) U\|_F$}}}}\vspace*{-5pt}
		\caption{}
		\label{fig:alt_mom_vs_t}
	\end{subfigure}%
	\hspace*{10pt}
	\begin{subfigure}[t]{0.3\linewidth}
		\centering
		\includegraphics[width=0.9\textwidth]{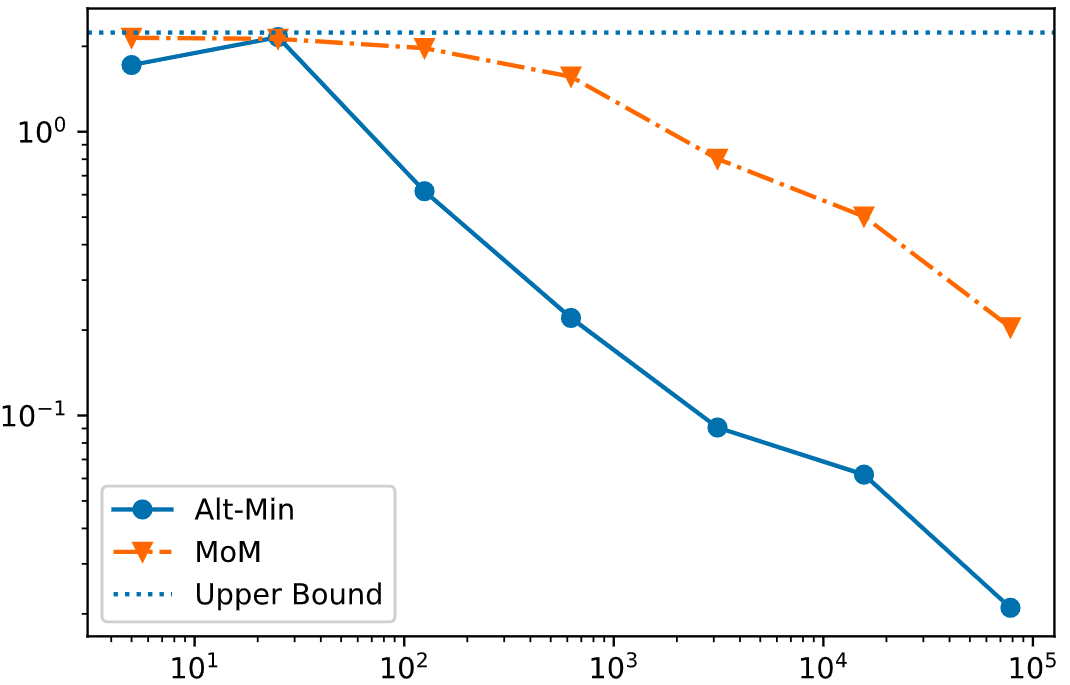}
		\put(-100,-6){\tiny {Number of samples per tasks {($m$)}}}
		\put(-140, 10){\scalebox{.6}{\rotatebox{90}{{Error: $\|(\Id - \oU (\oU)^\top ) U\|_F$}}}}\vspace*{-5pt}
		\caption{}
		\label{fig:alt_mom_vs_m}
	\end{subfigure}\vspace*{-8pt}%
	\caption{
		(a): \altmin achieves vanishing error as noise decreases, whereas  the error with Method-of-Moments stay bounded away from zero.
		(b), (c): \altmin incurs significantly smaller error in estimation of true subspace $\oU$ than MoM, both for growing number of tasks ($t$)  and for growing number of samples per task ($m$). %
	}
	\label{fig:alt_mom_vs_mt}
\end{figure*}

\section{Experimental results}
\label{sec:expts}

In this section we empirically compare the performance of \altmin (Alt-Min, Algorithm~\ref{alg:altmin})  against Method-of-Moments (MoM)~\cite{tripuraneni2020provable}. We generate data samples with dimension $d=100$ and generate random subspace $\oU$ of rank $r=5$. We sample the task regressor coefficients as ~$\vi \sim \cN(0, \Id)$. In all our experiments, we initialize \altmin uniformly at random and run it for $K=20$ iterations. In all the figures, the blue straight line with circular marker denotes the \altmin algorithm, the orange dashed and dotted line with inverted triangular marker denotes the MoM, and  the blue dotted line parallel to x-axis represents the theoretical upper-limit $\sqrt{r}$ of the Frobenius norm distance, $\|(\Id - \oU (\oU)^\top) U\|_F$. In all the figures we use log-scaled x and y axes.

Figure~\ref{fig:alt_mom_vs_sigma} plots subspace estimation  error ($\|(\Id - \oU (\oU)^\top) U\|_F$) against the standard deviation $\sigma$ of the regression noise, $\epsij \sim \cN(0, \sigma^2)$; see \eqref{eq:truedata}. We vary $\sigma$ from $10^{-4}$ to $10^{2}$, while fixing the number of tasks at $t=200$ and the number of samples per task at $m=25$. Clearly, as predicted by Theorem~\ref{thm:altmin_logt_informal}, our \altmin (Alt-Min) algorithm achieves a smaller error than MoM over all values of $\sigma$. Error of \altmin is linearly proportional to $\sigma$. As predicted by Theorem~\ref{thm:svd_mom} (in Appendix), the distance of MoM is a constant multiple of $\sqrt{\frac{dr^3}{mt}} = \sqrt{r}$ for all values of $\sigma$, and it does not improve when $\sigma$ decreases.

Figure~\ref{fig:alt_mom_vs_t} plots the subspace error against the number of tasks $t$. We vary $t$ from $10$ to $3163$, while the number of samples per task is fixed at $m=25$ and $\sigma=1$. In Figure~\ref{fig:alt_mom_vs_m}, we plot the the error against the number samples per tasks $m$. We vary $m$ from $5$ to $78125$, while fixing the number of tasks at $t=20$ and the standard deviation of the regression noise at $\sigma=1$. In both of these figures, we observe that, \altmin (Alt-Min) achieves much smaller subspace error than the MoM. Furthermore, as predicted by Theorems~\ref{thm:altmin_logt_informal} and~\ref{thm:svd_mom} (in Appendix), the squared error rate for both \altmin and MoM decreases linearly $m$ and $t$. %

Note that even though we randomly initialize our \altmin algorithm, it still performs better than the baseline MoM. Similar observations have been made for other non-convex algorithms for solving low-rank problems~\cite{chen2019gradient}. This suggests that the initialization requirement of Theorem~\ref{thm:altmin_logt_informal}, $\|(\Id - \oU (\oU)^\top)U\|_F \leq \Ord(\sqrt{\eigmin^*/\eigmax^*})$ may be an artifact of the analysis or may be practically insignificant.

	\section{Conclusion}
\label{sec:conclusion}
In this paper, we analyzed an alternating minimization method for the problem of linear meta-learning, a simple but canonical problem in meta-learning. We showed that Algorithm~\ref{alg:altmin} that alternately learns the shared representation matrix across tasks and the task-specific regressors, can provide nearly optimal error rate along with nearly optimal per-task and overall sample complexities. To the best of our knowledge, we provide the first result with optimal error rate --- that scales appropriately with the noise in observations --- while still ensuring per-task sample complexity to be nearly independent of $d$ (the dimensionality of data), which is a key requirement in meta-learning as individual tasks are data-starved.  We also proposed and analyzed a subset selection based method that further improves per-task sample complexity and ensures that it is {\em independent} of the number of tasks for noise-less setting. 

The work leads to several interesting future directions and questions. For the non-linear version of the problem, ensuring optimal error rate with  optimal per-task sample complexity is an interesting open question. Understanding and contrasting standard {\em MAML} techniques for the linear and non-linear problem is another exciting direction, which is already seeing a fair amount of interest \cite{fallah2020convergence,saunshi2020sample}. Finally, analyzing alternating minimization methods with stochastic gradients and streaming tasks is another promising direction. %

	\ifarxiv
	\printbibliography
	\fi

	\clearpage 
	
\onecolumn
\appendix

\section*{Appendix}

This appendix contains proofs for the claims mentioned main text. Section~\ref{sec:analysis_rkr_logt} and \ref{sec:analysis_rkr_subset} contain the analyses of Algorithm~\ref{alg:altmin} and \ref{alg:altselect}, respectively. Section~\ref{sec:past_corollaries} contains corollaries of some known results. Section~\ref{sec:technical_lemmas} contains some general technical lemmas used in this paper.

\section{Analysis of \altmin (Algorithm~\ref{alg:altmin})}
\label{sec:analysis_rkr_logt}
Initialized at $U$, the $k$-the step of alternating minimization-based \altmin (Algorithm~\ref{alg:altmin}) is:
\begin{eqnarray}
\vi &\leftarrow&  (U^\top \Si_1 U )^\inv( (U^\top \Si_1 U^* ) \ovi + U^\top \zi)\;,\;\;\;\;\;\; \text{ for }i\in \cT_k = [1 + (k-1)t/K, t k/K]  \label{eq:r_vupdate_logt}\\
\hU &\leftarrow&   \cA^\inv\Big( \,   \sum_{i\in [t]} \Si_2  U^* \ovi (\vi)^\top + \zi (\vi)^\top \,\Big) \;, \label{eq:r_uupdate_logt}\\
\pU & \leftarrow & \mathrm{QR}(\hU)\;,
\end{eqnarray}
where $\pU$ is the next iterate, $\Si_1 = \frac2m \sum_{j \in [1, m/2]} \xij (\xij)^\top$, $\Si_2 = \frac2m \sum_{j \in [1+m/2, m]} \xij (\xij)^\top$, $z^{(i)} \triangleq (1/m)\sum_{j\in[m]} \varepsilon_j^{(i)}x_j^{(i)}$ and ${\cal A}:{\mathbb R}^{d\times r} \to {\mathbb R}^{d\times r}$ is a  self-adjoint linear operator such that ${\cal A}(U)=  \sum_{i\in T} \Si  U \vi (\vi)^\top$. The self-adjointness of $\cA$ follows from the symmetry of $\Si$ when using cyclic property of trace as follows
\begin{align}
\Ip{U_2}{\cA(U_1)} = \sum_{i \in T}\Ip{U_2}{\Si U_1 \vi (\vi)^\top)} &= \sum_{i \in T} \tr(U_2^\top \Si U_1 \vi (\vi)^\top) \nonumber \\
&= \sum_{i \in T} \tr(\vi (\vi)^\top U_2^\top \Si U_1 ) = \Ip{\cA(U_2)}{U_1}
\end{align}

\noindent
{\bf Incoherence.} $ \max_i \|\ovi\|^2 \leq (\mu \,r/\t)  \eigmin(\sum_{i\in[t]} \ovi (\ovi)^\top)$, 
and we define $\nu = (1/t)\eigmin(\sum_{i\in[t]} \ovi (\ovi)^\top)$. Notice that, this non-standard definition of incoherence is related to the standard definition: $W^* = (V^*)^\top V^* = \sum_{i \in [t]} \ovi (\ovi)^\top$, $V^* = \tilde{V}^* R^*$ (QR-decomposition), $\max_i \|\widetilde{v}^{*(i)}\|^2 \leq \widetilde{\mu} \,r/\t$, as follows $\mu = \widehat{\mu}(\sigma_1^2(R^*)/\sigma_r^2(R^*))$.

\begin{thm}\label{thm:altmin_logt}
	Let  there be $t$ linear regression tasks, each with $m$ samples satisfying Assumptions \ref{assume:linear_meta_problem} and \ref{assume:incoherence},
	and $K = \lceil \log_2(\frac{({\eigmin^*}/{\eigmax^*}) mt}{\mu dr^2}) \rceil$, $\|(\Id - \oU (\oU)^\top)\iU\|_F \leq \min \Big( \frac34, \Ord\Big(\sqrt{\frac{\eigmin^*}{\eigmax^*}  \frac1{\log(t/K)} } \Big)\Big)$,
	$m \geq \Omega\Big( (1+r \Big(\SNRre\Big)^2) r \log(\frac{t}{\delta}) + r^2 \log(\frac{K}{\delta}) \Big)$, 	$t \geq \Omega(\mu^2 r^3 K \log(\frac{K}\delta))$,
	and 
	$mt \geq \Omega\Big({\mu d r^2 K \frac{\eigmax^*}{\eigmin^*}}  \Big(\log(\frac{t}\delta) + \Big(\SNRre \Big)^2 \log^2(\frac{t}{\delta}) \log(\frac{rK}\delta) \Big)\Big)$.
	Then, for any $0 < \delta < 1$, after 
	$K$
	iterations, \altmin (Algorithm~\ref{alg:altmin}) returns an orthonormal matrix $U \in \bR^{d \times r}$, such that with a probability of at least $1 - \delta 
	$
	\begin{align}
	\frac1{\sqrt{r}} \|(\Id - \oU (\oU)^\top)U \|_F 
	&\leq \Ord\Big(\SNRre \sqrt{\frac{\mu d r K \log(\frac{t}{\delta}) \log(\frac{rK}\delta)}{mt}} \Big)
	\end{align}
	and the algorithm uses an additional memory of size $\Ord(d^2 r^2)$.
\end{thm}

A proof is in Section~\ref{sec:altmin_logt_pf}.
\\

\noindent{\bf Initialization.} If we initialize \altmin (Algorithm~\ref{alg:altmin}) with Method-of-Moments (Theorem~\ref{thm:svd_mom}), we need at least
\begin{align}
mt &\geq \widetilde\Omega \Big(
\frac{\eigmax^{*2}}{\eigmin^{*2}} {\mu dr^2} + \Big( \SNRre \Big)^4 \frac{\eigmax^*}{\eigmin^*}  dr^3\Big) 
\end{align}
initial number of samples, where $\widetilde{\Omega}$ hides $\mathrm{polylog}$ factors.

\subsection{Proof of Theorem~\ref{thm:altmin_logt}}
\label{sec:altmin_logt_pf}

\textbf{Proof sketch:} 
We first prove that distance between $\oU$ and $U$ decreases at each iteration up to some additional noise terms. Then this per iterate result is unrolled to obtained the final guarantees. 

First we focus on the $k$-th iterate. In this analysis, unless specified $[t]$, represents the $k$-th $K$-way partition used for the $k$-th iterate.

In the analysis of an iterate we denote the current iterate using $U$ and the next iterate using $\pU$.
First we prove that the distance between the true $\ovi$ and the current $\vi$ is approximately upper-bounded by multiple of distance between $U$ and $\oU$. Next we prove that distance between $\pU$ and $\oU$ is approximately a fraction of the distance between  $\ovi$ and $\vi$. Finally, combining the above two results gives us desired result.
\\

\medskip\noindent\textbf{Preliminaries:}
Let $Q = (\oU)^\top U$. Using Lemma~\ref{lem:distance-relations}, if $\|U - \oU (\oU)^\top U \|_F < 1$, $Q$ is invertible. Let $Q^{-1}$ be the right inverse of $Q$, i.e.~$Q Q^{-1} = \Id$. Let $W = (V^*)^\top V^* = \sum_{i \in [t]} \ovi (\ovi)^\top$, and $\eigmax^* = \max_{\|z\| = 1} z^\top W^* z$ and $\eigmin^* = \min_{\|z\| = 1} z^\top W^* z$.

\medskip\noindent\textbf{Update on $V$:}
Let $\dvi = \vi - Q^\invert \ovi$ and $\dv^T = [h^{(1)} h^{(2)} \ldots h^{(t)}]$. Let $\|\dv\|_F \triangleq \sqrt{\sum_{i \in [t]} \|\dvi\|^2}$ and $\|\dv\|_\inftyone \triangleq \max_{i \in [t]} \|\dvi\|$. Let $W = V^\top V = \sum_{i \in [t]} \vi (\vi)^\top$, and $\eigmax = \max_{\|z\| = 1} z^\top W z$ and $\eigmin = \min_{\|z\| = 1} z^\top W z$.

\begin{lemma}\label{lem:v_update_rkr_logt}
If $\|(\Id - \oU (\oU)^\top)U\|_F \leq \min \Big( \frac34, \Ord\Big(\sqrt{\frac{\eigmin^*}{\eigmax^*} \frac1{\log(t/K)} } \Big)\Big)$ and $m \geq \Omega\Big( \Big(\SNRre \Big)^2 r^2 \log(\frac{t}{K\delta}) + r \log(\frac{t}{K\delta})\Big)$, 
then with a probability of at least $1 - \delta/3$, 
	\begin{align}\label{eq:v_update_incoherence_rkr_logt}
	\|\vi\| \leq \Ord\Big(\mu\,\eigmin \Big) \text{\;, and } \eigmin^* \leq 2\eigmin
	\end{align}	
	and %
	\begin{align}
	\sqrt{\frac{rK}{t}}\frac{\|\dv\|_F}{\sqrt{\eigmin}} 
	&\leq \Ord\Big(\sqrt{ \frac{\log(\frac{t}{K\delta})}{\log(\frac{1}\delta)} }  \sqrt{\frac{\eigmax^*}{\eigmin^*}} \|(\Id - \oU (\oU)^\top) U\|_F + 
	\SNRre
	\sqrt{\frac{r^2 \log(\frac{t}{K\delta})}{m}} 
	\Big) \label{eq:v_update_hf_rkr_logt}
	\\
	\sqrt{\frac{r K}{t}} \frac{\|\dv\|_\inftyone}{\sqrt{\eigmin}} &\leq \Ord\Big(\sqrt{ \frac{\log(\frac{t}{K\delta})}{\log(\frac{1}\delta)} }  \|(\Id - \oU (\oU)^\top) U\| \sqrt{\frac{\mu r K}{t}} + 
	\SNRre
	\sqrt{\frac{r^2 K \log(\frac{t}{K\delta})}{mt}}
	\Big) \label{eq:v_update_hinfty_rkr_logt}
	\end{align}
\end{lemma}

A proof is in Section~\ref{sec:v_update_rkr_logt_pf}.

\medskip\noindent\textbf{Update on $U$:}
Let $\VV,\cH, \hcH: \bR^{d \times r} \to \bR^{d \times r}$ be three linear operators, such that $\cV(U) = U \sum_{i \in \cT_k} \vi (\vi)^\top = U \VV$,  $\cH(U) = U \sum_{i \in \cT_k} \dvi (\vi)^\top$ and $\hcH(U) = \sum_{i \in \cT_k} \Si_2 U \dvi (\vi)^\top$, where $\dvi =\vi - Q^\invert \ovi$. $\cV$ is invertible and self-adjoint. Therefore $\cVih$ and $\cVh$ exist.
Let $\cI: \bR^{d \times r} \to \bR^{d \times r}$ be the identity mapping, such that $\cI(U) = U$.
\begin{align}
\hU- U^*Q &= \cA^\inv(\sum_{i \in \cT_k} \Si_2 \oU Q (Q^\invert \ovi - \vi) (\vi)^\top + \zi (\vi)^\top) \\
&= \cA^\inv(-\hcH(\oU Q) + \sum_{i \in \cT_k} \zi (\vi)^\top) \\
&= \cVih (\cVh \cA^\inv \cVh) \cVih(-\hcH(\oU Q) + \sum_{i \in \cT_k} \zi (\vi)^\top) \\
&= \cVih (\cI + \cE_1)(-(\cVih \cH + \cE_2)(\oU Q) + \cVih (\sum_{i \in \cT_k} \zi (\vi)^\top)) %
\end{align}
where $\cE_1 = (\cVih \cA \cVih)^\inv - \cI$ and $\cE_2 = \cVih \hcH - \cVih \cH$, and $F = \hU - \oU Q +  \cV^{-1} (\cH (\oU Q))$. Let $F = \hU - \oU Q +  \cV^{-1} (\cH (\oU Q))$

\begin{lemma}\label{lem:u_update_rkr_logt}
	Assume that the large probability event in Lemma~\ref{lem:v_update_rkr_logt} holds true. Then,
	\begin{align}
	\|\cVi \cH (\oU Q)\|_F &\leq \Ord\Big(\sqrt{\frac{\eigmax^*}{\eigmin^*} \log(\frac{t}{K})}  \|(\Id - \oU (\oU)^\top)U\|_F + \SNRre \sqrt{\frac{r^2 \log(\frac{t}{K \delta})}{m}} \Big)
	\label{lem:u_update_parallelerr_rkr_logt} 
	\end{align}
	and if %
	$mt \geq \Omega(\mu dr^2 K \log(t/K\delta))$, then with probability at least $1-\delta/3$
	\begin{align}
	&\|F\|_F 
	\leq 
	\Ord\Big(\sqrt{\frac{\eigmax^*}{\eigmin^*}\frac{\mu d r^2 K \log(\frac{t}{K\delta})}{mt}} \|(\Id - \oU (\oU)^\top)U\|_F + \sqrt{\frac{\mu d r^2 K \log(\frac{t}{K\delta}) \log(\frac{r}\delta)}{mt}} \SNRre \sqrt{\frac{r^2 \log(\frac{1}{\delta})}{m}} \Big) 
	\label{lem:u_update_orthoerr_rkr_logt}
	\end{align}
\end{lemma}
A proof is in Section~\ref{sec:u_update_rkr_logt_pf}.
\begin{lemma}\label{lem:qr_r_inverse_logt}
	If $\frac12 \leq \sigma_{\min}(Q)$, $\|F\|_F \leq \frac18$ and $\|\cV^{-1} (\cH (\oU Q))\|_F \leq \frac18$, then $R$ is invertible and $\|R^\invert\| \leq 4$.
\end{lemma}
A proof is in Section~\ref{sec:qr_r_inverse_logt_pf}. Clearly, from \eqref{lem:u_update_parallelerr_rkr_logt} and \eqref{lem:u_update_orthoerr_rkr_logt}, a sufficient condition for the above lemma is
\begin{align}
&\Ord\Big(\sqrt{\frac{\eigmax^*}{\eigmin^*} \log(\frac{t}{K})} \|(\Id - \oU (\oU)^\top)U\|_F + \SNRre \sqrt{\frac{r^2 \log(\frac{t}{K \delta})}{m}} \Big) \leq \frac18 \text{\;, and } \\
&\Ord\Big(\sqrt{\frac{\eigmax^*}{\eigmin^*}\frac{\mu d r^2 K \log(\frac{t}{K\delta})}{mt}} \|(\Id - \oU (\oU)^\top)U\|_F + \sqrt{\frac{\mu d r^2 K \log(\frac{t}{K\delta}) \log(\frac{r}\delta)}{mt}} \SNRre \sqrt{\frac{r^2 \log(\frac{1}{\delta})}{m}} \Big) 
\leq \frac18 
\end{align}
which can be satisfied with
\begin{align}
&\|(\Id - \oU (\oU)^\top)U\|_F \leq \Ord\Big(\sqrt{\frac{\eigmin^*}{\eigmax^*} \frac1{\log(t/K)} }\Big) \text{\,, \;\; } 
m \geq \Omega(\Big(\SNRre \Big)^2 {r^2 \log(\frac{t}{K \delta})} + {r^2 \log(\frac{1}{\delta})}\Big) \text{\;, and } \\
&mt \geq \Omega\Big(\mu dr^2 K \Big( 1 + \Big(\SNRre \Big)^2 \log(\frac{t}{K\delta}) \log(\frac{r}\delta) \Big) \Big)
\end{align}
Finally, we bound the Frobenius norm distance of the next iterate $\pU$ from the optimal $\oU$.
\begin{align}
\|(\Id - &\oU (\oU)^\top)\pU\|_F \\
&= \min_{Q^+} \|\pU - \oU Q^+\|_F \\
&\leq \|\hU R^\invert - \oU Q R^\invert + (\cVi \cH  (\oU Q)) R^\invert\| \\
&\leq \|\hU - \oU Q + \cVi \cH(\oU Q) \|_F \|R^\invert\| \\
&= \|F\|_F \|R^\invert\| \\
&\leq 
\Ord\Big(\sqrt{\frac{\eigmax^*}{\eigmin^*}\frac{\mu d r^2 K \log(\frac{t}{K\delta})}{mt}} \|(\Id - \oU (\oU)^\top)U\|_F + \sqrt{\frac{\mu d r^2 K \log(\frac{t}{K\delta}) \log(\frac{r}\delta)}{mt}} \SNRre \sqrt{\frac{r^2 \log(\frac{1}{\delta})}{m}} \Big) 
\end{align}

If 
\begin{align}
&mt \geq \Omega\Big(\mu dr^2 K {\frac{\eigmax^*}{\eigmin^*}} \Big( \log(\frac{t}{K\delta}) + \Big(\SNRre \Big)^2 \log^2(\frac{t}{K\delta}) \log(\frac{r}\delta) \Big) \Big)\text{\,, and } m \geq \Omega\Big(r^2 \log(\frac{1}\delta)\Big)
\end{align}
then,
\begin{align}
\|(\Id - \oU (\oU)^\top) \pU\|_F &\leq \frac12 \|(\Id - \oU (\oU)^\top)U\|_F + \min \Big( \frac38, \Ord\Big(\sqrt{\frac{\eigmin^*}{\eigmax^*} \frac1{\log(t/K)} } \Big)\Big)
\end{align}
Thus if $\|(\Id - \oU (\oU)^\top)U\|_F \leq \min \Big( \frac34, \Ord\Big(\sqrt{\frac{\eigmin^*}{\eigmax^*} \frac1{\log(t/K)} } \Big)\Big)$, then $\|(\Id - \oU (\oU)^\top)\pU\|_F \leq \min \Big( \frac34, \Ord\Big(\sqrt{\frac{\eigmin^*}{\eigmax^*} \frac1{\log(t/K)} } \Big)\Big)$. 

In the following lemma we prove that tasks subset used for each iteration, satisfy approximate incoherence.
\begin{lemma}[Shuffling and partition of tasks]\label{lem:shuffling_rkr_logt}
Let $\cT_k$ be the $k$-th subset ($k \in [K]$) of the $K$-way partition of the shuffled set of all $t$ tasks. If $t \geq \Omega(\mu^2 r^3 K \log(1/\delta))$, then with a probability of at least $1 - \delta/3$,
\begin{align}
\lambda_{1}(\sum_{i \in \cT_k} \ovi(\ovi)^\top) = \frac1K \Theta(\lambda_{1}((\ov)^\top \ov)) \;\; \text{ and } \;\; \lambda_{r}(\sum_{i \in \cT_k} \ovi(\ovi)^\top) = \frac1K \Theta(\lambda_{r}((\ov)^\top \ov)) \;,\;\;\; \text{ for all $r' \in [r]$}
\end{align}
where are $\lambda_{1}(\cdot)$ and $\lambda_{r}(\cdot)$ are the largest and smallest, respectively, eigenvalue operators of real-symmetric $r \times r$ matrix.
\end{lemma} 
A proof is in Section \ref{sec:shuffling_rkr_logt_pf}.

Therefore, using union-bound, we can un-roll the relation, between current iterate $U$ and the next iterate $\pU$, over $K$ iterations, starting from $\iU$ and ending at some $U$ iterations, to get
\begin{align}
\|(\Id - \oU (\oU)^\top) U\|_F &\leq \frac1{2^K} \|(\Id - \oU (\oU)^\top)\iU\|_F + 
\Ord\Big(\sqrt{\frac{\mu d r^2 K \log(\frac{t}{K\delta}) \log(\frac{r}\delta)}{mt}} \SNRre \sqrt{\frac{r^2 \log(\frac{1}{\delta})}{m}} \Big) 
\end{align}
with probability at least $1 - K \delta$.
Finally setting $K = \lceil \log_2(\frac{({\eigmin^*}/{\eigmax^*}) mt}{\mu dr^2}) \rceil$ and using 
$m \geq \Omega(r^2 \log(\frac1{\delta}))$
we get that, with a probability of at least $1 - K\delta$
\begin{align}
\|(\Id - \oU (\oU)^\top) U\|_F &\leq 
\Ord\Big(\SNRre \sqrt{\frac{\mu d r^2 K \log(\frac{t}{K\delta}) \log(\frac{r}\delta)}{mt}} \Big)
\end{align}

\subsection{Analysis of update on $V$}

\subsubsection{Proof of Lemma~\ref{lem:v_update_rkr_logt}}
\label{sec:v_update_rkr_logt_pf}
\begin{proof}[Proof of Lemma~\ref{lem:v_update_rkr_logt}]
	In this proof for brevity, we will first set that $\cT_k \leftarrow [t]$, $|\cT_k| = t/K \leftarrow t$, $\Si_1 \leftarrow \Si = \frac1m \sum_{j \in [m]} \xij (\xij)^\top$. This can be done due to the approximate equivalence of the subset $\cT_k$ by Lemma~\ref{lem:shuffling_rkr_logt}. Finally at the end of the analysis we will reset $\cT_k \leftarrow \cT_k$, $|\cT_k| = t/K \leftarrow t/K$, $\Si_1 \leftarrow \Si_1 = \frac2m \sum_{j \in [1, m/2]} \xij (\xij)^\top$.
	
	Recall the definition of $\vi$ from the update \eqref{eq:r_vupdate_logt}, and that $Q^{-1}$ is right inverse of $Q$, i.e.~$Q Q^{-1} = \Id$.
\begin{eqnarray}
\vi - Q^\invert \ovi &=&  (U^\top \Si U )^\inv(U^\top \Si (\oU Q-U) ) Q^\invert \ovi  + (U^\top \Si U )^\inv U^\top z^{(i)}  
\end{eqnarray}	
	We can use re-write the first term as,
	\begin{align}
	&\;\;\;\;\; (U^\top \Si U)^\inv U^\top \Si (\oU Q-U) Q^\invert \\ &= (U^\top \Si U)^\inv U^\top \Si (U U^\top + U_\perp U_\perp^\top) (\oU Q-U) Q^\invert \\
	&= U^\top (\oU Q-U) Q^\invert + (U^\top \Si U)^\inv U^\top \Si U_\perp U_\perp^\top  (\oU Q-U) Q^\invert \\
	&= - U^\top (\Id - \oU (\oU)^\top)^2 U Q^\invert + (U^\top \Si U)^\inv U^\top \Si U_\perp U_\perp^\top  \oU  \\
	&= - (U - \oU Q )^\top (U - \oU Q ) Q^\invert + (U^\top \Si U)^\inv U^\top \Si U_\perp U_\perp^\top  \oU
	\end{align}
	where we used the fact that $Q = (\oU)^\top U$. Therefore
	\begin{align}
	& \| \vi - Q^\invert \ovi \|  %
	\leq \nonumber \\ 
	& \|U - \oU Q\| \|(U - \oU Q) Q^\invert \ovi\| + 
	\| (U^\top \Si U)^\inv \| (\| U^\top \Si U_\perp U_\perp^\top \oU \ovi\| +
	\| U^\top z^{(i)}  \,\|)
	\end{align}
	If $m \geq \Omega( {r \log(t/\delta)} )$, then $\alpha = c \sqrt{\frac{r \log(27t/\delta)}{m}} \leq 1/2$ and by 
	Lemma~\ref{lem:v_update_rkr_matrices_logt}, with a probability of at least $1 - \delta$,
	\begin{align}
	\left.\begin{aligned}
	\| (U^\top \Si U )^\inv\big\| &\leq (1+2\alpha) \text{,  } \\
	\big\| U^\top \Si U_\perp U_\perp^\top \oU \ovi \big\|
	&\leq \alpha \|U_\perp^\top \oU \ovi \| \text{, \; and \;\; } \\
	\big\| U^\top z^{(i)}  \,\big\| &\leq \sigma \alpha\,,
	\end{aligned}\right\rbrace
	\text{ for all $i \in [t]$}
	\end{align}
	Now if $m \geq \Omega(r \log(1/\delta))$ and $\|\oU Q-U\| \leq \Ord\Big(\sqrt{ \frac{\log(\frac{t}\delta)}{\log(\frac{1}\delta)} }\Big)$, then 
	\begin{eqnarray}
	\| \vi - Q^\invert \ovi \|  & \leq & \Ord(\sqrt{ \frac{\log(\frac{t}\delta)}{\log(\frac{1}\delta)} } (\|(\oU Q-U) Q^{-1} \ovi \| + \|U_\perp^\top \oU \ovi\big\|) +  \sigma \sqrt{\frac{r \log(\frac{t}{\delta})}{m}}) 
	\label{eq:v_update_vdiff_rkr_logt}
	\end{eqnarray}
	Next we bound $\|\dv\|_F$, which by definition is $\|\dv\|_F = \sqrt{\sum_{i \in [t]} \| \dvi \|^2} = \sqrt{\sum_{i \in [t]} \| \vi - Q^\invert \ovi \|^2}$. Using \eqref{eq:v_update_vdiff_rkr_logt} and the fact that $(a^2 + b^2) \leq 2(a^2 + b^2)$ we get
	\begin{align}
	\|\dv\|^2_F&\leq \frac{\log(\frac{t}\delta)}{\log(\frac{1}\delta)} [\sum_{i \in \cT}  \Ord( \|(\oU Q-U)Q^\invert \ovi\|^2 + \|U_\perp^\top \oU \ovi \|^2)] + t(\sigma \sqrt{\frac{r \log(\frac{t}{\delta})}{m}})^2) \label{eq:v_update_hf1_rkr_logt}
	\end{align}	
	Clearly $\|Q\| = \|(\oU)^\top U\| \leq \|\oU\|\|U\| \leq 1$. If $\|(\Id - \oU (\oU)^\top)U \| \leq \|(\Id - \oU (\oU)^\top)U \|_F \leq \frac34$, then by using Lemma~\ref{lem:distance-relations}, $\|Q^\invert\| \leq 2$.	
	\begin{align}
	\sum_{i \in [t]} \|(\oU Q-U)Q^\invert \ovi\|^2 &= \sum_{i \in [t]} \tr((\ovi)^\top ((\oU Q-U)Q^\invert)^\top (\oU Q-U)Q^\invert \ovi) \\
	&= \tr((\oU Q-U)Q^\invert)^\top (\oU Q-U)Q^\invert) \sum_{i \in [t]} \ovi (\ovi)^\top ) \\
	&\leq \|(\oU Q-U)\|_F^2 \|Q^\invert\|^2 \Ord(\eigmax^*) (t/r) \\
	&\leq 4\|(\oU Q-U)\|_F^2 \Ord(\eigmax^*) (t/r)
	\end{align}
	Similarly we can use Lemma~\ref{lem:distance-relations}, to get
	\begin{align}
	\sum_{i \in [t]} \|U_\perp^\top \oU \ovi \|^2 &= \sum_{i \in [t]} \tr((\ovi)^\top (U_\perp^\top \oU )^\top U_\perp^\top \oU  \ovi) \\
	&= \tr((U_\perp^\top \oU)^\top (U_\perp^\top \oU) \sum_{i \in [t]} \ovi (\ovi)^\top ) \\
	&\leq \|U_\perp^\top \oU\|_F^2 \Ord(\eigmax^*) (t/r) \\
	&\leq \|(\oU Q-U)\|_F^2\Ord(\eigmax^*) (t/r)
	\end{align}
	Therefore substituting the above two inequalities into \eqref{eq:v_update_hf1_rkr_logt} and using the fact that $\sqrt{a + b} \leq \sqrt{a} + \sqrt{b}$ for all $0 \leq a, b$ we get
	\begin{align}
	\|\dv\|_F %
	&\leq \Ord(\sqrt{ \frac{\log(\frac{t}\delta)}{\log(\frac{1}\delta)} } \|\oU Q-U\|_F \sqrt{\eigmax^* (t/r) } + \sqrt{t} \sigma \sqrt{\frac{r \log(\frac{t}{\delta})}{m}}) \label{eq:v_update_hf2_rkr_logt}
	\end{align}
	Then as $\|(\Id - \oU (\oU)^\top)U\|_F \leq \Ord\Big(\sqrt{\frac{\eigmin^*}{\eigmax^*} \frac1{\log(t)} }\Big)$ and $m \geq \Omega\Big( \Big(\SNRre \Big)^2 r^2 \log(\frac{t}\delta)\Big)$, $ \|\dv\|_F  \leq  (1-\frac1{\sqrt{2}})  \sqrt{(t/r) \eigmin^*}$.
	Using $\|Q^\invert\| \leq 2$ in \eqref{eq:v_update_vdiff_rkr_logt} we also get that
	\begin{align}
	\|\dvi\| = \| \vi - Q^\invert \ovi \| 
	\leq \Ord(\sqrt{ \frac{\log(\frac{t}\delta)}{\log(\frac{1}\delta)} } \|(\oU Q-U)\| \|\ovi \| +  \sigma \sqrt{\frac{r \log(\frac{t}{\delta})}{m}}) \label{eq:v_update_hinf_rkr_logt}
	\end{align}
	By definition is $\|\dv\|_\inftyone = \max_{i \in [t]} \| \dvi \| = \max_{i \in [t]} \| \vi - Q^\invert \ovi \|$.
	Then as $\|(\Id - \oU (\oU)^\top)U\| \leq \|(\Id - \oU (\oU)^\top)U\|_F \leq \Ord\Big(\sqrt{\frac{\eigmin^*}{\eigmax^*} \frac1{\log(t)}}\Big) \leq \Ord(1)$, $m \geq \Omega\Big( \Big(\SNRre \Big)^2 r^2 \log(\frac{t}\delta)\Big) \geq \Omega\Big( \Big(\SNRre\Big)^2 r \log(\frac{t}\delta)\Big) $, $\|\dv\|_\inftyone \leq \Ord( \mu \eigmin^*)$.
	Now, using $ \|\dv\|_F  \leq  (1-\frac1{\sqrt{2}})  \sqrt{(t/r) \eigmin^*}$, $\|\dv\|_\inftyone \leq \Ord( \mu \eigmin^*)$, $\|Q\| \leq 1$ and $\frac12 \leq \sigma_{\min}(Q)$, by Lemma~\ref{lem:incoherence_logt}, we get the approximate incoherence relation for the intermediate $\v$
	\begin{align}
	\|\vi\| \leq \Ord\Big(\mu\,\eigmin\Big) \text{\;, and } \eigmin^* \leq 2\eigmin \label{eq:v_update_incoh_rkr_logt}
	\end{align}		
	Using this we bound $\|\dv\|_\inftyone$. Using the above incoherence relation and \eqref{eq:v_update_hinf_rkr_logt}, we get
	\begin{align}
	\sqrt{\frac{r}{t}} \frac{ \|\dv\|_\inftyone}{\sqrt{\eigmin}} \leq 2 \sqrt{\frac{r}{t}} \frac{ \|\dv\|_\inftyone}{\sqrt{\eigmin^*}} &\leq \Ord\Big( \sqrt{\frac{r}{t}} \sqrt{ \frac{\log(\frac{t}\delta)}{\log(\frac{1}\delta)} }  \|\oU Q-U\| \max_{i \in [t]} \frac{\|\ovi \|}{\sqrt{\eigmin^*}}  + 2\sqrt{\frac{r}{t}} \frac{2 c \sigma}{\sqrt{\eigmin^*}} \sqrt{\frac{r \log(\frac{27t}{\delta})}{m}} \\
	&\leq \Ord\Big( \sqrt{ \frac{\log(\frac{t}\delta)}{\log(\frac{1}\delta)} }  \sqrt{\frac{\mu r}{t}} \|\oU Q-U\| + \SNRre \sqrt{\frac{r^2 \log(\frac{t}{\delta})}{mt}} \Big)
	\end{align}
	Using \eqref{eq:v_update_incoh_rkr_logt} in \eqref{eq:v_update_hf2_rkr_logt}, we get
	\begin{align}
	\sqrt{\frac{r}{t}}\frac{\|\dv\|_F}{\sqrt{\eigmin}} \leq 2 \sqrt{\frac{r}{t}} \frac{\|\dv\|_F}{\sqrt{\eigmin^*}} &\leq \Ord\Big( \sqrt{ \frac{\log(\frac{t}\delta)}{\log(\frac{1}\delta)} }  \sqrt{\frac{\eigmax^*}{\eigmin^*}} \|(\Id - \oU (\oU)^\top)U\|_F + \SNRre
	\sqrt{\frac{r^2 \log(\frac{t}{\delta})}{m}} \Big)
	\end{align}	
	Finally, by resetting $\cT_k \leftarrow \cT_k$, $|\cT_k| = t/K \leftarrow t/K$, $\Si_1 \leftarrow \Si_1 = \frac2m \sum_{j \in [1, m/2]} \xij (\xij)^\top$, we obtain the desired result.
\end{proof}

\subsubsection{Supporting lemmas for the analysis of update on $V$}

Here we bound the linear operators in the $\vi$ update.
\begin{lemma}\label{lem:v_update_rkr_matrices_logt}
Let $\alpha = c \sqrt{\frac{r \log(27t/\delta)}{m}}$. With a probability of at least $1 - \delta$, the following are true for all $i \in [t]$
\begin{align}
\| (U^\top \Si U )^\inv\big\| &\leq (1+2\alpha) \text{,  } \\
\big\| (U^\top \Si (\oU Q-U) Q^\invert \ovi\big\| &\leq (\|(\Id - \oU (\oU)^\top)U\| + \alpha) \|(\oU Q-U) Q^\invert \ovi\big\| \\ &\leq (1+\alpha) \|(\oU Q-U) Q^\invert \ovi\big\| \\
\big\| U^\top \Si U_\perp U_\perp^\top \oU \ovi\big\| &\leq \alpha \big\| U_\perp^\top \oU \ovi\big\| \text{, and } \\
\big\| U^\top z^{(i)}  \,\big\| &\leq \sigma \alpha
\end{align}
\end{lemma}
\begin{proof}[Proof of Lemma~\ref{lem:v_update_rkr_matrices_logt}]
Let $i \in [t]$.
	
Let $\mathcal{S} = \{v \in \bR^{r} \,|\, \|v\| = 1\}$ be the set of all real vectors of dimension $r$ with unit Euclidean norm. For $\epsilon \leq 1$, there exists an $\epsilon$-net, $N_\epsilon \subset \mathcal{S}$, of size $(1 + 2/\epsilon)^{r}$ with respect to the Euclidean norm~\citep[Lemma 5.2]{vershynin2010introduction}. That is for any $v' \in \mathcal{S}$, there exists some $v \in N_\epsilon$ such that $\|v'-v\|_F \leq \epsilon$.

Consider a $v \in N_\epsilon$, such that $\|v\|_F = 1$. Now we will prove with high-probability that $\big\langle ((U^\top \Si U) - \Id)v, v \big\rangle$ is small. Consider the the following quadratic form
\begin{align}
v^\top (U^\top \Si U) v = \frac1m \sum_{j \in [m]} \tr(v^\top (U^\top \xij (\xij)^\top U) v) &= \frac1m \sum_{j \in [m]} \tr((\xij)^\top U v v^\top U^\top \xij )
\end{align}
$\xij \sim {\cal N}(0,{\mathbf I}_{d\times d}$) are i.i.d.~standard Gaussian random vectors. We will use Hanson-Wright inequality (Lemma ~\ref{lem:hanson-wright}) to prove that the above quadratic form concentrates around its mean. In  Lemma~\ref{lem:gauss_pseudo_inner} (which is a straightforward Corollary of Hanson-Wright inequality), by setting $a\leftarrow U v, b  \leftarrow U v$, we get that with a probability of at least $1 - \delta$
\begin{equation}
\bigg|v^\top ((U^\top \Si U) - \Id)v \bigg| \leq c \max\bigg(\sqrt{\frac{ \log(1/\delta)}{m}}, \frac{\log(1/\delta)}{m} \bigg) := \Delta_\epsilon
\end{equation}
For brevity, let $E = (U^\top \Si U) - \Id$. Notice that $E$ is a real symmetric matrix, therefore it has an eigen decomposition. Then, let $v' \in \mathcal{S} \subset \bR^{r}$ be the largest ``eigenvector'' of $E$, such that $(v')^\top E v' = \|E\| = \max_{\|\widetilde{v}\| = 1} \widetilde{v}^\top E \widetilde{v} = \max_{\|\widetilde{v}\| = \|\widetilde{v}'\|_F = 1} \widetilde{v}^\top E \widetilde{v}'$. Then there exists some $v \in N_\epsilon$ such that $\|v'-v\| \leq \epsilon$. 
\begin{align}
\|E\|_F = (v')^\top E v &= v^\top E v + (v'-v)^\top E v+  (v')^\top E (v' - v) \\
&\leq v^\top E v + \|v'-v\|\|E\| \|v\| +  \|v'\| \|E\| \|v' - v\|   \\
&\leq v^\top E v  + 2 \epsilon \|E\| 
\end{align}
Re-arranging and setting $\epsilon=1/4$, and $c \gets 2c$, we get
\begin{align}
\|(U^\top \Si U) - \Id\| = \|E\| \leq \Delta_{\frac14} = \Delta.
\end{align}
where $\Delta = c \max\bigg(\sqrt{\frac{r\, \log(9/\delta)}{m}}, \frac{r\,\log(9/\delta)}{m} \bigg)$. 
If $m \geq \max(1, 4c^2) {r \log(27t/\delta)}$, then $\Delta \leq \alpha \leq 1/2$.

Thus with a probability of at least is is also implies that
\begin{align}
\|(U^\top \Si U)^\inv\| = (\sigma_{\min}(U^\top \Si U))^{-1} \leq \frac1{1 - \alpha} \leq 2.
\end{align}

Using similar arguments we can also prove that with a probability of at least $1 - \delta$
\begin{align}
\big\| (U^\top \Si (\oU Q-U) Q^\invert \ovi\big\| &\leq \|U^\top (\oU Q-U) Q^\invert \ovi\big\| + \alpha \|(\oU Q-U) Q^\invert \ovi\big\| \\
&\leq \|U^\top (\Id - \oU (\oU)^\top) U Q^\invert \ovi\big\| + \alpha \|(\oU Q-U) Q^\invert \ovi\big\| \\
&\leq \|U^\top (\Id - \oU (\oU)^\top)^2 U Q^\invert \ovi\big\| + \alpha \|(\oU Q-U) Q^\invert \ovi\big\| \\
&\leq \|U^\top (\Id - \oU (\oU)^\top) (\oU Q-U) Q^\invert \ovi\big\| + \alpha \|(\oU Q-U) Q^\invert \ovi\big\| \\
&\leq \|(\Id - \oU (\oU)^\top) U\| \|(\oU Q-U)Q^\invert \ovi\big\| + \alpha \|(\oU Q-U) Q^\invert \ovi\big\| \\
&\leq (\|(\Id - \oU (\oU)^\top)U\| + \alpha) \|(\oU Q-U) Q^\invert \ovi\big\| \\ &\leq (1+\alpha) \|(\oU Q-U) Q^\invert \ovi\big\|\,,
\end{align}
Using similar arguments we can also prove that with a probability of at least $1 - \delta$
\begin{align}
\big\| U^\top \Si U_\perp U_\perp^\top \oU \ovi\big\| &\leq \alpha \big\| U_\perp^\top \oU \ovi\big\|
\end{align}
and with a probability of at least $1 - \delta$
\begin{align}
\big\| U^\top z^{(i)}  \,\big\| &\leq \sigma \alpha
\end{align}

Finally setting $\delta \leftarrow \delta/3/t$ and taking the union bound over three bounds over all the tasks in $[t]$ gets us the desired result.
\end{proof}
Here we prove the approximate incoherence of the intermediate $\v$ and the spectrum of intermediate $\VV$.
	\begin{lemma}[Incoherence of intermediate $\vi$]\label{lem:incoherence_logt} If 
	$ \|\dv\|_F  \leq  (1-\frac1{\sqrt{2}})  \sqrt{(t/r) \eigmin((r/t) W^*)}$, %
	$\|\dv\|_\inftyone^2 \leq \Ord(\mu \eigmin((r/t) W^*)) $, %
	$\|Q\| \leq 1$ and $\frac12 \leq \sigma_{\min}(Q)$, and \eqref{eq:v_update_hf2_rkr_logt} and \eqref{eq:v_update_hinf_rkr_logt} are true,
	then
	\begin{align}
	\|\vi\| \leq \Ord\Big(\mu\,\eigmin((r/t)W)\Big) \text{\;, and } \eigmin((r/t)W^*) \leq 2\eigmin((r/t)W)
	\end{align}
\end{lemma}

\begin{proof}[Proof of Lemma~\ref{lem:incoherence_logt}]
\begin{align}
\|\vi\| &\leq  \|Q^{-1}\ovi\| + \|\vi - Q^{-1} \ovi\| \leq  2\|\ovi\| + \|\dvi\| \label{eq:v_update_v_norm_ub_rkr_logt} \\ 
\implies \|\vi\|^2 &\leq \Ord(\|\ov\|_\inftyone^2) + \Ord( \|\dv\|_\inftyone^2) \leq \Ord\Big( \mu \eigmin((r/t) W^*) \Big)
\end{align}
where the second inequality use the definition $\dvi = \vi - Q^{-1} \ovi$ and $\|Q^{-1}\| \leq 2$ (as $\sigma_{\min}(Q) \geq \frac12$), the third inequality use the fact that $a+b \leq 2 a^2 + 2 b^2$ an \eqref{eq:v_update_hinf_rkr_logt}, and the final inequality uses $\|\dv\|_\inftyone \leq \|\v\|_\inftyone$.

Notice that $W = \v^T \v$ and $W^* = (\ov)^T \ov$. Thus $\sqrt{\eigmin((r/t) W)} =  \sqrt{(r/t)} \sigmin(V)$ and $\sqrt{\eigmin((r/t) W^*)} =  \sqrt{(r/t)} \sigmin(W^*)$, and both $W$ and $W^*$ are positive semi-definite (PSD). Similarly, using $\sigma_{\min}(Q^{-1}) = \sigma_{\min}(((\oU)^\top U)^{-1}) \geq 1$ and Lemma~\ref{lem:eig_prod} we can get that 
\begin{align}
\sqrt{\eigmin((r/t) W^*)} \leq \sqrt{\sigma_{\min}^2(Q^{-1}) \eigmin((r/t) W^*)} \leq \sqrt{(r/t) \eigmin(Q^{-1} (\ov)^T \ov Q^{-\top})} \leq \sqrt{(r/t)} \sigmin(\ov Q^{-T})
\end{align}
Therefore, instead of analyzing the relation between $\eigmin(W)$ and $\eigmin(W^*)$, we can analyze the relation between  $\sigmin(V)$ and $\sigmin(V^*)$. Notice that $\ov Q^{-T} = \v + \ov Q^{-T} - \v$. Then by Weyl's inequality (Lemma~\ref{lem:weyls}, by setting $A \leftarrow \ov Q^{-T}$, $B \leftarrow \v$, and $C \leftarrow \ov Q^{-T} - \v$) we get that
\begin{align}
\sqrt{\eigmin((r/t) W^*)} 
\leq  \sqrt{(r/t)} \sigmin(\ov Q^{-T}) 
&\leq  \sqrt{(r/t)} \sigmin(\v) +  \sqrt{(r/t)} \|\v - \ov Q^{-T}\| \\
&\leq \sqrt{\eigmin((r/t)W)}  +  \sqrt{(r/t)}  \|\dv\| \\ 
&\leq \sqrt{\eigmin((r/t) W)} + \sqrt{(r/t)} \|\dv\|_F \\
&\leq \sqrt{\eigmin((r/t) W)} + (1-\frac1{\sqrt{2}}) \sqrt{\eigmin((r/t) W^*)}
\end{align}
where %
the last inequality uses $ \|\dv\|_F  \leq  (1-\frac1{\sqrt{2}}) \sqrt{(t/r) \eigmin((r/t) W^*)}$. %
Finally we get the desired result by re-arranging the terms.
\end{proof}

\subsection{Analysis of update on $U$}

\subsubsection{Proof of Lemma~\ref{lem:u_update_rkr_logt}}
\label{sec:u_update_rkr_logt_pf}
\begin{proof}[Proof of Lemma~\ref{lem:u_update_rkr_logt}]
	In this proof for brevity, we will first set that $\cT_k \leftarrow [t]$, $|\cT_k| = t/K \leftarrow t$, $\Si_2 \leftarrow \Si = \frac1m \sum_{j \in [m]} \xij (\xij)^\top$. This can be done due to the approximate equivalence of the subset $\cT_k$ by Lemma~\ref{lem:shuffling_rkr_logt}. Finally at the end of the analysis we will reset $\cT_k \leftarrow \cT_k$, $|\cT_k| = t/K \leftarrow t/K$, $\Si_2 \leftarrow \Si_2 = \frac2m \sum_{j \in [m/2+1, m]} \xij (\xij)^\top$.

	Recall that
	\begin{align}
	\hU- U^*Q %
	&= \cVih (\cI + \cE_1)(-(\cVih \cH + \cE_2)(\oU Q) + \cVih (\sum_{i \in [t]} \zi (\vi)^\top)) %
	\end{align}
	where $\cE_1 = (\cVih \cA \cVih)^\inv - \cI$ and $\cE_2 = \cVih \hcH - \cVih \cH$, and $F = \hU - \oU Q +  \cV^{-1} (\cH (\oU Q))$. Therefore
	
	\begin{align}
	\|F\|_F %
	&\leq %
	\|\cVih\|_F (\|\cE_1\|_F \|\cVih \cH (\oU Q)\|_F + \|\cI + \cE_1\|_F(\|\cE_2(\oU Q)\|_F + \|  \cVih( \sum_{i \in [t]} \zi (\vi)^\top)) \|_F)) 
	\label{eq:u_update_F_rkr_logt}
	\end{align}
	
	We can trivially bound $\|\cVih\|_F$ as follows. For all $\|U\|_F = 1$, the following is true.
	\begin{align} \label{eq:u_update_winv_rkr_logt}
	\|\cVih(U)\|_F = \|U\VVih\|_F \leq \|U\|_F \|\VVih\| \leq \sqrt{\frac{r/t}{\eigmin}}
	\end{align}

		$\Omega({\mu dr^2 \log(1/\delta)}) \leq {mt}$ and approximate incoherence of intermediate $V$ \eqref{eq:v_update_incoherence_rkr_logt} implies that $\Omega(dr \frac{\|\v\|_\inftyone^2 }{\eigmin(W)/t} \log(1/\delta) )\leq\Omega({\mu dr^2 \log(1/\delta)}) \leq {mt}$, then by	Lemma~\ref{lem:tail5} we have that, with a probability of at least $1 - \delta/3$
		\begin{align}
		\|\cE_1\|_F 
		\leq 3c \sqrt{\frac{dr\,\|\v\|_\inftyone^2 \log(27/\delta)}{m \, \eigmin(W)}}
		\leq 3c \sqrt{\frac{\mu d r^2 \log(27/\delta)}{mt}} 
		\leq \frac12 
		\end{align}
		This also implies that
		\begin{align} \label{eq:u_update_operator_rkr_logt}
		\|\cI + \cE_1\|_F \leq \|\cI\| + \|\cE_1\|_F \leq 1 + \Delta \leq \frac32
		\end{align}
	By Lemma~\ref{lem:tail6}, 
	\begin{align} \label{eq:u_update_hexp_rkr_logt}
	\|(\cVih \cH ) (\oU Q)\|_F \leq \|\dv\|_F
	\end{align}
	and with a probability of at least $1 - \delta/3$
	\begin{align}
	\|\cE_2 (\oU Q)\|_F &\leq c (\min(\|\dv\|_F \frac{\|\v\|_\inftyone}{\sqrt{\eigmin(W)}},\|\dv\|_\inftyone )\sqrt{\frac{dr\,\log(15/\delta)}{m}} + \|\dv\|_\inftyone \frac{\|\v\|_\inftyone}{\sqrt{\eigmin(W)}} \frac{dr\,\log(15/\delta)}{m})
	\end{align}
	Using the approximate incoherence of $\v$ \eqref{eq:v_update_incoherence_rkr_logt} in the above inequality, we get that
	\begin{align} \label{eq:u_update_hvar_rkr_logt}
	\|\cE_2 (\oU Q)\|_F &\leq c (\min(\|\dv\|_F \sqrt{\frac{\mu r}{t}},\|\dv\|_\inftyone )\sqrt{\frac{dr\,\log(15/\delta)}{m}} + \|\dv\|_\inftyone 
	\sqrt{\frac{\mu r}{t}} \cdot \frac{dr\,\log(15/\delta)}{m})
	\end{align}
	By Lemma~\ref{lem:tail8} with a probability of at least $1 - \delta/3$
	\begin{align}
	\| \sum_{i \in [t]} \cVih (\zi (\vi)^\top)) \|_F &\leq 
	\Ord\Big(\sigma \sqrt{\frac{dr}{m} \log\Big(\frac{t}\delta\Big)  \log\Big(\frac{r}\delta\Big)} \Big)
	\label{eq:u_update_noise_rkr_logt}
	\end{align}
	Finally taking union bound over the above results and using Lemma~\ref{lem:v_update_rkr_logt}, we can bound each of the terms constituting $F$.
	Using \eqref{eq:u_update_winv_rkr_logt}, \eqref{eq:u_update_hexp_rkr_logt} and \eqref{eq:v_update_hf_rkr_logt} (recall that we set $t \leftarrow t/K$) we get
	\begin{align}
	\|\cVi \cH (\oU Q)\|_F &\leq \|\cVih\|_F \|\cVih \cH (\oU Q)\|_F \\
	&\leq  \sqrt{\frac{r}{t}} \frac{\|\dv\|_F}{\sqrt{\eigmin}} 
	\leq \Ord\Big(\sqrt{\frac{\eigmax^*}{\eigmin^*}} \sqrt{ \frac{\log(\frac{t}\delta)}{\log(\frac{1}\delta)} }  \|(\Id - \oU (\oU)^\top)U\|_F + \SNRre \sqrt{\frac{r^2 \log(\frac{t}{\delta})}{m}} \Big)  \label{eq:u_update_F1_rkr_logt}
	\end{align}
	Using \eqref{eq:u_update_winv_rkr_logt}, \eqref{eq:u_update_operator_rkr_logt},
	\eqref{eq:u_update_hexp_rkr_logt}, and \eqref{eq:v_update_hf_rkr_logt} we get
	\begin{align} 
	&\;\;\;\;\; \|\cVih\|_F \|\cE_1\|_F \|\cVih \cH (\oU Q)\|_F \\
	&\leq  \Ord\Big(\sqrt{\frac{\mu d r^2 \log(\frac1{\delta})}{mt}} \sqrt{\frac{r}{t}}  \frac{\|\dv\|_F}{\sqrt{\eigmin}}\Big) \\
	&\leq \Ord\Big(\sqrt{\frac{\eigmax^*}{\eigmin^*}\frac{\mu d r^2 \log(\frac{t}\delta)}{mt}} \|(\Id - \oU (\oU)^\top)U\|_F + \sqrt{\frac{\mu d r^2 \log(\frac{1}\delta)}{mt}} \SNRre \sqrt{\frac{r^2 \log(\frac{t}{\delta})}{m}} \Big) \label{eq:u_update_F2_rkr_logt}
	\end{align}
	Using  \eqref{eq:u_update_winv_rkr_logt}, \eqref{eq:u_update_operator_rkr_logt}, \eqref{eq:u_update_hvar_rkr_logt}, \eqref{eq:v_update_hf_rkr_logt} and \eqref{eq:v_update_hinfty_rkr_logt} we get
	\begin{align}
	&\;\;\;\;\; \|\cVih\|_F \|\cI + \cE_1\|_F(\|\cE_2(\oU Q)\|_F \\
	&\leq  \Ord\Big(\sqrt{\frac{r}{t}} \min\Big(\frac{\|\dv\|_F}{\sqrt{\eigmin}} \sqrt{\frac{\mu r}{t}},\frac{\|\dv\|_\inftyone}{\sqrt{\eigmin}}\Big)\sqrt{\frac{dr\,\log(\frac1\delta)}{m}} + \sqrt{\frac{r}{t}} \frac{\|\dv\|_\inftyone}{\sqrt{\eigmin}}
	\sqrt{\frac{\mu r}{t}} \frac{dr\,\log(\frac1\delta)}{m} \Big) \\
	&\leq  \Ord\Big(\min\Big(\sqrt{\frac{\eigmax^*}{\eigmin^*}\frac{\mu d r^2 \log(\frac{t}\delta)}{mt}} \|(\Id - \oU (\oU)^\top)U\|_F + \sqrt{\frac{\mu d r^2 \log(\frac{1}\delta)}{mt}} \SNRre \sqrt{\frac{r^2 \log(\frac{t}{\delta})}{m}},\\
	&\;\;\;\;\;\;\;\;\;\;\;\;\;\;\;\;\;\;\;\;\;
	\sqrt{\frac{\mu d r^2 \log(\frac{t}\delta)}{mt}} \|(\Id - \oU (\oU)^\top)U\| + \sqrt{\frac{d r \log(\frac{1}\delta)}{m}} \SNRre \sqrt{\frac{r^2 \log(\frac{t}{\delta})}{mt}}\Big) + \\
	&\;\;\;\;\;\;\;\;\;\;\;\; \frac{\mu d r^2 \log(\frac{t}\delta)}{mt} \|(\Id - \oU (\oU)^\top)U\| + \frac{\sqrt{\mu} d r \sqrt{r} \log(\frac{1}\delta)}{m\sqrt{t}} \SNRre \sqrt{\frac{r^2 \log(\frac{t}{\delta})}{mt}}\Big)  \label{eq:u_update_F3_rkr_logt}
	\end{align}
	Using  \eqref{eq:u_update_winv_rkr_logt}, \eqref{eq:u_update_operator_rkr_logt}, \eqref{eq:u_update_noise_rkr_logt}, and \eqref{eq:v_update_incoherence_rkr_logt} we get
	\begin{align}
	\|\cVih\|_F \|\cI + \cE_1\|_F \| \sum_{i \in [t]} \cVih (\zi (\vi)^\top)) \|_F \leq \Ord\Big(\SNRre \sqrt{\frac{dr^2 \log(\frac{t}\delta)  \log(\frac{r}\delta)}{mt} } \Big)
	\label{eq:u_update_F4_rkr_logt}
	\end{align}
	Substituting \eqref{eq:u_update_F1_rkr_logt}, \eqref{eq:u_update_F2_rkr_logt}, \eqref{eq:u_update_F3_rkr_logt}, and \eqref{eq:u_update_F4_rkr_logt} in \eqref{eq:u_update_F_rkr_logt} we get
	\begin{align}
	\|F\|_F &\leq \|\cVih\|_F (\|\cE_1\|_F \|\cVih \cH (\oU Q)\|_F + \|\cI + \cE_1\|_F(\|\cE_2(\oU Q)\|_F + \| \sum_{i \in [t]} \cVih (\zi (\vi)^\top)\|_F) )\\
	&\leq \Ord\Big(\sqrt{\frac{\eigmax^*}{\eigmin^*}\frac{\mu d r^2 \log(\frac{t}\delta)}{mt}} \|(\Id - \oU (\oU)^\top)U\|_F + \sqrt{\frac{\mu d r^2 \log(\frac{1}\delta)}{mt}} \SNRre \sqrt{\frac{r^2 \log(\frac{t}{\delta})}{m}} \Big) + \\
	&\;\;\;\;\; \Ord\Big(\frac{\mu d r^2 \log(\frac{t}\delta)}{mt} \|(\Id - \oU (\oU)^\top)U\| + \frac{\sqrt{\mu} d r \sqrt{r} \log(\frac{1}\delta)}{mt} \SNRre \sqrt{\frac{r^2 \log(\frac{t}{\delta})}{m}}\Big)  + \Ord\Big(\SNRre \sqrt{\frac{dr^2 \log(\frac{t}\delta)  \log(\frac{r}\delta)}{mt} } \Big) \\
	&\leq \Ord\Big(\sqrt{\frac{\eigmax^*}{\eigmin^*}\frac{\mu d r^2 \log(\frac{t}\delta)}{mt}} \|(\Id - \oU (\oU)^\top)U\|_F + \sqrt{\frac{\mu d r^2 \log(\frac{1}\delta)}{mt}} \SNRre \sqrt{\frac{r^2 \log(\frac{t}{\delta})}{m}} \Big) + \Ord\Big(\SNRre \sqrt{\frac{dr^2 \log(\frac{t}\delta)  \log(\frac{r}\delta)}{mt} } \Big) \\
	&\leq \Ord\Big(\sqrt{\frac{\eigmax^*}{\eigmin^*}\frac{\mu d r^2 \log(\frac{t}\delta)}{mt}} \|(\Id - \oU (\oU)^\top)U\|_F + \sqrt{\frac{\mu d r^2 \log(\frac{t}\delta) \log(\frac{r}\delta)}{mt}} \SNRre \sqrt{\frac{r^2 \log(\frac{1}{\delta})}{m}} \Big) %
	\end{align}
where the second-last inequality used the fact that $mt \geq \Omega(\mu dr^2 \log(\frac{t}{\delta}))$.
Finally, by resetting $\cT_k \leftarrow \cT_k$, $|\cT_k| = t/K \leftarrow t/K$, $\Si_2 \leftarrow \Si_2 = \frac2m \sum_{j \in [m/2+1, m]} \xij (\xij)^\top$, we obtain the desired result.
\end{proof}

\subsubsection{Supporting lemmas for the analysis of update on $U$}

\begin{lemma}
	\label{lem:tail5}
	If 
	$\max(1, 4c^2) dr \frac{\|\v\|_\inftyone^2 }{\eigmin(W)/t} \log(27/\delta) \leq {mt}$,
	then with a probability of at least $1-\delta/3$, 
	\begin{align}
	\|\cE_1\|_F \leq 
	3c \sqrt{\frac{dr\,\|\v\|_\inftyone^2 \log(27/\delta)}{m \, \eigmin(W)}}
	\end{align}
\end{lemma}
\begin{proof}[Proof of Lemma~\ref{lem:tail5}]
	Let $\mathcal{S}_F = \{U \in \bR^{d \times r} \,|\, \|U\|_F = 1\}$ be the set of all real matrices of dimensions $d \times r$ with unit Frobenius norm. For $\epsilon \leq 1$, there exists an $\epsilon$-net, $N_\epsilon \subset \mathcal{S}_F$, of size $(1 + 2/\epsilon)^{dr}$ with respect to the Frobenius norm~\citep[Lemma 5.2]{vershynin2010introduction}. That is for any $U' \in \mathcal{S}_{F}$, there exists some $U \in N_\epsilon$ such that $\|U'-U\|_F \leq \epsilon$.
	
	Consider a $U \in N_\epsilon$, such that $\|U\|_F = 1$. Now we will prove with high-probability that $\big\langle (\cVih \cA \cVih - \cI)(U), U \big\rangle$ is small. Consider the the following quadratic form
	\begin{align}
	\big\langle (\cVih \cA \cVih)(U), U \big\rangle &= \Big\langle \sum_{i \in [t]} \Si U \VVih \vi (\vi)^\top \VVih, U\Big\rangle \\ 
	&= \sum_{i \in [t]} \frac1m \sum_{j \in [m]}  (\xij)^\top (U \VVih \vi (\vi)^\top \VVih U^\top) \xij
	\end{align}
	where $\Si = \frac1m \sum_{j \in [m]}  \xij (\xij)^\top$ and $\xij \sim {\cal N}(0,{\mathbf I}_{d\times d}$) are i.i.d.~standard Gaussian random vectors and $\VV = \sum_{i \in [t]} \vi (\vi)^\top$ is rank-$r$ matrix. We will use Hanson-Wright inequality (Lemma~\ref{lem:hanson-wright}) to prove that the above quadratic form concentrates around its mean. Notice that the the expectation of $\big\langle (\cVih \cA \cVih)(U), U \big\rangle $ is $\Ip{\cI(U)}{U}$.
	\begin{align}\label{eq:tail5_expect}
	\sum_{i \in [t]} \bE \Big[ \Big\langle \Si U \VVih \vi (\vi)^\top \VVih , U\Big\rangle \Big] &= \Big\langle U \VVih  \sum_{i \in [t]} \vi (\vi)^\top \VVih, U \Big\rangle = \Ip{U}{U} = \|U\|_F^2 = 1\,.
	\end{align}
	We will also need the following bounds to apply the Hanson-Wright inequality. Recall that $\|\v\|_\inftyone = \max_{i \in [t]} \|\vi\|$. Then,
	\begin{align}
	\max_{i \in [t]} \| U \VVih \vi (\vi)^\top \VVih U^\top\| = \max_{i \in [t]} \| U \VVih \vi \|^2
	&\leq \max_{i \in [t]} \| U\|^2 \|\VV\|^2 \|\vi\|^2
	\leq \frac{\|\v\|_\inftyone^2}{\eigmin(\VV)}\,\label{eq:tail5_max}
	\end{align}
	Also note that,
	\begin{align}
	\sum_{i \in [t]} \| U \VVih \vi (\vi)^\top \VVih U^\top\|_F^2
	&= \sum_{i \in [t]} \| U \VVih \vi\|^4 \\
	&= \max_{i \in [t]} \| U \VVih \vi\|^2 \sum_{i \in [t]} \Ip{U \VVih \vi}{U \VVih \vi} \\
	&\leq \frac{\|\v\|_\inftyone^2}{\eigmin(\VV)}
	\end{align}
	where the last inequality used \eqref{eq:tail5_expect} and \eqref{eq:tail5_max}. 
	Then by Hanson-Wright inequality (Lemma~\ref{lem:hanson-wright}), with probability at least $1 - \delta/|N_\epsilon|$
	\begin{align}
	\big| \big\langle (\cVih \cA \cVih - \cI)(U), U \big\rangle \big| = \big| \Big\langle \sum_{i \in [t]} \frac1m \sum_{j \in [m]} \xij (\xij)^\top U \VVih \vi (\vi)^\top \VVih, U\Big\rangle - \Ip{U}{U} \big| \leq \Delta_\epsilon
	\end{align}
	where $\Delta_\epsilon = c \max(\sqrt{\frac{\|\v\|_\inftyone^2 \log(|N_\epsilon|/\delta)}{m \, \eigmin(W)}}, \frac{\|\v\|_\inftyone^2 \log(|N_\epsilon|/\delta)}{m \, \eigmin(W)})$. Taking union bound over all $U \in N_\epsilon$ implies that with probability at least $1 - \delta$
	\begin{align}
	\big| \big\langle (\cVih \cA \cVih - \cI)(U), U \big\rangle \big| \leq \Delta_\epsilon \, \text{ , \;for all } U \in N_\epsilon\,.
	\end{align}
	
	For brevity, let $\cE_1' (U) = (\cVih \cA \cVih - \cI)(U)$. Notice that $\cE_1'$ is self-adjoint, therefore it has an eigen decomposition with respect to the Frobenius norm. Then, let $U' \in \mathcal{S}_F \subset \bR^{d \times r}$ be the largest ``eigenmatrix'' of $\cE_1$, such that $\Ip{\cE_1'(U)}{U} = \|\cE_1'\|_F = \max_{\|\widetilde{U}\|_F = 1} \Ip{\cE_1'(\widetilde{U})}{\widetilde{U}} = \max_{\|\widetilde{U}\|_F = \|\widetilde{U}'\|_F = 1} \Ip{\cE_1'(\widetilde{U})}{\widetilde{U}'}$. Then there exists some $U \in N_\epsilon$ such that $\|U'-U\|_F \leq \epsilon$. 
	\begin{align}
	\|\cE_1'\|_F = \Ip{\cE_1' (U')}{U'} &= \Ip{\cE_1' (U)}{U} + \Ip{\cE_1' (U' - U)}{U} +  \Ip{\cE_1' (U')}{U' - U} \\
	&\leq \Ip{\cE_1' (U)}{U} + \|\cE_1'\|_F \|U' - U\|_F (\|U\|_F +  \|U'\|_F)  \\
	&\leq \Ip{\cE_1' (U)}{U} + 2 \epsilon \|\cE_1'\|_F 
	\end{align}
	Re-arranging and setting $\epsilon=1/4$, and $c \gets 2c$, we get
	\begin{align}
	\|\cVih \cA \cVih - \cI\|_F = \|\cE_1'\|_F \leq \Delta_{\frac14} = \Delta.
	\end{align}
	where $\Delta = c \max\Big(\sqrt{\frac{dr\,\|\v\|_\inftyone^2 \log(9/\delta)}{m \, \eigmin(W)}}, \frac{dr\,\|\v\|_\inftyone^2 \log(9/\delta)}{m \, \eigmin(W)}\Big)$. 
	
	For brevity, let $\hcA (U) = (\cVih \cA \cVih)(U)$. Notice that $\hcA$ is self-adjoint, therefore it has an eigen decomposition with respect to the Frobenius norm. Then, let $U' \in \mathcal{S}_F \subset \bR^{d \times r}$ be the smallest ``eigenmatrix'' of $\hcA$, such that $\Ip{\hcA(U)}{U} = \lambda_{\min}(\hcA) = \min_{\|\widetilde{U}\|_F = 1} \Ip{\hcA(\widetilde{U})}{\widetilde{U}} = \min_{\|\widetilde{U}\|_F = \|\widetilde{U}'\|_F = 1} \Ip{\hcA(\widetilde{U})}{\widetilde{U}'}$. Then there exists some $U \in N_\epsilon$ such that $\|U'-U\|_F \leq \epsilon$. 
	\begin{align}
	\lambda_{\min}(\hcA) = \big\langle\hcA (U'), U' \big\rangle &= \Ip{\cI(U)}{U} + \big\langle (\hcA-\cI) (U), U \big\rangle + \big\langle\hcA (U' - U) , U \big\rangle +  \big\langle \hcA (U'), U' - U \big\rangle \\
	&\geq 1 - \big|\big\langle (\hcA-\cI) (U), U \big\rangle \big| - \lambda_{\min}(\hcA) \|U' - U\|_F (\|U\|_F + \|U'\|_F )\\
	&\geq 1 - \Delta_\epsilon - 2 \epsilon \lambda_{\min}(\hcA)
	\end{align}
	Re-arranging and setting $\epsilon=1/4$, and $c \gets 2c$, we get that $\lambda_{\min}(\hcA) \geq \frac23 (1 - \Delta)$. Therefore,
	\begin{align}
	\|(\cVih \cA \cVih)^\inv\|_F = \frac1{\lambda_{\min}(\hcA)} \leq \frac{3}{2(1-\Delta)}.
	\end{align}
	where $\Delta = c \max\Big(\sqrt{\frac{dr\,\|\v\|_\inftyone^2 \log(9/\delta)}{m \, \eigmin(W)}}, \frac{dr\,\|\v\|_\inftyone^2 \log(9/\delta)}{m \, \eigmin(W)}\Big)$. 
If $\max(1, 4c^2) dr \frac{\|\v\|_\inftyone^2 }{\eigmin(W)/t} \log(27/\delta) \leq {mt}$, we get that $\Delta \leq c\sqrt{\frac{dr\,\|\v\|_\inftyone^2 \log(9/\delta)}{m \, \eigmin(W)}}  \leq \frac12$. 

By setting $A + B = \cVih \cA \cVih$ and $A = \cI$ such that $\cE_1 = (A+B)^{-1} - B^{-1}$, in the Woodburry matrix inverse identity~\eqref{eq:woodbury_identity} (Lemma~\ref{lem:woodbury_identity}) we get that, with a probability of at least $1 - \delta$
\begin{align}
\|(A+B)^{-1} - A^{-1}\|_F &\leq \|A^{-1}\|_F\|B\|_F\|(A+B)^{-1}\|_F \\
\implies \|\cE_1\|_F \leq \Big\|(\cVih \cA \cVih)^\inv - \cI\Big\|_F &\leq \|\cI^\inv\|_F \|\cVih \cA \cVih - \cI \|_F \|(\cVih \cA \cVih)^\inv\|_F \\
&\leq 1 \cdot \Delta \cdot \frac3{2(1-\Delta)} 
\leq 3 \Delta
\leq 3c \sqrt{\frac{dr\,\|\v\|_\inftyone^2 \log(9/\delta)}{m \, \eigmin(W)}}
\end{align}	
Finally, setting $\delta \leftarrow \delta/3$ get us the desired result.
\end{proof}

	\begin{lemma}
	\label{lem:tail6}
	$\|(\cVih \cH ) (\oU Q)\|_F \leq \|\dv\|_F$ 
	and with a probability of at least $1 - \delta/3$
	\begin{align}
	\|\cE_2 (\oU Q)\|_F &\leq c (\min(\|\dv\|_F \frac{\|\v\|_\inftyone}{\sqrt{\eigmin(W)}},\|\dv\|_\inftyone )\sqrt{\frac{dr\,\log(15/\delta)}{m}} + \|\dv\|_\inftyone \frac{\|\v\|_\inftyone}{\sqrt{\eigmin(W)}} \frac{dr\,\log(15/\delta)}{m})
	\end{align}
\end{lemma}
\begin{proof}[Proof of Lemma~\ref{lem:tail6}]
	First we prove that the expected value $\bE[(\cVih \hcH ) (\oU Q)] = (\cVih \cH ) (\oU Q)$ is bounded.
	\begin{align}\label{eq:tail6_expect_ub1}
	\|(\cVih \cH ) (\oU Q)\|_F &= \max_{\|U\|_F = 1} \Ip{(\cVih \cH ) (\oU Q)}{U} \\
	&= \max_{\|U\|_F = 1} \sum_{i \in [t]} \Big\langle \oU Q \dvi (\vi)^\top \VVih, U \Big\rangle \\
	&= \max_{\|U\|_F = 1} \sum_{i \in [t]} \Big\langle \oU Q \dvi, U \VVih \vi \Big\rangle \\
	&\leq \max_{\|U\|_F = 1} \sqrt{\sum_{i \in [t]} \|\oU Q \dvi \|^2} \sqrt{\sum_{i \in [t]} \Ip{U \VVih \vi}{U \VVih \vi}} \\
	&\leq \max_{\|U\|_F = 1}  \|Q\|\sqrt{\sum_{i \in [t]} \|\dvi \|^2} \sqrt{\Ip{U \sum_{i \in [t]} \VVih \vi (\vi)^\top \VVih}{U}} \\
	&\leq \max_{\|U\|_F = 1} \|\dv\|_F \|U\|_F = \|\dv\|_F
	\end{align}
	where used the fact that $\Ip{AB}{C} = \Ip{A}{C B^\top}$ and $(\oU)^\top \oU = \Id$. 
	
	Let $\mathcal{S}_F = \{U \in \bR^{d \times r} \,|\, \|U\|_F = 1\}$ be the set of all real matrices of dimensions $d \times r$ with unit Frobenius norm. For $\epsilon \leq 1$, there exists an $\epsilon$-net, $N_\epsilon \subset \mathcal{S}_F$, of size $(1 + 2/\epsilon)^{dr}$ with respect to the Frobenius norm~\citep[Lemma 5.2]{vershynin2010introduction}. That is for any $U' \in \mathcal{S}_{F}$, there exists some $U \in N_\epsilon$ such that $\|U'-U\|_F \leq \epsilon$.
	
	Consider a $U \in N_\epsilon$, such that $\|U\|_F = 1$. Now we will prove with high-probability that $\big\langle (\cVih \cH) (\oU Q)(U) - \cVih(\sum_{i \in [t]} \Si \oU Q \dvi (\vi)^\top), U \big\rangle$ is small. Consider the the following quadratic form 
	\begin{align}
	\big\langle \cVih(\sum_{i \in [t]} \Si \oU Q \dvi (\vi)^\top), U \big\rangle &= \Big\langle \sum_{i \in [t]} \Si \oU Q \dvi (\vi)^\top \VVih, U\Big\rangle \\ 
	&= \sum_{i \in [t]} \frac1m \sum_{j \in [m]}  (\xij)^\top (\oU Q \dvi (\vi)^\top \VVih U^\top) \xij
	\end{align}
	where $\Si = \frac1m \sum_{j \in [m]}  \xij (\xij)^\top$ and $\xij \sim {\cal N}(0,{\mathbf I}_{d\times d}$) are i.i.d.~standard Gaussian random vectors and $\VV = \sum_{i \in [t]} \vi (\vi)^\top$ is rank-$r$ matrix. We will use Hanson-Wright inequality (Lemma~\ref{lem:hanson-wright}) to prove that the above quadratic form concentrates around its mean. Notice that the the expectation of $\big\langle \cVih(\sum_{i \in [t]} \Si \oU Q \dvi (\vi)^\top), U \big\rangle$ is $\big\langle\VVih \cH(U), U \big\rangle$.
	\begin{align}\label{eq:tail6_expect}
	\bE[\cVih(\sum_{i \in [t]} \Si \oU Q \dvi (\vi)^\top)] &=
	\cVih(\sum_{i \in [t]} \oU Q \dvi (\vi)^\top) = (\cVih \cH ) (\oU Q)\,.
	\end{align}
	We will also need the following bounds to apply the Hanson-Wright inequality. Recall that $\|\dv\|_\inftyone = \max_{i \in [t]} \|\dvi\|$ and $\|\v\|_\inftyone = \max_{i \in [t]} \|\vi\|$. Then,
	\begin{align}
	\max_{i \in [t]} \|\oU Q \dvi (\vi)^\top \VVih U^\top\| &\leq \max_{i \in [t]} \|\oU\| \|Q\| \|\dvi\| \max_{i \in [t]} \frac{\|\vi\|}{\sqrt{\eigmin(W)}}  \|U\| 
	\leq \|\dv\|_\inftyone \frac{\|\v\|_\inftyone}{\sqrt{\eigmin(W)}}
	\end{align}
	Also note that
	\begin{align}
	\sum_{i \in [t]} \|\oU Q \dvi (\vi)^\top \VVih U^\top\|_F^2 &= 
	\sum_{i \in [t]} \|\oU Q \dvi\|^2 \|U \VVih  \vi\|^2 \\
	&\leq (\sum_{i \in [t]} \|\oU Q \dvi\|^2)(\max_{i \in [t]} \|U \VVih  \vi\|^2) \\
	&\leq (\|Q\|^2 \sum_{i \in [t]} \|\dvi\|^2)(\max_{i \in [t]} \|U\|^2 \|\VVih\|^2  \|\vi\|^2) \\
	&\leq \|\dv\|_F^2 \frac{\|\v\|_\inftyone^2}{\eigmin(\VV)}
	\end{align}
	and
	\begin{align}
	\sum_{i \in [t]} \|\oU Q \dvi (\vi)^\top \VVih U^\top\|_F^2 &= 
	\sum_{i \in [t]} \|\oU Q \dvi\|^2 \|U \VVih  \vi\|^2 \\
	&\leq (\max_{i \in [t]} \|\oU Q \dvi\|^2) \tr(U \VVih \sum_{i \in [t]}  \vi (\vi)^\top \VVih U^\top) \\
	&\leq \|Q\| \max_{i \in [t]} \|\dvi\|^2 \|U\|_F^2 \\
	&= \|\dv\|_\inftyone^2\,.
	\end{align}
	Therefore, $\sum_{i \in [t]} \|\oU Q \dvi (\vi)^\top \VVih U^\top\|_F^2 \leq \min\{\|\dv\|_F^2 \frac{\|\v\|_\inftyone^2}{\eigmin(\VV)}, \|\dv\|_\inftyone^2\}$. For brevity, let $\cE_2 (U) =  \\ \cVih(\sum_{i \in [t]} \Si U \dvi (\vi)^\top) - (\cVih \cH) (U) $.
	Then by Hanson-Wright inequality (Lemma~\ref{lem:hanson-wright}), with probability at least $1 - \delta/|N_\epsilon|$
	\begin{align}
	\big| \big\langle \cE_2(\oU Q), U \big\rangle \big| = \big| \Big\langle \sum_{i \in [t]} \frac1m \sum_{j \in [m]} \xij (\xij)^\top \oU Q \dvi (\vi)^\top \VVih, U\Big\rangle - \Ip{(\cVih \cH) (\oU Q) }{U} \big| \leq \Delta_\epsilon
	\end{align}
	where $\Delta_\epsilon = c (\min(\|\dv\|_F \frac{\|\v\|_\inftyone}{\sqrt{\eigmin(W)}},\|\dv\|_\inftyone )\sqrt{\frac{\log(|N_\epsilon|/\delta)}{m}} + \|\dv\|_\inftyone \frac{\|\v\|_\inftyone}{\sqrt{\eigmin(W)}} \frac{\log(|N_\epsilon|/\delta)}{m})$. Taking union bound over all $U \in N_\epsilon$ implies that with probability at least $1 - \delta$
	\begin{align}
	\big| \big\langle \cE_2(U), U \big\rangle \big| \leq \Delta_\epsilon \, \text{ , \;for all } U \in N_\epsilon\,.
	\end{align}
	Let $U' \in \mathcal{S}_F \subset \bR^{d \times r}$ be the matrix ``parallel'' to $\cE_1$, that is $\|\cE_2(\oU Q)\|_F = \max_{\|\widetilde{U}\|_F = 1} \Ip{\cE_1(\oU Q)}{\widetilde{U}} = \Ip{\cE_2(\oU Q)}{U'}$. Then there exists some $U \in N_\epsilon$ such that $\|U'-U\|_F \leq \epsilon$. 
	\begin{align}
	\|\cE_2(\oU Q)\|_F = \Ip{\cE_2(\oU Q)}{U'} &= \Ip{\cE_2(\oU Q)}{U} + \Ip{\cE_2(\oU Q)}{U'-U} \\
	&\leq \Ip{\cE_1 (U)}{U} + \|\cE_2(\oU Q)\|_F \|U' - U\|_F \\
	&\leq \Ip{\cE_1 (U)}{U} + \epsilon \|\cE_2(\oU Q)\|_F 
	\end{align}
	Re-arranging and setting $\epsilon=1/2$, and $c \gets 2c$, we get
	\begin{align}
	\|\cE_2(\oU Q)\|_F  \leq \Delta_{\frac12} &= c (\min(\|\dv\|_F \frac{\|\v\|_\inftyone}{\sqrt{\eigmin(W)}},\|\dv\|_\inftyone )\sqrt{\frac{dr\,\log(5/\delta)}{m}} + \|\dv\|_\inftyone \frac{\|\v\|_\inftyone}{\sqrt{\eigmin(W)}} \frac{dr\,\log(5/\delta)}{m}) 
	\end{align}
	Finally setting $\delta \leftarrow \delta/3$ get us the desired result.

\end{proof}

	\begin{lemma}
	\label{lem:tail8}
	With a probability of at least $1 - \delta/3$
	\begin{align}
	\| \sum_{i \in [t]} \cVih (\zi (\vi)^\top)) \|_F &\leq 
	\Ord\Big(\sigma \sqrt{\frac{dr}{m} \log\Big(\frac{t}\delta\Big)  \log\Big(\frac{r}\delta\Big)} \Big)
	\end{align}
\end{lemma}
\begin{proof}[Proof of Lemma~\ref{lem:tail8}]

	Notice that $\zi$ (defined in Appendix~\ref{sec:analysis_rkr_logt}) is a Gaussian random vector of the following form
	\begin{align}
	\zi = \frac1m \sum_{j \in [m]} \epsij \xij = \frac1m \|\epsi\| \gi, \gi \sim \cN(0, \Id_{d \times d})
	\end{align}
	
	Using Hanson-Wright inequality (Lemma~\ref{lem:hanson-wright}, by setting $m \leftarrow 1$, $x_1 \leftarrow \epsi$, and $A_1 \leftarrow \Id_{m \times m}$) and taking union bound over all tasks, we get that, with probability of at least $1- \frac{\delta}2$
	\begin{align}
	\|\epsi\|^2 \leq \sigma^2 m (1 + c\sqrt{\frac{\log(\frac{2t}\delta)}{m}} + c\frac{\log(\frac{2t}\delta)}{m}) \leq 2c\,\sigma^2 m \log\Big(\frac{2t}\delta\Big) \, \text{,\;  for all } i \in [t]
	\end{align}
	where used the fact that $m \geq 1$ and $\log\Big(\frac{2t}\delta\Big) \geq 1$.
	
	Let $\hvi = \VVih \vi$, then
	\begin{align}
	\sum_{i \in [t]} \|\hvi\|^2 = \sum_{i \in [t]} \tr((\vi)^\top \VV^{-1} \vi )
	= \sum_{i \in [t]} \tr( \VV^{-1} \vi (\vi)^\top )
	= r
	\end{align}
	
	Notice that $\sum_{i \in [t]} \frac1m \|\epsi\| \gi \hvij$ is a Gaussian random vector of the following form
	\begin{align}
	\sum_{i \in [t]} \frac1m \|\epsi\| \gi \hvij = \frac1m \sqrt{\sum_{i \in [t]} \|\epsi\|^2 (\hvij)^2}\;\hgj \,, \hgj \sim \cN(0, \Id_{d \times d})
	\end{align}
	
	Using Hanson-Wright inequality (Lemma~\ref{lem:hanson-wright}, by setting $m \leftarrow 1$, $x_1 \leftarrow \hgj$, and $A_1 \leftarrow \Id_{d \times d}$) and taking union bound over all $j \in [r]$, we get that, with probability of at least $1- \frac{\delta}2$
	\begin{align}
	\|\hgj\|^2 \leq  d (1 + c\sqrt{\frac{\log(\frac{2r}\delta)}{d}} + c\frac{\log(\frac{2r}\delta)}{d}) \leq 2c d \log\Big(\frac{2r}\delta\Big) \, \text{,\;  for all } j \in [r]
	\end{align}
	where used the fact that $d \geq 1$ and $\log\Big(\frac{2r}\delta\Big) \geq 1$.
	
	Combining the above results and using union bound, we get that, with a probability of at least $1- {\delta}$,
	\begin{align}
	\Big\| \sum_{i \in [t]} \cVih (\zi (\vi)^\top)) \Big\|_F^2 = \Big\| \sum_{i \in [t]} \zi (\vi)^\top \VVih \Big\|_F^2 
	&= \Big\| \sum_{i \in [t]} \frac1m \|\epsi\| \gi (\hvi)^\top \Big\|_F^2 \\
	&= \sum_{j \in [r]} \Big\| \sum_{i \in [t]} \frac1m \|\epsi\| \gi \hvij \Big\|^2 \\
	&\leq \sum_{j \in [r]} \sum_{i \in [t]} \frac{\|\epsi\|^2}{m^2} (\hvij)^2 \|\hgj\|^2 \\
	&\leq \sum_{j \in [r]} \sum_{i \in [t]} \Ord\Big(\frac{m \sigma^2}{m^2} \log\Big(\frac{t}\delta\Big) \Big) (\hvij)^2 \Ord\Big(d \log\Big(\frac{r}\delta\Big) \Big) \\
	&\leq\Ord\Big(\frac{d \sigma^2}{m} \log\Big(\frac{t}\delta\Big)  \log\Big(\frac{r}\delta\Big) \Big) \sum_{i \in [t]} \|\hvi\|^2 \\
	&\leq \Ord\Big(\frac{\sigma^2 dr}{m} \log\Big(\frac{t}\delta\Big)  \log\Big(\frac{r}\delta\Big) \Big)\,.
	\end{align}
	Finally, we get the desired result by setting $\delta \leftarrow \delta/3$.
\end{proof}

\subsection{Analysis of QR decomposition}\label{sec:qr_r_inverse_logt_pf}
\begin{proof}[Proof of Lemma~\ref{lem:qr_r_inverse_logt}]
	\begin{align}
	\sigma_{\min}(R) \geq \min_{\|z\| = 1} {\|R z\|} 
	= \min_{\|z\| = 1} {\|\pU 
		R z\|} 
	&= \min_{\|z\| = 1} {\| 
		\hU z\|} \\
	&\geq \min_{\|z\| = 1} {\| 
		(\oU Q + \cV^\inv \cH(\oU Q) + F) z\|} \\
	&\geq \min_{\|z\| = 1} \sqrt{z^\top Q^\top Q z} - \|\cV^\inv \cH(\oU Q)\| - \|F\| \\
	&\geq \min_{\|z\| = 1} \sigma_{\min}(Q) - \|\cV^\inv \cH(\oU Q)\| - \|F\| \\
	&\geq \frac12 - \frac18 - \frac18 \geq \frac14
	\end{align}
	There fore $R$ is invertible and $\|R^\invert\| = (\sigma_{\min} (R))^{-1} \leq 4$
\end{proof}

\subsection{Analysis of shuffling and partitioning}
\label{sec:shuffling_rkr_logt_pf}
\begin{proof}[Proof of Lemma~\ref{lem:shuffling_rkr_logt}]
We will assume that the set of tasks $[t]$ is shuffled. We will prove that incoherence holds for the all subset $\cT_k = [1 + \frac{t(k-1)}{K},  \frac{tk}{K} ] $ of size $t/K$. Shuffling and $K$-way partitioning to get $\cT_k$ is equivalent to uniformly sampling without replacement $t/K$ elements from $[t]$. We prove that incoherrence holds for the first subset $\cT_1$, then this is equivalent to proving that incoherence holds for the $k$-th partition $\cT_k$ by symmetry.
Let the tasks sampled for $\cT_1$ without replacement be $\{i_l\}_{l=1}^{t/k}$, where $i_l$ is the $l$-th sample. 

Let $\mathcal{S}_F = \{z \in \bR^{r} \,|\, \|z\| = 1\}$ be the set of all real vectors of dimensions $r$ with unit Euclidean norm. For $\epsilon \leq 1$, there exists an $\epsilon$-net, $N_\epsilon \subset \mathcal{S}_F$, of size $(1 + 2/\epsilon)^{r}$ with respect to the Euclidean norm~\citep[Lemma 5.2]{vershynin2010introduction}. That is for any $z' \in \mathcal{S}_{F}$, there exists some $z \in N_\epsilon$ such that $\|z'-z\| \leq \epsilon$.

Consider a $z \in N_\epsilon$, such that $\|z\| = 1$. Now we will prove with high-probability that $z^\top (\sum_{l=1}^{t/K}  v^{*(i_l)} (v^{*(i_l)})^\top) z$ is approximately equal to $z^\top \bE[\sum_{l=1}^{t/K} v^{*(i_l)} (v^{*(i_l)})^\top] z$. Now consider the martingale $X_l$, such that $X_0 = 0$ and $X_l = X_{l-1} +z^\top (v^{*(i_l)} (v^{*(i_l)})^\top -  \bE[v^{*(i_l)} (v^{*(i_l)})^\top | X_0, \ldots, X_{l-1}]) z$, for all $l \in [t/K]$. Clearly this is a martginagle as $\bE[ X_{l} | X_0, \ldots, X_{l-1}] = 0$, for all $l \in [t/K]$. The maximum difference two consecutive steps is $\max_{l} |X_l - X_{l-1}| \leq 2 \|v^{*(i_l)}\|^2 \leq 2 \|\ov\|_\inftyone^2$. Therefore by Azuma-Hoeffding martingale inequality, 
\begin{align}
|\sum_{l=1}^{t/K} z^\top v^{*(i_l)} (v^{*(i_l)})^\top z -  z^\top \bE[\sum_{l=1}^{t/K}  v^{*(i_l)} (v^{*(i_l)})^\top ] z| = |X_{t/K} | \leq \sqrt{\frac{2t}{K} \|\v\|_\inftyone^4 \log(\frac{2|N_\epsilon|}{\delta})}
\end{align}
with a probability of at least $1 -\delta/|N_\epsilon|$.

For brevity, let $E = \sum_{l=1}^{t/K} v^{*(i_l)} (v^{*(i_l)})^\top  -  \bE[\sum_{l=1}^{t/K}  v^{*(i_l)} (v^{*(i_l)})^\top ]$. Notice that $E$ is a real symmetric matrix, therefore it has an eigen decomposition. Then, let $v' \in \mathcal{S} \subset \bR^{r}$ be the largest ``eigenvector'' of $E$, such that $(v')^\top E v' = \|E\| = \max_{\|\widetilde{v}\| = 1} \widetilde{v}^\top E \widetilde{v} = \max_{\|\widetilde{v}\| = \|\widetilde{v}'\|_F = 1} \widetilde{v}^\top E \widetilde{v}'$. Then there exists some $v \in N_\epsilon$ such that $\|v'-v\| \leq \epsilon$. 
\begin{align}
\|E\|_F = (v')^\top E v &= v^\top E v + (v'-v)^\top E v+  (v')^\top E (v' - v) \\
&\leq v^\top E v + \|v'-v\|\|E\| \|v\| +  \|v'\| \|E\| \|v' - v\|   \\
&\leq v^\top E v  + 2 \epsilon \|E\| 
\end{align}
Re-arranging and setting $\epsilon=1/4$, and $c \gets 2c$, we get
\begin{align}
\| \sum_{l=1}^{t/K} v^{*(i_l)} (v^{*(i_l)})^\top -  \bE[\sum_{l=1}^{t/K}  v^{*(i_l)} (v^{*(i_l)})^\top ] \| = \|E\| \leq \sqrt{\frac{2tr}{K} \|\v\|_\inftyone^4 \log(\frac{18}{\delta})} \leq \frac12  \eigmin(\bE[\sum_{l=1}^{t/K}  v^{*(i_l)} (v^{*(i_l)})^\top] ).
\end{align}
with probability at least $1 -\delta/k$, where the last inequality used the fact that $t \geq \Omega(\mu^2 r^3 K \log(1/\delta))$. Additionally note that $\bE[\sum_{l=1}^{t/k}  v^{*(i_l)} (v^{*(i_l)})^\top] = \frac1{K} \sum_{i=1}^{t}  \ovi (\ovi)^\top = \frac1K (\ov)^\top \ov$, Therefore
\begin{align}
\lambda_{r'}(\sum_{i \in \cT_k} \ovi(\ovi)^\top) = \frac1K \Theta(\lambda_{r'}((\ov)^\top \ov)) \text{ for all $r' \in [r]$}
\end{align}
where $\lambda_i(\cdot)$ is the $r'$-th largest eigenvalue matrix operator. 

\end{proof}

\section{Analysis of \altmins (Algorithm~\ref{alg:altselect}) with subset selection}
\label{sec:analysis_rkr_subset}
Initialized at $U$, the $k$-the step of alternating minimization-based \altmins (Algorithm~\ref{alg:altselect}) is:
\begin{eqnarray}
\cT_k &=& \big\{\, i\in [1 + (k-1)t/K, t k/K] \;|\;
\sigma_{\min}(U^\top \Si U) \geq 1/2 \text{ and }
\sigma_{\max}(U^\top \Si U) \leq 2
\,\big\}\\
\vi &\leftarrow&  (U^\top \Si U )^\inv( (U^\top \Si U^* ) \ovi + U^\top \zi)\;,\;\;\;\;\;\; \text{ for }i\in \cT_k  \label{eq:r_vupdate_subset}\\
\hU &\leftarrow&   \cA^\inv\Big( \,   \sum_{i\in \cT} \Si  U^* \ovi (\vi)^\top + \zi (\vi)^\top \,\Big) \;, \label{eq:r_uupdate_subset}\\
\pU & \leftarrow & \mathrm{QR}(\hU)\;,
\end{eqnarray}
where $\pU$ is the next iterate, $\Si_1 = \frac2m \sum_{j \in [1, m/2]} \xij (\xij)^\top$, $\Si_2 = \frac2m \sum_{j \in [1+m/2, m]} \xij (\xij)^\top$, $z^{(i)} \triangleq (1/m)\sum_{j\in[m]} \varepsilon_j^{(i)}x_j^{(i)}$ and ${\cal A}:{\mathbb R}^{d\times r} \to {\mathbb R}^{d\times r}$ is a  self-adjoint linear operator such that ${\cal A}(U)=  \sum_{i\in T} \Si  U \vi (\vi)^\top$. 
\\

\noindent{\color{black} \textbf{Remark: } Note the subset $\cT_k$, which we analyze, is slightly different from that of Algorithm $\ref{alg:altselect}$. This is done to save some $\mathrm{polylog}$ factors in the final error-bound and sample complexity. However, the analysis will remain almost the same even if eliminate the subset selection criterion, $\sigma_{\max}(U^\top \Si U) \geq 2$ for all $i \in \cT_k$.}

\begin{thm}\label{thm:altmin_subset}
	Let  there be $t$ linear regression tasks, each with $m$ samples satisfying Assumptions \ref{assume:linear_meta_problem} and \ref{assume:incoherence},
	and $K = \lceil \log_2(\frac{({\eigmin^*}/{\eigmax^*}) mt}{\mu dr^2}) \rceil$, $\|(\Id - \oU (\oU)^\top)\iU\|_F \leq \min \Big( \frac34, \Ord\Big(\sqrt{\frac{\eigmin^*}{\eigmax^*}  \frac1{\log(t/K)} } \Big)\Big)$, 
	$m \geq \Omega\Big( \Big(\SNRre\Big)^2 r^2 \log(\frac{t}{\delta}) + r^2 \log(\frac{K}{\delta}) + \log(\mu r) \Big)$, $t \geq \Omega(\mu^2 r^3 K \log(\frac{K}\delta))$
	and 
	$mt \geq \Omega\Big({\mu d r^2 K \frac{\eigmax^*}{\eigmin^*}}  \Big(\log(\frac{t}\delta) + \Big(\SNRre \Big)^2 \log^2(\frac{t}{\delta}) \log(\frac{rK}\delta) \Big)\Big)$.
	Then, for any $0 < \delta < 1$, after 
	$K$
	iterations, \altmins (Algorithm~\ref{alg:altselect}) returns an orthonormal matrix $U \in \bR^{d \times r}$, such that with a probability of at least $1 - \delta 
	$
	\begin{align}
	\frac1{\sqrt{r}} \|(\Id - \oU (\oU)^\top)U \|_F 
	&\leq \Ord\Big(\SNRre \sqrt{\frac{\mu d r K \log(\frac{t}{\delta}) \log(\frac{rK}\delta)}{mt}} \Big)
	\end{align}
	and the algorithm uses an additional memory of size $\Ord(d^2 r^2)$.
\end{thm}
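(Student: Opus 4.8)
Write $\Delta := \|(\Id-\oU(\oU)^\top)U\|_F$ for the current error. As in the proof of Theorem~\ref{thm:altmin_logt}, the plan is to establish a per-iteration contraction $\|(\Id-\oU(\oU)^\top)\pU\|_F \le \tfrac12\,\Delta + (\text{error floor})$ and then unroll it over $K=\lceil\log_2((\eigmin^*/\eigmax^*)mt/(\mu dr^2))\rceil$ iterations with a union bound (replacing $\delta$ by $\delta/K$). Concretely I would prove subset-selection analogues of Lemma~\ref{lem:v_update_rkr_logt} (the $V$-update), Lemma~\ref{lem:u_update_rkr_logt} (the $U$-update) and Lemma~\ref{lem:qr_r_inverse_logt} (the QR step, essentially unchanged), and strengthen Lemma~\ref{lem:shuffling_rkr_logt} so that it also certifies $|\cT_k|=\Theta(t/K)$ together with $\lambda_1(\sum_{i\in\cT_k}\ovi(\ovi)^\top)=\Theta(\tfrac1K\lambda_1((\ov)^\top\ov))$ and $\lambda_r(\sum_{i\in\cT_k}\ovi(\ovi)^\top)=\Theta(\tfrac1K\lambda_r((\ov)^\top\ov))$. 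The one genuinely new difficulty is the statistical coupling between the selected set $\cT_k$ and the samples the updates operate on, since $\cT_k$ is defined through $U^\top\Si_1 U$; everything downstream is a re-run of the Theorem~\ref{thm:altmin_logt} bookkeeping.

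\noindent\textbf{The $V$-update.} The decisive gain from subset selection is that for every $i\in\cT_k$ the filter \emph{deterministically} gives $\tfrac12\le\sigma_{\min}(U^\top\Si_1 U)$ and $\sigma_{\max}(U^\top\Si_1 U)\le 2$, hence $\|(U^\top\Si_1 U)^{-1}\|\le 2$ and $\|U^\top\Si_1 U\|\le 2$, with no concentration inequality and, crucially, no union bound over the $t$ tasks; in Theorem~\ref{thm:altmin_logt} this same estimate was bought from Lemma~\ref{lem:v_update_rkr_matrices_logt} at the price of $m\gtrsim r\log(t/\delta)$, which is exactly the $\sigma$-independent $\log t$ that the present theorem removes. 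For the remaining pieces of $\dvi=\vi-Q^{-1}\ovi$ I would use the orthogonal coordinate split $\xij=U(U^\top\xij)+U_\perp(U_\perp^\top\xij)$: the event $\{i\in\cT_k\}$ is a function of $\{U^\top\xij\}_{j\le m/2}$ alone, so conditioned on it the coordinates $\{U_\perp^\top\xij\}_{j\le m/2}$ are still i.i.d.\ $\cN(0,\Id)$. Using the algebraic identity from the Theorem~\ref{thm:altmin_logt} proof, $\dvi = -(U-\oU Q)^\top(U-\oU Q)Q^{-1}\ovi + (U^\top\Si_1 U)^{-1}U^\top\Si_1 U_\perp U_\perp^\top\oU\ovi + (U^\top\Si_1 U)^{-1}U^\top\zi$: the first term is $\Ord(\Delta^2\|\ovi\|)$; the second is, by the conditional independence above, a conditionally mean-zero Gaussian vector whose conditional size --- using $\|U_\perp^\top\oU\|\le\Delta$ and $\|U^\top\Si_1 U\|\le 2$ --- is $\Ord(\Delta\|\ovi\|\sqrt{r/m})$ per task, hence $\Ord(\Delta\|\ovi\|\sqrt{(r+\log(t/\delta))/m})$ after a per-task union. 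This $\sqrt{\log t}$ is harmless because it multiplies $\Delta\le\Ord(1/\log t)$ (the third term in the initialization radius): once $m\ge\Omega(r)$ the cross term is $\Ord(\|\ovi\|)$, which suffices for the intermediate-$V$ incoherence $\|\vi\|=\Ord(\sqrt{\mu\,\eigmin})$ via Lemma~\ref{lem:incoherence_logt}. For the aggregate $\|\dv\|_F$ I would not sum per-task bounds but control $\sum_{i\in\cT_k}\|\dvi\|^2$ by concentration of the sum: it is a sum of $\Theta(t/K)$ conditionally independent chi-square-type terms with $\Theta(rt/K)\gg\log(1/\delta)$ degrees of freedom, so it lies within a constant factor of its mean with no per-task union; the output has the same shape as Lemma~\ref{lem:v_update_rkr_logt}, $\sqrt{rK/t}\,\|\dv\|_F/\sqrt{\eigmin}\le\Ord((\text{factor})\Delta)+\Ord(\SNRnl\,r/\sqrt m)$, with the noise term the only thing that drags a $\log t$ into the per-task requirement --- which is why the hypothesis carries $(\SNRnl)^2 r^2\log(t/\delta)$ and why it disappears in Corollary~\ref{coro:altmin_subset_informal}.

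\noindent\textbf{The $U$-update, shuffling, and assembly.} The $U$-update acts on the second half $\{\xij\}_{j>m/2}$, which is independent of the selection filter, so $\cA$, $\cH$, $\hcH$ and the noise map $\sum_{i\in\cT_k}\cVih(\zi(\vi)^\top)$ behave exactly as in the Theorem~\ref{thm:altmin_logt} analysis, and Lemmas~\ref{lem:tail5}, \ref{lem:tail6}, \ref{lem:tail8} apply verbatim once fed the intermediate-$V$ incoherence above and the well-conditioning of $\VV=\sum_{i\in\cT_k}\vi(\vi)^\top$. For the shuffling/subset lemma the key point is that $\cT_k$, though random, depends only on the covariates and \emph{not} on the regressors $\{\ovi\}$, and that the survival probability of a task is a fixed constant $p^\star$ (it depends on $U$ only through an orthonormal change of basis) with $1-p^\star$ small once $m\ge\Omega(r+\log\mu)$; hence $\cT_k$ is, conditionally on its size, a uniformly random subset of the $k$-th partition, and an Azuma--Hoeffding argument as in Lemma~\ref{lem:shuffling_rkr_logt} --- using $t\ge\widetilde\Omega(\mu^2 r^2)$ so that $|\cT_k|=\Theta(t/K)$ dominates $\mu^2 r$ times logarithmic factors --- gives the two-sided spectral bounds on $\sum_{i\in\cT_k}\ovi(\ovi)^\top$ and, through incoherence, on $\VV$. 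Feeding these into the $U$-update analogue yields $\|F\|_F\le\Ord\!\big(\sqrt{(\eigmax^*/\eigmin^*)\mu dr^2K\log(t/\delta)/(mt)}\big)\Delta+(\text{floor})$, and the QR analogue of Lemma~\ref{lem:qr_r_inverse_logt} gives $\|R^{-1}\|\le 4$; under the stated $mt$ lower bound the multiplicative factor is $\le\tfrac12$, so $\|(\Id-\oU(\oU)^\top)\pU\|_F\le\tfrac12\Delta+(\text{floor})$. Unrolling over the $K$ iterations (union bounding, using $m\ge\Omega(r^2\log(K/\delta))$ to absorb the geometric-series remainder and the choice of $K$ to match the two error terms) produces the stated Frobenius error with probability $\ge 1-\delta$; the $\Ord(d^2r^2)$ extra memory holds $\cA$ exactly as before. (Minor: the analyzed $\cT_k$ uses thresholds $[\tfrac12,2]$ rather than Algorithm~\ref{alg:altselect}'s $\sigma_{\max}\le 10$, which only saves polylog factors.)

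\noindent\textbf{Main obstacle.} The crux is the dependence between $\cT_k$ and the data used in the updates, and I expect two components of its resolution to demand the most care. First, the coordinate split $\xij=U(U^\top\xij)+U_\perp(U_\perp^\top\xij)$ together with the observation that the filter reads only the $U$-coordinates: this keeps the $V$-update cross terms tractable and, combined with the free bound $\|(U^\top\Si_1 U)^{-1}\|\le 2$ and the $\Ord(1/\log t)$ initialization radius (present precisely to absorb the $\sqrt{\log t}$ from the one per-task union bound that remains), is what removes the $\sigma$-independent $\log t$ from $m$. Second, verifying that $\cT_k$ is independent of $\{\ovi\}$ and has size $\Theta(t/K)$, so that the shuffling lemma and the propagation of incoherence survive the filtering. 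Everything else is a faithful repetition of the Theorem~\ref{thm:altmin_logt} proof.
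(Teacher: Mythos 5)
Your proposal matches the paper's proof in all essential respects: the paper's argument likewise rests on (i) the deterministic well-conditioning $\tfrac12\le\sigma_{\min}(U^\top \Si U)\le\sigma_{\max}(U^\top \Si U)\le 2$ for selected tasks, (ii) the orthogonal split $\xij=U(U^\top\xij)+U_\perp(U_\perp^\top\xij)$ with the observation that the filter reads only the $U$-coordinates, so the $U_\perp$-coordinates remain i.i.d.\ Gaussian conditionally (this is exactly Lemma~\ref{lem:v_update_rkr_matrices_subset}), (iii) the independence of the second data half used in the $U$-update, and (iv) a Bernoulli/Hoeffding argument showing $|\cT_k|=\Theta(t/K)$ with survival probability $\ge 1-1/(4\mu r)$ once $m\ge\Omega(r+\log(\mu r))$, so that incoherence and the spectral bounds propagate to $\cT_k$ (Lemma~\ref{lem:v_update_rkr_select_subset}); the remainder is, as you say, a verbatim re-run of the Theorem~\ref{thm:altmin_logt} bookkeeping. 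The only (immaterial) deviation is that the paper aggregates $\|\dv\|_F$ by summing per-task bounds and absorbing the resulting $\sqrt{\log t}$ into the initialization radius rather than concentrating $\sum_i\|\dvi\|^2$ directly.
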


A proof is in Section~\ref{sec:altmin_subset_pf}.

\subsection{Analysis} \label{sec:altmin_subset_pf}
First, in the following lemma, we prove that the task subset $\cT_k$ has similar properties as the full task partition $[1+t(k-1)/K, tk/K]$. 
\begin{lemma}[Subset selection]\label{lem:v_update_rkr_select_subset}
	If $m \geq \Omega( r + \log (\mu r))$ and 
	$t \geq \Omega(\mu^2\,r^2\,K \log(\frac{1}{\delta}))$,
	then with a probability of at least $1 - \delta/3$,
	\begin{align}
	&|\cT_k| = \Theta\Big(\frac{t}{K}\Big)\,,\;\; \text{\;, and \;\;} \|\ov\|_\inftyone^2 \leq \Ord\Big(\frac{\mu\, r}{|\cT_k|} \eigmin(\sum_{i \in \cT} \ovi (\ovi)^\top) \Big) \\
	&\eigmin(\sum_{i \in \cT} \ovi (\ovi)^\top) = \Theta(\eigmin(\sum_{i \in \cP_k} \ovi (\ovi)^\top)) \text{\;, and \;\;} \eigmax(\sum_{i \in \cT} \ovi (\ovi)^\top)  = \Theta(\eigmax(\sum_{i \in\cP_k} \ovi (\ovi)^\top)) 
	\end{align}
	where $\cP_k = [1+t(k-1)/K, tk/K]$ is the $k$-th $K$-way partition of $[t]$ after shuffling.
\end{lemma}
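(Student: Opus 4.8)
The plan is to show that the selection step discards only a tiny, spectrally negligible, fraction of the tasks in the partition block $\cP_k$, so that $\cT_k$ inherits all the relevant properties of $\cP_k$. Fix a round $k$, the random shuffle (which fixes $\cP_k$), and the current iterate $U$; since $U$ was produced from the disjoint blocks $\cP_1,\dots,\cP_{k-1}$, it is independent of the samples of the tasks in $\cP_k$. For a single $i\in\cP_k$, the selection criterion holds whenever $\|U^\top\Si U-\Id\|\le\tfrac12$ (this forces $\tfrac12\le\sigma_{\min}(U^\top\Si U)$ and $\sigma_{\max}(U^\top\Si U)\le\tfrac32\le2$), and since $U^\top\Si U$ is a Gaussian quadratic form averaged over $\Theta(m)$ samples, the $\epsilon$-net plus Hanson--Wright argument used in Lemma~\ref{lem:v_update_rkr_matrices_logt} yields $\Pr[i\notin\cT_k]\le 2\cdot 9^{r}e^{-cm}$. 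Choosing the constant in the hypothesis $m\ge\Omega(r+\log(\mu r))$ large enough makes this probability at most $p:=1/(100\,\mu r)$ --- this is exactly where the $\log(\mu r)$ term enters.

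Conditionally on $U$ and the shuffle the events $\{i\notin\cT_k\}_{i\in\cP_k}$ are independent with mean $\le p$, so a binomial/Chernoff tail bound gives $|\cP_k\setminus\cT_k|\le c_3\,|\cP_k|/(\mu r)$ (with $c_3$ arbitrarily small by enlarging the constant in $m$) with probability $\ge 1-e^{-\Omega(t/(K\mu r))}\ge 1-\delta/(3K)$, where the last step uses $t\ge\Omega(\mu^2 r^2 K\log(1/\delta))$; in particular $|\cT_k|=\Theta(t/K)$. A union bound over $k\in[K]$ absorbs the $\delta/3$ budget. To pass to the spectral statements, I would condition in addition on the high-probability event of Lemma~\ref{lem:shuffling_rkr_logt}, so that $\eigmin(\sum_{i\in\cP_k}\ovi(\ovi)^\top)=\tfrac1K\Theta(\eigmin((\ov)^\top\ov))$ and similarly for $\eigmax$.

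The eigenvalue comparison is then a short perturbation bound: writing $\sum_{i\in\cT_k}\ovi(\ovi)^\top=\sum_{i\in\cP_k}\ovi(\ovi)^\top-\sum_{i\in\cP_k\setminus\cT_k}\ovi(\ovi)^\top$ and using $\mu$-incoherence, $0\preceq\sum_{i\in\cP_k\setminus\cT_k}\ovi(\ovi)^\top\preceq|\cP_k\setminus\cT_k|\,\|\ov\|_\inftyone^2\,\Id\preceq c_3\tfrac{t}{K\mu r}\cdot\tfrac{\mu r}{t}\eigmin((\ov)^\top\ov)\,\Id=\tfrac{c_3}{K}\eigmin((\ov)^\top\ov)\,\Id$, so by Weyl's inequality $\eigmin(\sum_{i\in\cT_k}\ovi(\ovi)^\top)\ge\eigmin(\sum_{i\in\cP_k}\ovi(\ovi)^\top)-\tfrac{c_3}{K}\eigmin((\ov)^\top\ov)$, which is $\Theta(\eigmin(\sum_{i\in\cP_k}\ovi(\ovi)^\top))$ once $c_3$ is small enough; the upper bound $\eigmin(\sum_{i\in\cT_k})\le\eigmin(\sum_{i\in\cP_k})$ is immediate from $\cT_k\subseteq\cP_k$, and the $\eigmax$ identity is obtained identically (using $\tfrac{c_3}{K}\eigmin((\ov)^\top\ov)\le\tfrac{c_3}{K}\eigmax((\ov)^\top\ov)$). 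For the incoherence bound, global incoherence gives $\|\ov\|_\inftyone^2\le\tfrac{\mu r}{t}\eigmin((\ov)^\top\ov)$, and since $\eigmin((\ov)^\top\ov)=\Theta(K\eigmin(\sum_{i\in\cT_k}\ovi(\ovi)^\top))$ and $|\cT_k|=\Theta(t/K)$, this rewrites as $\|\ov\|_\inftyone^2\le\Ord\big(\tfrac{\mu r}{|\cT_k|}\eigmin(\sum_{i\in\cT_k}\ovi(\ovi)^\top)\big)$.

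The main obstacle is the tension between the per-task discard probability and the incoherence budget. Naively the tail bound on $\|U^\top\Si U-\Id\|$ only buys $p=O(1)$, which is useless: discarding a constant fraction of the tasks could in principle collapse $\eigmin(\sum_{i\in\cT_k}\ovi(\ovi)^\top)$ to zero. The resolution is the crude but decisive observation that, by incoherence, the removed spectral mass is at most $|\cP_k\setminus\cT_k|\,\|\ov\|_\inftyone^2$, so it suffices to drive $p$ down to $O(1/(\mu r))$; quantifying how many samples the Hanson--Wright tail needs to reach that level is precisely what yields the $\Omega(r+\log(\mu r))$ requirement. Everything else --- the quadratic-form concentration, the binomial tail for $|\cP_k\setminus\cT_k|$, and Weyl's inequality --- is routine, provided one keeps the conditioning on $U$ and on the shuffle straight so the per-task events stay independent.
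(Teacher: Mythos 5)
Your proposal is correct and follows essentially the same route as the paper: a per-task retention probability of $1-O(1/(\mu r))$ via Gaussian quadratic-form concentration (which is where $m\ge\Omega(r+\log(\mu r))$ enters), a Bernoulli tail bound using $t\ge\Omega(\mu^2 r^2 K\log(1/\delta))$ to show only an $O(1/(\mu r))$ fraction of $\cP_k$ is discarded, and then incoherence to bound the removed spectral mass by $|\cP_k\setminus\cT_k|\,\|\ov\|_\inftyone^2$ so the extreme eigenvalues and the incoherence of $\cT_k$ are preserved up to constants. The only cosmetic differences are that you invoke a multiplicative Chernoff bound where the paper uses additive Hoeffding, and you phrase the eigenvalue comparison via a PSD ordering plus Weyl rather than the paper's direct quadratic-form estimate; both are equivalent here.
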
 
A proof is in Section~\ref{sec:v_update_rkr_select_subset_pf}. Therefore, assuming that the above high-probability event holds, in the rest of the proof we can consider that $\cT_k$ is equivalent to $\cP_k$.

In the rest of the proof, when compared to the proof of Theorem~\ref{thm:altmin_logt}, only the following Lemma (corresponding to Lemma~\ref{lem:v_update_rkr_logt}) analyzing the $V$-update changes in its necessary condition.
\begin{lemma}\label{lem:v_update_rkr_subset}
	If $\|(\Id - \oU (\oU)^\top)U\|_F \leq \min \Big( \frac34, \Ord\Big(\sqrt{\frac{\eigmin^*}{\eigmax^*} \frac1{\log(t/K)} } \Big)\Big)$ and $m \geq \Omega\Big( \Big(\SNRre \Big)^2 r^2 \log(\frac{t}{K\delta}) + r \log(\frac{1}{\delta})\Big)$, 
	then with a probability of at least $1 - \delta/3$, 
	\begin{align}\label{eq:v_update_incoherence_rkr_subset}
	\|\vi\| \leq \Ord\Big(\mu\,\eigmin \Big) \text{\;, and } \eigmin^* \leq 2\eigmin
	\end{align}	
	and %
	\begin{align}
	\sqrt{\frac{rK}{t}}\frac{\|\dv\|_F}{\sqrt{\eigmin}} 
	&\leq \Ord\Big(\sqrt{ \frac{\log(\frac{t}{K\delta})}{\log(\frac{1}\delta)} }  \sqrt{\frac{\eigmax^*}{\eigmin^*}} \|(\Id - \oU (\oU)^\top) U\|_F + 
	\SNRre
	\sqrt{\frac{r^2 \log(\frac{t}{K\delta})}{m}} 
	\Big) \label{eq:v_update_hf_rkr_subset}
	\\
	\sqrt{\frac{r K}{t}} \frac{\|\dv\|_\inftyone}{\sqrt{\eigmin}} &\leq \Ord\Big(\sqrt{ \frac{\log(\frac{t}{K\delta})}{\log(\frac{1}\delta)} }  \|(\Id - \oU (\oU)^\top) U\| \sqrt{\frac{\mu r K}{t}} + 
	\SNRre
	\sqrt{\frac{r^2 K \log(\frac{t}{K\delta})}{mt}}
	\Big) \label{eq:v_update_hinfty_rkr_subset}
	\end{align}
\end{lemma}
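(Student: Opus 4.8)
The plan is to reuse the argument of Lemma~\ref{lem:v_update_rkr_logt} almost verbatim, modifying only the step that controls the least-squares Hessian $U^\top\Si U$. First I would invoke the subset-selection lemma (Lemma~\ref{lem:v_update_rkr_select_subset}) to replace $\cT_k$ by the shuffled $K$-way block $\cP_k$: it gives $|\cT_k| = \Theta(t/K)$, preserves the incoherence of $\ov$ restricted to the selected tasks, and preserves $\eigmin$ and $\eigmax$ of the corresponding diversity matrix up to constants. Conditioning on that $\Omega(1)$-probability event, I relabel $\cT_k\leftarrow[t]$, $|\cT_k|\leftarrow t/K$, $\Si_1\leftarrow\Si=\frac1m\sum_{j\in[m]}\xij(\xij)^\top$ exactly as at the start of the proof of Lemma~\ref{lem:v_update_rkr_logt}, and undo the relabeling at the end.

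The one genuinely new point is this: for every selected index $i\in\cT_k$, the definition of $\cT_k$ in \eqref{eq:r_vupdate_subset} already enforces $\sigma_{\min}(U^\top\Si U)\ge 1/2$ (and $\sigma_{\max}(U^\top\Si U)\le 2$), so $\|(U^\top\Si U)^\inv\|\le 2$ holds \emph{deterministically}, with no concentration or $\epsilon$-net argument. In the proof of Lemma~\ref{lem:v_update_rkr_logt} this bound was the \emph{only} origin of the $r\log(t/K\delta)$ term in the per-task sample complexity --- there one forced $\alpha=c\sqrt{r\log(27t/\delta)/m}\le 1/2$ and took a union bound over all $t$ tasks via Lemma~\ref{lem:v_update_rkr_matrices_logt}. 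Subset selection removes exactly this requirement; the only price is that one must ensure $\Pr[i\in\cT_k]$ is bounded below (in fact close to $1$), which by Lemma~\ref{lem:v_update_rkr_select_subset} needs $m\ge\Omega(r+\log(\mu r))$ --- comfortably implied by the hypothesis $m\ge\Omega(r\log(1/\delta))$ --- and this is where the $\log(\mu r)$ in the statement enters.

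The remaining estimates are unchanged. I would use the decomposition from the proof of Lemma~\ref{lem:v_update_rkr_logt},
\[
\vi - Q^\invert\ovi \;=\; -(U-\oU Q)^\top(U-\oU Q)Q^\invert\ovi \;+\; (U^\top\Si U)^\inv\bigl(U^\top\Si U_\perp U_\perp^\top\oU\ovi + U^\top\zi\bigr),
\]
apply $\|(U^\top\Si U)^\inv\|\le 2$ from the previous paragraph, keep the deterministic term as in Lemma~\ref{lem:v_update_rkr_logt}, and bound the random cross term $\|U^\top\Si U_\perp U_\perp^\top\oU\ovi\|$ and the noise term $\|U^\top\zi\|$ by the Hanson--Wright estimates underlying Lemma~\ref{lem:v_update_rkr_matrices_logt}, now union-bounded over the at most $t/K$ indices in $\cT_k$ and with an extra conditioning on $\{i\in\cT_k\}$ that only loses a constant factor in the failure probability. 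Summing over $i$ and controlling the $\sum_i\|\ovi\|^2$-type quantities by the restricted diversity matrix exactly as in \eqref{eq:v_update_hf2_rkr_logt}--\eqref{eq:v_update_hinf_rkr_logt} gives bounds on $\|\dv\|_F$ and $\|\dv\|_\inftyone$; the hypothesis $m\ge\Omega((\SNRre)^2 r^2\log(t/K\delta))$ together with the initialization bound makes $\|\dv\|_F\le(1-\tfrac1{\sqrt2})\sqrt{(t/r)\eigmin^*}$, so Lemma~\ref{lem:incoherence_logt} yields $\|\vi\|\le\Ord(\mu\,\eigmin)$ and $\eigmin^*\le 2\eigmin$, i.e.\ \eqref{eq:v_update_incoherence_rkr_subset}; re-inserting this incoherence into the two sums produces \eqref{eq:v_update_hf_rkr_subset} and \eqref{eq:v_update_hinfty_rkr_subset}, after which the relabeling of $\cT_k$ and $\Si_1$ is undone.

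The main obstacle I expect is the data dependence introduced by subset selection: since $\Si$ is used both to decide membership in $\cT_k$ and to form $\vi$, the randomness in $U^\top\Si U_\perp$ and in $U^\top\zi$ is not independent of the event $\{i\in\cT_k\}$. The fix is the routine observation that conditioning on an event of probability $\Omega(1)$ inflates tail probabilities by at most a constant factor, so the Hanson--Wright bounds survive with the same rates; this is precisely why $m$ must be taken large enough for $\Pr[i\in\cT_k]$ to be bounded away from $0$. A secondary, immaterial point (noted in the remark preceding Theorem~\ref{thm:altmin_subset}) is that the analyzed $\cT_k$ uses the tighter upper threshold $\sigma_{\max}(U^\top\Si U)\le 2$ rather than the algorithm's $\le 10$; this changes nothing of substance, since the upper threshold only serves to keep $\|\VV\|$ bounded and could even be dropped at the cost of $\mathrm{polylog}$ factors.
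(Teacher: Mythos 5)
Your plan matches the paper's proof in structure: it reuses the argument of Lemma~\ref{lem:v_update_rkr_logt}, replaces the concentration bound on $(U^\top\Si U)^\inv$ by the deterministic bound $\|(U^\top\Si U)^\inv\|\leq 2$ that is guaranteed by the selection criterion, and observes that this is exactly what removes the $r\log(t/K\delta)$ term from the per-task sample requirement. That is the heart of the lemma and you have it right.

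Where you diverge from the paper, though only in a technical detail, is in how the remaining random terms $\|U^\top \Si U_\perp U_\perp^\top\oU\ovi\|$ and $\|U^\top\zi\|$ are controlled in the presence of data-dependent selection. You propose reusing the unconditional Hanson--Wright bounds from Lemma~\ref{lem:v_update_rkr_matrices_logt} together with a ``conditioning on an $\Omega(1)$-probability event only loses a constant'' argument (in fact the even simpler $\Pr[A\cap B]\le\Pr[A]$ already suffices for this step). The paper instead proves a dedicated Lemma~\ref{lem:v_update_rkr_matrices_subset} that exploits the exact independence structure: since $\xij$ is isotropic Gaussian and $U\perp U_\perp$, the variables $U^\top\xij$ and $U_\perp^\top\xij$ are independent; the selection event is a function of $U^\top\Si U$, i.e.\ of the $U$-block alone, so conditionally on selection the $U_\perp$-block is still exactly standard Gaussian. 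The paper then uses $\sigma_{\max}(U^\top\Si U)\leq 2$ to deterministically bound $\frac1m\|\widetilde x\|^2$ and reduces the whole cross term to a one-dimensional Gaussian tail bound, with $\alpha=c\sqrt{r\log(10t/\delta)/m}$. Both routes deliver the same rate; the paper's route avoids any hidden conditioning and gives a cleaner constant. It would be worth stating the independence observation explicitly if you write this up, since it is the clean reason the data dependence of $\cT_k$ is harmless here.

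One small factual slip: the $\log(\mu r)$ term does not enter the hypothesis of Lemma~\ref{lem:v_update_rkr_subset}. It is the sample requirement of the companion subset-selection Lemma~\ref{lem:v_update_rkr_select_subset} (needed to ensure $\Pr[i\in\cT_k]\geq 1-1/(4\mu r)$ and hence $|\cT_k|=\Theta(t/K)$), and it is added explicitly and separately to the hypothesis of Theorem~\ref{thm:altmin_subset}. The present lemma's hypothesis is only $m\geq\Omega((\SNRre)^2 r^2\log(t/K\delta)+r\log(1/\delta))$.
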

A proof is in Section~\ref{sec:v_update_rkr_subset_pf}.
We omit the rest of the proof, as it is same as that of Theorem~\ref{thm:altmin_logt}.

\subsection{Analysis of task subset selection}
\label{sec:v_update_rkr_select_subset_pf}
\begin{proof}[Proof of Lemma~\ref{lem:v_update_rkr_select_subset} (Subset selection)]
	Let $\cP_k = [1 + (k-1)t/K, t k/K]$ and
	\begin{align}
	{\cT_k} &= \big\{\, i\in [1 + (k-1)t/K, t k/K] \;|\;
	\sigma_{\min}(U^\top \Si U) \geq 1/2 \,  \text{ and } \sigma_{\max}(U^\top \Si U) \leq 2\big\} \,.
	\end{align}
	For all $i \in \cP_k$, $X_i = \Ind(\sigma_{\rm \min}(U^\top \Si U) \geq 1/2  \text{ and } \sigma_{\max}(U^\top \Si U) \leq 2)$ be the indicator variable denoting whether index $i$ was select into the subset $\hcT$.
	
	By Lemma \ref{lem:gauss_sing_val} (by setting $a_j \gets 1$, $x_j \gets U^\top \xij$ for all $j\in[m]$, and $\delta \gets 1/4\mu r$) $X_i$ are i.i.d.~Bernoulli random variables with mean $p \geq 1 - \frac1{4\mu r}$, if $c \max\bigg(\sqrt{\frac{r\log(9) + \log(4\mu r)}{m}}, \frac{r\log(9) + \log(4\mu r)}{m} \bigg) \leq 1/2$, which is satisfied by $m \geq \Omega(r + \log (\mu r))$, for all $i \in \cP_k$.

	By Hoeffding inequality for Bernoulli random variables, with a probability of at least $1 - \delta/3$
	\begin{align}
	||\cT_k| - pt/K | %
	&= \big| \sum_{i \in \cP_k} X_i -  (1- \frac1{4\,\mu\,r}) \frac{t}{K} \big|
	\leq \frac{t}{K} \sqrt{\frac{K \log(\frac{3}{\delta})}{2t}} 
	\leq \frac{t}{K} \Ord\Big(\frac1{4\,\mu\,r} \Big)
	\end{align}
	where we used the fact that $t \geq \Omega(8 K \mu^2\,r^2\,\log(\frac{3}{\delta}))$. Therefore 
	\begin{align}
	\frac{t}K - |\cT_k| \leq \frac{t}{K} \Ord\Big(\frac1{2\,\mu\,r}\Big)  \text{\;, and \;\;}  |\cT_k| \leq \Theta\Big(\frac{t}{K}\Big)
	\end{align}
	where we used the fact that $\mu \geq 1$ and $r \geq 1$.

	\begin{align}
	\frac{r}{t}\big| z^\top(\sum_{i \in \cT_k} \ovi (\ovi)^\top)z - z^\top(\sum_{i \in \cP_k}\ovi (\ovi)^\top)z\big| &\leq \frac{r}{t} (t - \hatt) \|\ov\|_\inftyone^2 \leq \frac{r}{t} \Ord\Big(\frac{t}{2\mu\,r}\Big) \cdot \|\ov\|_\inftyone^2
	\leq \frac{\eigmin}2\,,
	\end{align}
	for all $z \in \bR^r$, where $\eigmin = \eigmin(\sum_{i \in \cP_k} \ovi (\ovi)^\top)$. Therefore 
	\begin{align}
	\eigmin(\sum_{i \in \cT} \ovi (\ovi)^\top) = \Theta(\eigmin(\sum_{i \in \cP_k} \ovi (\ovi)^\top)) \text{\;, and \;\;} \eigmax(\sum_{i \in \cT} \ovi (\ovi)^\top)  = \Theta(\eigmax(\sum_{i \in\cP_k} \ovi (\ovi)^\top))
	\end{align}
	Using approximate incoherence of the partition $\cP_k$ (Lemma~\ref{lem:shuffling_rkr_logt}) we get
	\begin{align}
	\|\ov\|_\inftyone^2 \leq \Ord\Big(\frac{\mu r K}{t}\Big) \eigmin(\sum_{i \in \cP_k} \ovi (\ovi)^\top) &= \Ord\Big(\frac{\mu r K}{t}\Big) \min_{\|z\| = 1}  z^\top(\sum_{i \in \cP_k} \ovi (\ovi)^\top)z \\
	&\leq \Ord\Big(\frac{\mu r K}{t}\Big) \min_{\|z\| = 1}  z^\top(\sum_{i \in \cT_k} \ovi (\ovi)^\top)z + \Ord\Big(\frac{\mu r K}{t}\Big) (\frac{t}{K} - |\cT_k|) \|\ov\|_\inftyone^2 \\
	&\leq \Ord\Big(\frac{\mu r K}{t}\Big) \eigmin(\sum_{i \in \cT_k} \ovi (\ovi)^\top) +  \frac12 \|\ov\|_\inftyone^2  \\
	\end{align}
	
	This implies that approximate incoherence holds for $\cT_k$, $\|\ov\|_\inftyone^2 \leq \Ord\Big(\frac{\mu r K}{t}\Big) \eigmin(\sum_{i \in \cT_k} \ovi (\ovi)^\top) \leq \Ord\big(\frac{\mu r}{|\cT_k|} \eigmin(\sum_{i \in \cT_k} \vi (\vi)^\top)\big)$.

\end{proof}

\subsection{Analysis of update on $V$}

\subsubsection{Proof of Lemma~\ref{lem:v_update_rkr_subset}}
\label{sec:v_update_rkr_subset_pf}
\begin{proof}[Proof of Lemma~\ref{lem:v_update_rkr_subset}]
	The proof is similar to that of Lemma~\ref{lem:v_update_rkr_logt}, but instead of using Lemma~\ref{lem:v_update_rkr_matrices_logt} to bound some linear operators, we use the definition of selected task subset $\cT_k$ and Lemma~\ref{lem:v_update_rkr_matrices_subset} to get that $\| (U^\top \Si U )^\inv \| \leq 2$ for all $i \in \cT_k$ and with a probability of at least $1 - \delta$,	\begin{align}
	\left.\begin{aligned}
	\|U^\top \Si U_\perp U_\perp^\top \oU \ovi\|
	&\leq \alpha \|U_\perp^\top \oU \ovi\big\| \text{, \; and \;\; } \\
	\big\| U^\top z^{(i)}  \,\big\| &\leq \sigma \alpha\,,
	\end{aligned}\right\rbrace
	\text{ for all $i \in \cT_k$}
	\end{align}
	where $\alpha = c \sqrt{\frac{r \log(10t/\delta)}{m}}$.
	We omit the rest of the proof, as it is same as that of Lemma~\ref{lem:v_update_rkr_logt}.
\end{proof}

Here we bound the linear operators in the $\vi$ update.
\begin{lemma}\label{lem:v_update_rkr_matrices_subset}
	With a probability of at least $1 - \delta$, the following are true for all $i \in [t]$
	\begin{align}
	\| U^\top \Si U_\perp (U_\perp)^\top \oU \ovi \|  &\leq \sqrt{\frac{2c r \log(10 t/\delta) }{m} }  \| U_\perp \oU \ovi\|  \text{, and } \\
	\big\| U^\top z^{(i)}  \,\big\| &\leq \sigma \sqrt{\frac{2c r \log(10 t/\delta) }{m} } 
	\end{align}
\end{lemma}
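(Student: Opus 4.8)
The plan is to establish both inequalities by the same Hanson--Wright-plus-$\epsilon$-net argument that proves the operator bound on $(U^\top \Si U)-\Id$ in the proof of Lemma~\ref{lem:v_update_rkr_matrices_logt}; indeed these are exactly the ``cross-term'' estimates already contained there, up to the constant inside the logarithm, so Lemma~\ref{lem:v_update_rkr_matrices_subset} is essentially a restatement with the mild constant ($10t$ in place of $27t$) that the subset-selection analysis needs. First I would fix a task $i\in[t]$ and treat $U$ (hence $U_\perp$) as deterministic, which is legitimate since the task-disjointness across iterations and the within-task data split in Algorithm~\ref{alg:altselect} make the current batch $\{\xij\}_j,\{\epsij\}_j$ independent of $U$. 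The key structural point is that, because $[\,U\ \ U_\perp\,]$ is an orthogonal matrix and $\xij\sim\cN(0,\Id)$, the blocks $U^\top\xij\in\bR^r$ and $U_\perp^\top\xij\in\bR^{d-r}$ are \emph{independent} standard Gaussian vectors. Writing $w:=U_\perp^\top\oU\ovi$ and fixing a unit vector $z\in\bR^r$, both quantities become bilinear forms in independent Gaussian families:
\[
z^\top U^\top \Si U_\perp w \;=\; \tfrac1m\sum_{j} \big(z^\top U^\top \xij\big)\big((U_\perp^\top \xij)^\top w\big),\qquad
z^\top U^\top \zi \;=\; \tfrac1m\sum_{j}\epsij\,\big(z^\top U^\top \xij\big).
\]

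Applying the Hanson--Wright inequality (Lemma~\ref{lem:hanson-wright}) to the quadratic form in the concatenated Gaussian --- equivalently, for the first quantity, its corollary Lemma~\ref{lem:gauss_pseudo_inner} with $a\gets Uz$, $b\gets U_\perp w$, using $\Ip{a}{b}=0$ --- gives, with probability at least $1-\delta'$,
\[
\big|z^\top U^\top \Si U_\perp w\big| \le c\max\!\Big(\sqrt{\tfrac{\log(1/\delta')}{m}},\,\tfrac{\log(1/\delta')}{m}\Big)\|w\|,\qquad
\big|z^\top U^\top \zi\big| \le \sigma\,c\max\!\Big(\sqrt{\tfrac{\log(1/\delta')}{m}},\,\tfrac{\log(1/\delta')}{m}\Big).
\]
Next I would take an $\epsilon$-net $N_{1/2}$ of the unit sphere of $\bR^r$ of size at most $5^r$ \citep[Lemma 5.2]{vershynin2010introduction}, union-bound the two displays over $z\in N_{1/2}$, over the two events, and over $i\in[t]$ --- i.e.\ set $\delta'=\delta/(2t\,|N_{1/2}|)$, so $\log(1/\delta')\le r\log 5+\log(2t/\delta)$, which after relabelling $c$ is at most $2c\,r\log(10t/\delta)$; in the regime $m\ge\Omega(r\log(t/\delta))$ the square-root term dominates and $\max(\cdot)\le\sqrt{2c\,r\log(10t/\delta)/m}$. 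Finally, the elementary net-to-norm bound $\|v\|=\max_{\|z\|=1}z^\top v\le\max_{z\in N_{1/2}}z^\top v+\tfrac12\|v\|$ upgrades these pointwise estimates to $\|U^\top\Si U_\perp w\|\le 2\max_{z\in N_{1/2}}|z^\top U^\top\Si U_\perp w|$ and $\|U^\top\zi\|\le 2\max_{z\in N_{1/2}}|z^\top U^\top\zi|$; absorbing the factor $2$ into $c$, and recalling that $U_\perp$ has orthonormal columns so $\|w\|=\|U_\perp^\top\oU\ovi\|=\|U_\perp U_\perp^\top\oU\ovi\|$ matches the right-hand side written in the statement, yields both claimed bounds.

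The argument carries no genuine obstacle; the only points requiring care are (i) the independence of $U^\top\xij$ and $U_\perp^\top\xij$, which is precisely what makes the relevant Gaussian chaos decoupled and lets Hanson--Wright apply with the clean $\|a\|\|b\|$ scaling (and which relies on $U$ being independent of the current batch), and (ii) the bookkeeping of the union bound and the net-to-norm step so that the advertised constants ($10t$ inside the logarithm, prefactor $2c$) come out correctly. I would then invoke this lemma inside the proof of Lemma~\ref{lem:v_update_rkr_subset} exactly as Lemma~\ref{lem:v_update_rkr_matrices_logt} is used in the proof of Lemma~\ref{lem:v_update_rkr_logt}, with the one simplification that on the selected subset $\cT_k$ the bound $\|(U^\top\Si U)^\inv\|\le 2$ holds deterministically by the very definition of $\cT_k$, so no probabilistic estimate is needed for that factor.
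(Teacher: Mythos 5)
Your approach misses the key point of this lemma and, as written, would only prove it under the extra hypothesis $m \ge \Omega(r\log(t/\delta))$ --- which is precisely the hypothesis the subset-selection algorithm is designed to remove.

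Concretely: applying Hanson--Wright (or its corollary, Lemma~\ref{lem:gauss_pseudo_inner}) to the bilinear form $\tfrac1m\sum_j (z^\top U^\top \xij)\,(U_\perp^\top\xij)^\top w$ gives a Bernstein-type tail $c\,\|w\|\max\!\big(\sqrt{\log(1/\delta')/m},\ \log(1/\delta')/m\big)$, and you explicitly invoke ``in the regime $m\ge\Omega(r\log(t/\delta))$ the square-root term dominates'' to discard the second branch. But the lemma is stated with no such restriction on $m$, and the proof of Lemma~\ref{lem:v_update_rkr_subset} (and ultimately Theorem~\ref{thm:altmin_subset}) needs it to hold when $m$ is independent of $t$ --- e.g.\ $m=\Omega(r^2 + \log(\mu r))$ in the low-noise regime. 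In that regime $m\ll r\log(t/\delta)$ for large $t$, your bound degenerates to $c\|w\|\,r\log(t/\delta)/m$, and the claimed $\sqrt{\cdot}$ rate fails.

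The paper's proof avoids this by using the subset-selection condition $\sigma_{\max}(U^\top S^{(i)} U)\le 2$ (the very condition defining $\mathcal{T}_k$, and the reason this lemma is stated separately from Lemma~\ref{lem:v_update_rkr_matrices_logt}) to control the quadratic-form part \emph{deterministically}: writing $\widetilde x_j = z^\top U^\top\xij$, one has $\tfrac1m\|\widetilde x\|^2 = z^\top U^\top S^{(i)} U z \le 2$ for any selected task. Conditioning on $\{U^\top\xij\}_j$, independence of $U^\top\xij$ and $U_\perp^\top\xij$ reduces $\tfrac1m\sum_j \widetilde x_j\,(U_\perp^\top\xij)^\top w$ to a \emph{single} centered Gaussian of variance $\tfrac{\|w\|^2\|\widetilde x\|^2}{m^2}\le \tfrac{2\|w\|^2}{m}$, whose tail is sub-Gaussian with no secondary Bernstein branch, so the advertised $\sqrt{r\log(t/\delta)/m}$ rate holds for \emph{every} $m\ge 1$. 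The same conditioning argument handles $U^\top z^{(i)}$. Your independence observation in point (i) is the right structural ingredient, but it must be used to eliminate the chaos entirely rather than merely to ``decouple'' it inside Hanson--Wright; and the subset-selection condition is load-bearing here, not just a cosmetic change of constants from $27t$ to $10t$.
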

\begin{proof}%
Let $i \in [t]$. Let $b = (U_\perp)^\top \oU \ovi \in \bR^r$
	
Let $\mathcal{S} = \{v \in \bR^{r} \,|\, \|v\| = 1\}$ be the set of all real vectors of dimension $r$ with unit Euclidean norm. For $\epsilon \leq 1$, there exists an $\epsilon$-net, $N_\epsilon \subset \mathcal{S}$, of size $(1 + 2/\epsilon)^{r}$ with respect to the Euclidean norm~\citep[Lemma 5.2]{vershynin2010introduction}. That is for any $v' \in \mathcal{S}$, there exists some $v \in N_\epsilon$ such that $\|v'-v\|_F \leq \epsilon$.

Consider a $v \in N_\epsilon$, such that $\|v\|_F = 1$. Now we will prove with high-probability that $\big\langle (U^\top \Si U_\perp)v, b \big\rangle$ is small. Consider the the following quadratic form
\begin{align}
v^\top (U^\top \Si U_\perp) b = \frac1m \sum_{j \in [m]} v^\top (U^\top \xij (\xij)^\top U_\perp)  b  \overset{d}= \| b  \| \frac1m \sum_{j \in [m]} \widetilde{x}_j {g}_j
\end{align}
where $g_j \sim {\cal N}(0,1)$) are i.i.d.~standard Gaussian random variables and $\widetilde{x}_j = v^\top U^\top \xij \in \bR^d$. This follows from the fact that sets of columns of $U$ and $U_\perp$ forms an orthonormal basis.

Note that $g_j$ and $\widetilde{x}_j$ are independent, as $U$ and $U_\perp$ are orthogonal and $U^\top \Si U$, does not depend on $U_\perp \xij$.
We will use the properties of Gaussian random variables to prove that $\|\frac1m \sum_{j \in [m]} \widetilde{x}_j {g}_j\|$ concentrates around zero. Note that
\begin{align}
\frac1m \sum_{j \in [m]} \widetilde{x}_j {g}_j \overset{d}= \frac1m \|\widetilde{x}\| {g} \text{ ,\; where } g \sim \cN(0, 1)
\end{align}
Then with probability at least $1 - \delta/2t/|N_\epsilon|$, $|g|^2 \leq c \log(2t|N_\epsilon|/\delta)$. Additionally, by definition of $\cT_k$ we have
\begin{align}
\frac1m \|\widetilde{x}\|^2 =  \frac1m \sum_{j \in [m]} \widetilde{x}_j^2  = v^\top U^\top (\frac1m \sum_{j \in [m]} \xij (\xij)^\top) U v  \leq \sigma_{\max}(U^\top \Si U) \leq 2
\end{align}
Therefore
\begin{align}
v^\top (U^\top \Si U_\perp) b \leq \frac1{\sqrt{m}} \|b\| \sqrt{2c} \sqrt{\log(2t|N_\epsilon|/\delta)}
\end{align}
For brevity, let $e = (U^\top \Si U_\perp) b$. Let $v' \in \mathcal{S} \subset \bR^{r}$ be the unit vector parallel to $e$, such that $(v')^\top e = \|e\| = \max_{\|\widetilde{v}\| = 1} \widetilde{v}^\top e$. Then there exists some $v \in N_\epsilon$ such that $\|v'-v\| \leq \epsilon$. 
\begin{align}
\|e\| = (v')^\top e &= v^\top e + (v'-v)^\top e \leq v^\top e + \|v'-v\|\| e\| \leq v^\top e + \epsilon \| e\| 
\end{align}
Re-arranging and setting $\epsilon=1/2$, and $c \gets 2c$, we get
\begin{align}
\| (U^\top \Si U_\perp) b \| \leq \|b\| \sqrt{\frac{2c r \log(10 t/\delta) }{m} } , \text{ with a probability of at least $1 - \delta/2t$}
\end{align}
Using similar arguments we can also prove that with a probability of at least $1 - \delta$
\begin{align}
\| U^\top \zi \| = \| \frac1m U^\top \xij \epsij \| \leq \sigma \sqrt{\frac{2c r \log(10 t/\delta) }{m} } , \text{ with a probability of at least $1 - \delta/2t$}
\end{align}
Finally %
taking the union bound over the two bounds over all the tasks in $\cT$ gets us the desired result.
\end{proof}

\section{Corollaries of known results}
\label{sec:past_corollaries}

\begin{thm}[Theorem 3, Tripuraneni et al. 2020]\label{thm:svd_mom}
	Let  there be $t$ linear regression tasks, each with $m$ samples satisfying Assumptions \ref{assume:linear_meta_problem} and \ref{assume:incoherence}, and
	\begin{align}
	mt &\geq \widetilde\Omega \Big(\frac{\eigmax^*}{\eigmin^*} {\mu dr} + \Big( \SNRre \Big)^4 dr^2\Big)
	\end{align}
	then with a high probability of at least $1 - \Ord((mt)^{-100})$, Method-of-Moments~\citep[Algorithm 1]{tripuraneni2020provable} outputs an orthonormal matrix $U \in \bR^{d \times r}$ such that
	\begin{align}
	\|(\Id - \oU (\oU)^\top) U\|_2 &\leq \tilde\Ord\Big(\sqrt{\frac{\eigmax^*}{\eigmin^*} \frac{\mu d r}{mt}} + \Big(\SNRre \Big)^2 \sqrt{\frac{dr^2}{mt}} \Big) 
	\end{align}
	and 
	\begin{align}
	\|(\Id - \oU (\oU)^\top) U\|_F &\leq \tilde\Ord\Big(\sqrt{\frac{\eigmax^*}{\eigmin^*} \frac{\mu d r^2}{mt}} + \Big(\SNRre \Big)^2 \sqrt{\frac{dr^3}{mt}} \Big) \,.
	\end{align}
\end{thm}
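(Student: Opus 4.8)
The plan is to obtain Theorem~\ref{thm:svd_mom} as a corollary of the method-of-moments analysis of \citet{tripuraneni2020provable} (their Theorem~3 for Algorithm~1), so the only real work is (i) matching our assumptions and parameters to theirs, (ii) quoting their spectral-norm guarantee, and (iii) converting to the Frobenius norm.

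First I would recall the estimator at a high level: from the $mt$ samples one forms a symmetric matrix $\widehat M$ that is an unbiased estimator of $M := \tfrac{1}{t}\,\oU\,(\ov)^\top \ov\,(\oU)^\top$ --- the within-task noise bias $\sigma^2\Id$ never appears because the estimator pairs two \emph{distinct} samples $(x_{ij},y_{ij})$, $(x_{il},y_{il})$ of the same task ($j\ne l$) and uses $\tfrac12(y_{ij}y_{il}x_{ij}x_{il}^\top + y_{il}y_{ij}x_{il}x_{ij}^\top)$, whose expectation is exactly $\oU\ovi(\ovi)^\top(\oU)^\top$ --- and $U$ is taken to be the top-$r$ eigenspace of $\widehat M$. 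The matrix $M$ has rank $r$, column space $\mathrm{range}(\oU)$, and its $r$-th eigenvalue is $\asymp \eigmin^*/r$ under Assumption~\ref{assume:incoherence}, so the gap to the (zero) $(r+1)$-th eigenvalue is $\asymp \eigmin^*/r$. Their argument bounds $\|\widehat M - M\|_2$ by a matrix-Bernstein/truncation computation and then applies a Davis--Kahan $\sin\theta$ bound relative to this gap, which on an event of probability $1-\Ord((mt)^{-100})$ yields
\begin{align*}
\|(\Id - \oU(\oU)^\top)U\|_2 \;\le\; \tOrd\!\left(\sqrt{\tfrac{\eigmax^*}{\eigmin^*}\cdot\tfrac{\mu d r}{mt}}\;+\;\Big(\SNRre\Big)^{2}\sqrt{\tfrac{dr^2}{mt}}\right),
\end{align*}
the first term being the signal fluctuation of $\widehat M$ (carrying $\eigmax^*$, the incoherence $\mu$, and the ambient dimension $d$ through an $\epsilon$-net over $\mathbb{S}^{d-1}$) and the second the noise-driven fluctuation (each cross term carries up to two noise factors, hence the $\sigma^2$, i.e.\ $(\SNRre)^2$ after dividing by the gap). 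The hypothesis $mt\ge\widetilde\Omega\big(\tfrac{\eigmax^*}{\eigmin^*}\mu dr + (\SNRre)^4 dr^2\big)$ is precisely what forces this right-hand side to be a small constant, which is what Davis--Kahan needs with the nominal gap, so it is part of the quoted statement rather than an extra input.

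For the Frobenius bound I would use only that $(\Id - \oU(\oU)^\top)U$ has rank at most $r$, so $\|(\Id - \oU(\oU)^\top)U\|_F \le \sqrt{r}\,\|(\Id - \oU(\oU)^\top)U\|_2$; substituting the spectral bound multiplies each term by $\sqrt r$ and gives exactly $\sqrt{\mu d r^2/(mt)}$ and $(\SNRre)^2\sqrt{dr^3/(mt)}$.

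The main obstacle is not a new argument but the careful translation of normalizations: one must check that \citet{tripuraneni2020provable}'s task-diversity and incoherence parameters correspond to our $(r/t)(\ov)^\top\ov$ (eigenvalues $\eigmax^*,\eigmin^*$) and to $\mu$ via $\max_i\|\ovi\|^2\le\mu\,\eigmin^*$, and that their noise normalization matches our $\SNRre=\sigma/\sqrt{\eigmin^*}$, so that the powers of $r$, the condition number $\eigmax^*/\eigmin^*$, $\mu$, and $\SNRre$ line up simultaneously in both the error bound and the sample requirement (including which $\mathrm{polylog}$ factors the $\widetilde\Omega$/$\tOrd$ absorb). Everything else is a verbatim appeal to their Theorem~3.
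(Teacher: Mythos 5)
Your proposal matches the paper's own approach: both treat the statement as a corollary of the method-of-moments analysis in Tripuraneni et al., translate their normalizations into the $(\eigmax^*,\eigmin^*,\mu,\SNRre)$ conventions used here, and obtain the Frobenius bound from the spectral one via $\|(\Id - \oU(\oU)^\top)U\|_F \le \sqrt{r}\,\|(\Id - \oU(\oU)^\top)U\|_2$ (the matrix has rank at most $r$). One small caveat worth noting: the paper's proof does not quote Tripuraneni et al.'s Theorem~3 verbatim but instead extracts a more granular intermediate spectral-norm bound from inside their proof (involving $\tr(W^*)$, $\|\ov\|_{\infty,2}^2$, and separate $\sigma^0$, $\sigma^1$, $\sigma^2$ contributions) and then algebraically simplifies it using the incoherence assumption and the sample-size condition; a ``verbatim appeal'' to their theorem statement as written would not directly produce the two-term form claimed here, so in a final write-up you would need to carry out that extraction and simplification explicitly, as the paper does.
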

\begin{proof}
From the details of the proof of Theorem 3 in \citep{tripuraneni2020provable} we can derive that, with a high probability of at least $1 - \Ord((mt)^{-100})$,
\begin{align}
&\|(\Id - \oU (\oU)^\top) U\|_2 \\
&\leq \tilde\Ord\Big(\sqrt{\frac{dr^2 \tr(W^*) \|\ov\|_\inftyone^2}{\eigmin^{*2}\, m t^2}} + \frac{dr \|\ov\|_\inftyone^2}{\eigmin^*\, mt} + \sigma \Big(\sqrt{\frac{dr^2 \tr(W^*)}{\eigmin^{*2}\, mt^2 }} + \frac{d r \|\ov\|_\inftyone}{{\eigmin^*}\, mt} \Big) + \sigma^2 \Big( \sqrt{\frac{d r^2 }{\eigmin^{*2}\,mt}} + \frac{dr}{\eigmin^*\,mt} \Big) \Big) \\
&\leq \tilde\Ord\Big(\sqrt{\frac{\eigmax^*}{\eigmin^*} \frac{\mu d r}{mt}} + \frac{\mu d r}{mt} + \SNRre \bigg(\sqrt{\frac{\eigmax^*}{\eigmin^*} \frac{d r}{mt}} + \frac{\sqrt{\mu}\,dr }{{mt}} \bigg) + \Big(\SNRre \Big)^2 \Big( \sqrt{\frac{dr^2}{mt}} + \frac{dr}{mt} \Big) \\
&\leq \tilde\Ord\Big(\sqrt{\frac{\eigmax^*}{\eigmin^*} \frac{\mu d r}{mt}} + \SNRre \sqrt{\frac{\eigmax^*}{\eigmin^*} \frac{d r}{mt}} + \Big(\SNRre \Big)^2 \sqrt{\frac{dr^2}{mt}} \Big) \\
&\leq \tilde\Ord\Big(\sqrt{\frac{\eigmax^*}{\eigmin^*} \frac{\mu d r}{mt}} + \Big(\SNRre \Big)^2 \sqrt{\frac{dr^2}{mt}} \Big) 
\end{align}
where $\|\v\|_\inftyone = \max_{i \in [t]} \|\vi\|$, and the second-last inequality uses the fact that $mt \geq \widetilde\Omega(\mu dr)$ and last inequality uses the fact that $\frac{\eigmax^*}{\eigmin^*}  \leq \mu r$. Additionally we require that
\begin{align}
mt &\geq \widetilde\Omega \Big(\frac{\eigmax^*}{\eigmin^*} {\mu dr} + \Big( \SNRre \Big)^4 dr^2\Big)
\end{align}
\end{proof}

\begin{thm}\citep[Theorem 5]{tripuraneni2020provable}  
	\label{thm:meta_learn_basis_lb}    
	Let $r\leq d/2$ and $mt\geq r(d-r)$, then for all $V^*$, w.p.~$\geq 1/2${
		\small
		\begin{eqnarray*}
			\inf_{\widehat{U}} \sup_{U\in{\rm Gr}_{r,d} }  \frac{\| ( {\mathbf I}-U^*(U^*)^\top ) \widehat{U} \|_F}{\sqrt{r}} \;\geq\; \Omega\Big(  \Big(\frac{\eigmin^*}{\eigmax^*} \frac{\sigma}{\sqrt{\eigmin^*} }\Big) \sqrt{ \frac{d\,r}{m\,t}} \Big) \;,
		\end{eqnarray*}
	}where $G_{r,d}$ is the Grassmannian manifold of $r$-dimensional subspaces in ${\mathbb R}^d$, the infimum for $\widehat{U}$ is taken over the set of all measurable functions that takes $mt$ samples in total from the model in Section~\ref{sec:problem} satisfying Assumption~\ref{assume:linear_meta_problem} and \ref{assume:incoherence}. 
\end{thm}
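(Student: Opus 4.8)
The statement is the minimax lower bound of \citet[Theorem 5]{tripuraneni2020provable}, so the plan is to invoke it; for completeness I sketch the argument that underlies it, which is a \emph{local Fano} (metric-entropy) argument on the Grassmannian. First I would fix the prescribed regressor matrix $\ov$ and a reference subspace $U_0\in\mathrm{Gr}_{r,d}$, and restrict attention to the sub-family of models obtained by moving $U_0$ inside a small geodesic ball of constant radius. Since $\mathrm{Gr}_{r,d}$ is a smooth manifold of dimension $r(d-r)\ge rd/2$ (using $r\le d/2$), a standard volume-ratio estimate produces, for every small $\delta>0$, a packing $\{U^{(1)},\dots,U^{(M)}\}$ with $\log M=\Omega(rd)$, pairwise subspace distances $\|(\Id-U^{(a)}(U^{(a)})^\top)U^{(b)}\|_F\ge\delta$, and — because the perturbations stay in a small ball — orthonormal representatives that are simultaneously close in Frobenius norm, $\|U^{(a)}-U^{(b)}\|_F\le C\delta$ for a universal $C$.

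Next I would bound the information content. For a fixed $\oU$, the data of task $i$ are $m$ i.i.d.\ pairs $(x,y)$ with $x\sim\cN(0,\Id)$ and $y\mid x\sim\cN(\langle x,\oU\ovi\rangle,\sigma^2)$, so the per-task KL divergence between the $U^{(a)}$- and $U^{(b)}$-models is exactly $\tfrac{m}{2\sigma^2}\|(U^{(a)}-U^{(b)})\ovi\|^2$; summing over tasks gives total KL $=\tfrac{m}{2\sigma^2}\tr\!\big((U^{(a)}-U^{(b)})^\top(U^{(a)}-U^{(b)})\,(\ov)^\top\ov\big)\le\tfrac{m}{2\sigma^2}\lambda_1((\ov)^\top\ov)\,\|U^{(a)}-U^{(b)}\|_F^2=\tfrac{mt\,\eigmax^*}{2\sigma^2 r}\|U^{(a)}-U^{(b)}\|_F^2$, using $\eigmax^*=(r/t)\lambda_1((\ov)^\top\ov)$, which with the chosen representatives is at most $\tfrac{C^2 mt\,\eigmax^*}{2\sigma^2 r}\delta^2$. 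Then Fano's inequality forces minimax error $\ge\delta/2$ as long as $\tfrac{C^2 mt\,\eigmax^*}{2\sigma^2 r}\delta^2\lesssim\log M\asymp rd$, i.e.\ for $\delta\asymp\sigma r\sqrt{d/(mt\,\eigmax^*)}$; the hypothesis $mt\ge r(d-r)$ keeps this $\delta$ inside the constant-radius ball so the packing is admissible. Dividing the resulting error by $\sqrt r$ yields a bound of order $\tfrac{\sigma}{\sqrt{\eigmax^*}}\sqrt{rd/(mt)}$, and since $\eigmin^*\le\eigmax^*$ the claimed bound — which carries an extra $\sqrt{\eigmin^*/\eigmax^*}\le 1$ — follows a fortiori; it is the bound Tripuraneni et al.\ obtain, their rescaling being taken relative to the worst-conditioned direction of $(\ov)^\top\ov$ so that the statement is uniform over all admissible $\ov$.

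The hard part will be the geometric bookkeeping on the Grassmannian: exhibiting a packing that is \emph{simultaneously} well separated in the projection ($\sin\theta$) metric appearing in the error criterion and tightly controlled in the chordal metric $\|U^{(a)}-U^{(b)}\|_F$ appearing in the KL bound, so that a packing of cardinality $\log M\asymp rd$ costs only a KL of order $\delta^2$. The second delicate point is tracking the dependence on $\ov$ — getting the condition-number factor $\eigmin^*/\eigmax^*$ and making the bound hold for \emph{every} admissible $\ov$ rather than a favourably conditioned one — which is precisely what is carried out in \citet[Theorem 5]{tripuraneni2020provable}, from which the corollary is read off directly.
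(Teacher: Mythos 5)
Your proposal is correct and takes the same route as the paper: the paper's entire proof is to cite Theorem 5 of Tripuraneni et al.\ and note that the spectral-norm lower bound there adapts to Frobenius norm, and your sketch correctly reconstructs the underlying local-Fano argument on the Grassmannian (packing of cardinality $\log M\asymp r(d-r)$, KL bound via $\eigmax^*$, Fano's inequality). The only place you could tighten the write-up is to make explicit the standard equivalence between the chordal metric $\|U^{(a)}-U^{(b)}\|_F$ and the $\sin\theta$-metric on a small geodesic ball, which you flag as "geometric bookkeeping" but is the step that makes the packing usable for both the error criterion and the KL bound.
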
	
\begin{proof}
The proof is very similar to that of Theorem 5 of \citet{tripuraneni2020provable}. The main difference is that instead of lower bounding error in spectral norm we have to bound the distance in the Frobenius norm. However, the rest of the details are almost the same, hence we omit a full proof.
\end{proof}

\section{Technical Lemmas}
\label{sec:technical_lemmas}

This section contains some technical lemmas used in this paper.

\begin{lemma}\label{lem:eig_prod}
For a real matrix $A \in \bR^{m \times n}$ and a real symmetric positive semi-definite (PSD) matrix $B \in \bR^{n \times n}$, the following holds true: $\sigma_{\min}^2(A) \lambda_{\min}(B) \leq \lambda_{\min}(ABA^\top)$, where $\sigma_{\min}(\cdot)$ and $\lambda_{\min}(\cdot)$ represents the minimum singular value and minimum eigenvalue operators respectively.
\end{lemma}
\begin{proof} The proof directly follows from the definitons of $\sigma_{\min}$ and $\lambda_{\min}$. Since $B$ is a PSD matrix, therefore $ABA^\top$ is also PSD, i.e.~$\lambda_{\min}(ABA^\top) \geq 0$. This is because since $B$ is PSD, it has a PSD matrix square root $B^{1/2}$ such that $B = (B^{1/2})^\top B^{1/2}$ and $B^{1/2}$ is PSD. Then
\begin{align}
z^\top A B A^\top z &= z^\top A (B^{1/2})^\top B^{1/2} A^\top z = \|B^{1/2} A^\top z\|^2 \geq 0
\end{align}	
First assume that $\sigma_{\min} (A) > 0$, then
\begin{align}
\lambda_{\min}(ABA^\top) &= \min_{\|z\| = 1} z^\top ABA^\top z \\
&= \sigma_{\min}^2(A) \min_{\|z\| = 1} (\frac{A^\top z}{\sigma_{\min}(A)})^\top B (\frac{A^\top z}{\sigma_{\min}(A)}) \\
&\geq \sigma_{\min}^2(A) \min_{1\leq \|z\| \leq \frac{\sigma_{\max}(A)}{\sigma_{\min}(A)} } z^\top B z \\
&\geq \sigma_{\min}^2(A) \min_{\|z\| = 1} z^\top B z \\
&= \sigma_{\min}^2(A) \lambda_{\min}(B)
\end{align}
The second last inequality above follows from the fact that $B$ is a PSD matrix,i.e.~$\min_{\|z\| = 1} z^\top B z = \lambda_{\min}(B) \geq 0$. Secondly if $\sigma_{\min} (A) = 0$, then $A$ is rank deficient and hence $ABA^\top$ is also rank deficient, i.e.~$\lambda_{\min} (ABA^\top) = 0$. Therefore $\lambda_{\min} (ABA^\top) = 0 = \sigma_{\min}^2(A) \lambda_{\min}(B)$.
\end{proof}

\begin{lemma}[Weyl's inequality~\citep{amir1956extreme}]\label{lem:weyls}
For three real $r$-rank matrices, satisfying  $A - B = C$, Weyl's inequality~\citep[Theorem 3.6]{amir1956extreme}, tells that 
\begin{align}
\sigma_{k}(A) - \sigma_{k}(B) \leq  \|C\| \text{\,,  for all $k \in [r]$}
\end{align}
where $\sigma_k(\cdot)$ is the $k$-th largest singular value operator.
\end{lemma}

\begin{lemma}[a variant of Woodburry matrix identity~\cite{henderson1981deriving}]
\label{lem:woodbury_identity}
For linear operators $A$ and $B$ such that $A$ and $A+B$ are invertible, then
\begin{align}
(A + B)^{-1} - A^{-1} = - A^{-1}B(A + B)^{-1} \label{eq:woodbury_identity}
\end{align} 

\end{lemma}

\begin{lemma}\label{lem:distance-relations}
	Let $U \in \bR^{d \times r}$ and $\oU \in \bR^{d \times r}$ be two orthonormal matrices. Let $\{\sin \theta_j (U, \oU) \}_{j=1}^r$ be the singular values of $(\oU)^\top U$. Then following are true.
	\begin{align}
	\|U - \oU (\oU)^\top U \|_F &\geq \|\Id -  (\oU)^\top U\|_F \text{ , } \\
	\|U - \oU (\oU)^\top U \|_F &\geq r - \|(\oU)^\top U\|_F^2 \geq  \sum_{k \in [r]} \sin^2 \theta_k(U, \oU) \text{, } \\
	\|(\Id -  \oU (\oU)^\top) U\| = \|(\oU_\perp)^\top U\| &= \|U_\perp^\top \oU\| = \|(\Id -  U (U)^\top) \oU\| \text{, } \\
	\|(\Id -  \oU (\oU)^\top) U\|_F = \|(\oU_\perp)^\top U\|_F &= \|U_\perp^\top \oU\|_F = \|(\Id -  U (U)^\top) \oU\|_F \text{, and } \\
	\sigma_r((\oU)^\top U) &\geq \sqrt{1 - \|(\Id - \oU (\oU)^\top) U\|}
	\end{align}
\end{lemma}
\begin{proof}
	\begin{align}
	\|U - \oU (\oU)^\top U \|_F^2 &= \Ip{U - \oU (\oU)^\top U}{U - \oU (\oU)^\top U} \\
	&= \Ip{U}{U} - 2 \Ip{\oU (\oU)^\top U }{U} + \Ip{\oU (\oU)^\top U }{\oU (\oU)^\top U} \\
	&= r - 2 \tr(((\oU)^\top U)^\top ((\oU)^\top U)) + \tr(((\oU)^\top U)^\top ((\oU)^\top U)) \\
	&= r -  \tr(((\oU)^\top U)^\top ((\oU)^\top U)) \\
	&= r - \sum_{k \in [r]} \cos^2 \theta_k(U, \oU) 
	= \sum_{k \in [r]} \sin^2 \theta_k(U, \oU) 
	\geq \sin^2 \theta_1(U, \oU) \\
	&\geq \sum_{k \in [r]} (1-\cos^2 \theta_k(U, \oU)) \\
	&\geq \sum_{k \in [r]} (1-\cos \theta_k(U, \oU))^2 \\
	&= \|\Id - (\oU)^\top U \|_F^2
	\end{align}
	
	\begin{align}
    \|U_\perp^\top \oU\| = \sigma_{\max}(U_\perp^\top \oU)  &= \sqrt{\lambda_{\max}((\oU)^\top U_\perp U_\perp^\top \oU)} \\
    &= \sqrt{\lambda_{\max}((\oU)^\top U_\perp U_\perp^\top U_\perp U_\perp^\top \oU)} = \|U_\perp U_\perp^\top \oU\| = \|(\Id - U U^\top) \oU\|
	\end{align}
	Note that for $\|z\| = 1$
	\begin{align}
	&1 = z^\top U^\top U z = z^\top U^\top \oU (\oU)^\top U z + z^\top U^\top \oU_\perp (\oU_\perp)^\top U z \\
	\implies &1 - z^\top U^\top \oU (\oU)^\top U z = z^\top U^\top \oU_\perp (\oU_\perp)^\top U z \\
	\implies &1 - \min_{\|z\| = 1} z^\top U^\top \oU (\oU)^\top U z = \max_{\|z\| = 1} z^\top U^\top \oU_\perp (\oU_\perp)^\top U z \\
	\implies &1 - \sigma_{\min}^2 ((\oU)^\top U) = \|(\oU_\perp)^\top U\|^2
	\end{align}
	Therefore 
	\begin{align}
	\sigma^2_{\min}(U^\top \oU) + \|U_\perp^\top \oU\|^2 = 1 = \sigma^2_{\min}((\oU)^\top U) + \|(\oU_\perp)^\top U\|^2 \implies \|U_\perp^\top \oU\| = \|(\oU_\perp)^\top U\|
	\end{align}
	Rest of the equality can be obtained in a similar fashion using the above two relations.
	
	\begin{align}
	\|U_\perp^\top \oU\|_F^2 = \tr((\oU)^\top U_\perp U_\perp^\top \oU) &= \tr((\oU)^\top (\Id - U U^\top) \oU) \\
	&= \tr((\oU)^\top (\Id - U U^\top)^2 \oU) \\
	&= \|(\Id - U U^\top) \oU\|_F^2 \\
	&= \|(\Id - \oU (\oU)^\top) U\|_F^2 = \|(\oU_\perp)^\top U\|_F^2 
	\end{align}
	
	Let $E = (\Id - \oU (\oU)^\top) U$ and $Q =(\oU)^\top U$. Then $U^\top E = \Id - Q^\top Q$. Then by Weyl's inequality (Lemma~\ref{lem:weyls}, by setting $A \leftarrow \Id$, $B \leftarrow Q^\top Q$, and $C \leftarrow U^\top E$) we get that
	\begin{align}
	1 - \sigma_{r}(Q)^2 = \sigma_{r}(\Id) - \sigma_{r}(Q^\top Q) \leq  \|U^\top E\| \leq \|U\|\|E\| \leq \|(\Id - \oU (\oU)^\top) U\|
	\end{align}
	This implies that $\sigma_r((\oU)^\top U) \geq \sqrt{1 - \|(\Id - \oU (\oU)^\top) U\|}$
\end{proof}

\begin{lemma}[Hanson-Wright inequality, Theorem 6.2.1~\cite{vershynin2018high}] Let $x_1, \ldots, x_m \sim {\cal N}(0,{\mathbf I}_{d\times d})$ be $m$ i.i.d.~standard isotropic Gaussian random vectors of dimension $d$. Then, for some universal constant $c \geq 0$, the following holds true with a probability of at least $1 - \delta$.
	\begin{equation}
	\bigg|\frac1m \sum_{j=1}^m x_j^{\top} A_j x_j - \frac1m \sum_{j=1}^m \tr{A_j} \bigg| \leq c \max\bigg(\sqrt{\sum_{j=1}^m \|A_j\|^2_F \frac{ \log(1/\delta)}{m^2}}, \max_{j=1,\ldots,n}\|A_j\|_2 \frac{\log(1/\delta)}{m} \bigg)
	\end{equation}
	\label{lem:hanson-wright}
\end{lemma}

\begin{lemma} Let $x_1, \ldots, x_m \sim {\cal N}(0,{\mathbf I}_{d\times d})$ be $m$ i.i.d.~standard isotropic Gaussian random vectors of dimension $d$. Then, for some universal constant $c \geq 0$, the following holds true with a probability of at least $1 - \delta$.
	\begin{equation}
	\bigg|\frac1m \sum_{j=1}^m a^\top (x_j x_j^{\top}) b- a^\top b \bigg| \leq c \|a\|\|b\| \max\bigg(\sqrt{\frac{ \log(1/\delta)}{m}}, \frac{\log(1/\delta)}{m} \bigg)
	\end{equation}
	\label{lem:gauss_pseudo_inner}
\end{lemma}
\begin{proof}
	First notice that $a^\top (x_j x_j^{\top}) b = \tr(a^\top (x_j x_j^{\top}) b) = \tr(x_j^\top b a^\top x_j) = x_j^\top b a^\top x_j$ and $a^\top b = \tr(b a^\top)$. Then desired result follows from Lemma~\ref{lem:hanson-wright}, by setting $A_j = b a^\top$. .
\end{proof}

\begin{lemma} Let $x_1, \ldots, x_m \sim {\cal N}(0,{\mathbf I}_{d\times d})$ be $m$ i.i.d.~standard isotropic Gaussian random vectors of dimension $d$. Then, for some universal constant $c \geq 0$, the following holds true with a probability of at least $1 - \delta$.
	\begin{equation}
	\bigg\|\frac1m \sum_{j=1}^m a_j x_j x_j^{\top} - \frac1m \sum_{j=1}^m a_j \Id \bigg\| \leq c \max\bigg(\frac{\|a\|_2}{\sqrt{m}} \sqrt{\frac{d \log(9) + \log(1/\delta)}{m}}, \|a\|_\infty \frac{d \log(9) + \log(1/\delta)}{m} \bigg)
	\end{equation}
	\label{lem:gauss_sing_val}
\end{lemma}
\begin{proof}
	For $\epsilon \leq 1$, consider a unit vector $u \in N_\epsilon$ from the $\epsilon$-net of size $|N_\epsilon| = (1 + 2/\epsilon)^d$, of the sphere $\bS^{d-1}$~\citep[Lemma 5.2]{vershynin2010introduction}. That is for any $u' \in \bS^{d-1}$, there exists some $u \in N_\epsilon$ such that $\|u'-u\| \leq \epsilon$.
	
	Now we will prove a concentration for $\frac1m \sum_{j=1}^m a_j u^\top x_j x_j^{\top} u - \frac1m \sum_{j=1}^m a_j$.
	Notice that, $a_j u^\top (x_j x_j^{\top}) u = a \tr(u^\top (x_j x_j^{\top}) u) = a_j \tr(x_j^\top u u^\top x_j) =  x_j^\top (a_j u u^\top) x_j$ and $\tr(a_j u u^\top) = a_j$. Then, by Hanson-Wright inequality (Lemma~\ref{lem:hanson-wright}), for some universal constant $c \geq 0$, the following holds true with a probability of at least $1 - \delta'$.
	\begin{equation}
	\bigg| \frac1m \sum_{j=1}^m a_j u^\top x_j x_j^{\top} u - \frac1m \sum_{j=1}^m a_j \bigg| \leq c \max\bigg(\frac{\|a\|_2}{\sqrt{m}} \sqrt{\frac{ \log(1/\delta')}{m}}, \|a\|_\infty \frac{ \log(1/\delta')}{m} \bigg)
	\end{equation}
	This implies that, through union bound, for the matrix $A' = \frac1m \sum_{j=1}^m a_j x_j x_j^{\top} - \frac1m \sum_{j=1}^m a_j \Id$ the following holds true with probability at least $1 - \delta$
	\begin{equation}
	u^\top A' u \leq c \max\bigg(\frac{\|a\|_2}{\sqrt{m}} \sqrt{\frac{ \log(|N_\epsilon|/\delta)}{m}}, \|a\|_\infty \frac{\log(|N_\epsilon|/\delta)}{m} \bigg)\,, \;\;\; \text{any $u \in N_\epsilon$}
	\end{equation}
	Let $u' \in \bS^{d-1}$ be the top singular-value of $A'$, then there exists some $u \in N_\epsilon$ such that $\|u'-u\| \leq \epsilon$.
	\begin{align}
	\sigma_{\max}(A') = (u')^\top A' u' &= (u' - u)^\top A' u' + u^\top A' u + u^\top A' u \\
	&\leq \|u' - u\|\sigma_{\max}(A') \|u'\| + \|u\|\sigma_{\max}(A') \|u' - u\| + u^\top A' u \\
	\end{align}
	Re-arranging and setting $\epsilon=1/4$ and setting $c \gets 2c$, we get
	\begin{align}
	\sigma_{\max}(A') &\leq \frac{u^\top A' u}{1 - 2\epsilon} \leq 2c \max\bigg(\frac{\|a\|_2}{\sqrt{m}} \sqrt{\frac{d \log(9) + \log(1/\delta)}{m}}, \|a\|_\infty \frac{d \log(9) + \log(1/\delta)}{m} \bigg)
	\end{align}
\end{proof}

\end{document}


\maketitle

\input{rank_one}

\section{Analysis of \altmin (Algorithm~\ref{alg:altmin})}
\label{sec:analysis_rkr_logt}
Initialized at $U$, the $k$-the step of alternating minimization-based \altmin (Algorithm~\ref{alg:altmin}) is:
\begin{eqnarray}
\vi &\leftarrow&  (U^\top \Si_1 U )^\inv( (U^\top \Si_1 U^* ) \ovi + U^\top \zi)\;,\;\;\;\;\;\; \text{ for }i\in \cT_k = [1 + (k-1)t/K, t k/K]  \label{eq:r_vupdate_logt}\\
\hU &\leftarrow&   \cA^\inv\Big( \,   \sum_{i\in [t]} \Si_2  U^* \ovi (\vi)^\top + \zi (\vi)^\top \,\Big) \;, \label{eq:r_uupdate_logt}\\
\pU & \leftarrow & \mathrm{QR}(\hU)\;,
\end{eqnarray}
where $\pU$ is the next iterate, $\Si_1 = \frac2m \sum_{j \in [1, m/2]} \xij (\xij)^\top$, $\Si_2 = \frac2m \sum_{j \in [1+m/2, m]} \xij (\xij)^\top$, $z^{(i)} \triangleq (1/m)\sum_{j\in[m]} \varepsilon_j^{(i)}x_j^{(i)}$ and ${\cal A}:{\mathbb R}^{d\times r} \to {\mathbb R}^{d\times r}$ is a  self-adjoint linear operator such that ${\cal A}(U)=  \sum_{i\in T} \Si  U \vi (\vi)^\top$. The self-adjointness of $\cA$ follows from the symmetry of $\Si$ when using cyclic property of trace as follows
\begin{align}
\Ip{U_2}{\cA(U_1)} = \sum_{i \in T}\Ip{U_2}{\Si U_1 \vi (\vi)^\top)} &= \sum_{i \in T} \tr(U_2^\top \Si U_1 \vi (\vi)^\top) \nonumber \\
&= \sum_{i \in T} \tr(\vi (\vi)^\top U_2^\top \Si U_1 ) = \Ip{\cA(U_2)}{U_1}
\end{align}

\noindent
{\bf Incoherence.} $ \max_i \|\ovi\|^2 \leq (\mu \,r/\t)  \eigmin(\sum_{i\in[t]} \ovi (\ovi)^\top)$, 
and we define $\nu = (1/t)\eigmin(\sum_{i\in[t]} \ovi (\ovi)^\top)$. Notice that, this non-standard definition of incoherence is related to the standard definition: $W^* = (V^*)^\top V^* = \sum_{i \in [t]} \ovi (\ovi)^\top$, $V^* = \tilde{V}^* R^*$ (QR-decomposition), $\max_i \|\widetilde{v}^{*(i)}\|^2 \leq \widetilde{\mu} \,r/\t$, as follows $\mu = \widehat{\mu}(\sigma_1^2(R^*)/\sigma_r^2(R^*))$.

\begin{thm}\label{thm:altmin_logt}
	Let  there be $t$ linear regression tasks, each with $m$ samples satisfying Assumptions \ref{assume:linear_meta_problem} and \ref{assume:incoherence},
	and $K = \lceil \log_2(\frac{({\eigmin^*}/{\eigmax^*}) mt}{\mu dr^2}) \rceil$, $\|(\Id - \oU (\oU)^\top)\iU\|_F \leq \min \Big( \frac34, \Ord\Big(\sqrt{\frac{\eigmin^*}{\eigmax^*}  \frac1{\log(t/K)} } \Big)\Big)$,
	$m \geq \Omega\Big( (1+r \Big(\SNRre\Big)^2) r \log(\frac{t}{\delta}) + r^2 \log(\frac{K}{\delta}) \Big)$, 	$t \geq \Omega(\mu^2 r^3 K \log(\frac{K}\delta))$,
	and 
	$mt \geq \Omega\Big({\mu d r^2 K \frac{\eigmax^*}{\eigmin^*}}  \Big(\log(\frac{t}\delta) + \Big(\SNRre \Big)^2 \log^2(\frac{t}{\delta}) \log(\frac{rK}\delta) \Big)\Big)$.
	Then, for any $0 < \delta < 1$, after 
	$K$
	iterations, \altmin (Algorithm~\ref{alg:altmin}) returns an orthonormal matrix $U \in \bR^{d \times r}$, such that with a probability of at least $1 - \delta 
	$
	\begin{align}
	\frac1{\sqrt{r}} \|(\Id - \oU (\oU)^\top)U \|_F 
	&\leq \Ord\Big(\SNRre \sqrt{\frac{\mu d r K \log(\frac{t}{\delta}) \log(\frac{rK}\delta)}{mt}} \Big)
	\end{align}
	and the algorithm uses an additional memory of size $\Ord(d^2 r^2)$.
\end{thm}

A proof is in Section~\ref{sec:altmin_logt_pf}.
\\

\noindent{\bf Initialization.} If we initialize \altmin (Algorithm~\ref{alg:altmin}) with Method-of-Moments (Theorem~\ref{thm:svd_mom}), we need at least
\begin{align}
mt &\geq \widetilde\Omega \Big(
\frac{\eigmax^{*2}}{\eigmin^{*2}} {\mu dr^2} + \Big( \SNRre \Big)^4 \frac{\eigmax^*}{\eigmin^*}  dr^3\Big) 
\end{align}
initial number of samples, where $\widetilde{\Omega}$ hides $\mathrm{polylog}$ factors.

\subsection{Proof of Theorem~\ref{thm:altmin_logt}}
\label{sec:altmin_logt_pf}

\textbf{Proof sketch:} 
We first prove that distance between $\oU$ and $U$ decreases at each iteration up to some additional noise terms. Then this per iterate result is unrolled to obtained the final guarantees. 

First we focus on the $k$-th iterate. In this analysis, unless specified $[t]$, represents the $k$-th $K$-way partition used for the $k$-th iterate.

In the analysis of an iterate we denote the current iterate using $U$ and the next iterate using $\pU$.
First we prove that the distance between the true $\ovi$ and the current $\vi$ is approximately upper-bounded by multiple of distance between $U$ and $\oU$. Next we prove that distance between $\pU$ and $\oU$ is approximately a fraction of the distance between  $\ovi$ and $\vi$. Finally, combining the above two results gives us desired result.
\\

\medskip\noindent\textbf{Preliminaries:}
Let $Q = (\oU)^\top U$. Using Lemma~\ref{lem:distance-relations}, if $\|U - \oU (\oU)^\top U \|_F < 1$, $Q$ is invertible. Let $Q^{-1}$ be the right inverse of $Q$, i.e.~$Q Q^{-1} = \Id$. Let $W = (V^*)^\top V^* = \sum_{i \in [t]} \ovi (\ovi)^\top$, and $\eigmax^* = \max_{\|z\| = 1} z^\top W^* z$ and $\eigmin^* = \min_{\|z\| = 1} z^\top W^* z$.

\medskip\noindent\textbf{Update on $V$:}
Let $\dvi = \vi - Q^\invert \ovi$ and $\dv^T = [h^{(1)} h^{(2)} \ldots h^{(t)}]$. Let $\|\dv\|_F \triangleq \sqrt{\sum_{i \in [t]} \|\dvi\|^2}$ and $\|\dv\|_\inftyone \triangleq \max_{i \in [t]} \|\dvi\|$. Let $W = V^\top V = \sum_{i \in [t]} \vi (\vi)^\top$, and $\eigmax = \max_{\|z\| = 1} z^\top W z$ and $\eigmin = \min_{\|z\| = 1} z^\top W z$.

\begin{lemma}\label{lem:v_update_rkr_logt}
If $\|(\Id - \oU (\oU)^\top)U\|_F \leq \min \Big( \frac34, \Ord\Big(\sqrt{\frac{\eigmin^*}{\eigmax^*} \frac1{\log(t/K)} } \Big)\Big)$ and $m \geq \Omega\Big( \Big(\SNRre \Big)^2 r^2 \log(\frac{t}{K\delta}) + r \log(\frac{t}{K\delta})\Big)$, 
then with a probability of at least $1 - \delta/3$, 
	\begin{align}\label{eq:v_update_incoherence_rkr_logt}
	\|\vi\| \leq \Ord\Big(\mu\,\eigmin \Big) \text{\;, and } \eigmin^* \leq 2\eigmin
	\end{align}	
	and %
	\begin{align}
	\sqrt{\frac{rK}{t}}\frac{\|\dv\|_F}{\sqrt{\eigmin}} 
	&\leq \Ord\Big(\sqrt{ \frac{\log(\frac{t}{K\delta})}{\log(\frac{1}\delta)} }  \sqrt{\frac{\eigmax^*}{\eigmin^*}} \|(\Id - \oU (\oU)^\top) U\|_F + 
	\SNRre
	\sqrt{\frac{r^2 \log(\frac{t}{K\delta})}{m}} 
	\Big) \label{eq:v_update_hf_rkr_logt}
	\\
	\sqrt{\frac{r K}{t}} \frac{\|\dv\|_\inftyone}{\sqrt{\eigmin}} &\leq \Ord\Big(\sqrt{ \frac{\log(\frac{t}{K\delta})}{\log(\frac{1}\delta)} }  \|(\Id - \oU (\oU)^\top) U\| \sqrt{\frac{\mu r K}{t}} + 
	\SNRre
	\sqrt{\frac{r^2 K \log(\frac{t}{K\delta})}{mt}}
	\Big) \label{eq:v_update_hinfty_rkr_logt}
	\end{align}
\end{lemma}

A proof is in Section~\ref{sec:v_update_rkr_logt_pf}.

\medskip\noindent\textbf{Update on $U$:}
Let $\VV,\cH, \hcH: \bR^{d \times r} \to \bR^{d \times r}$ be three linear operators, such that $\cV(U) = U \sum_{i \in \cT_k} \vi (\vi)^\top = U \VV$,  $\cH(U) = U \sum_{i \in \cT_k} \dvi (\vi)^\top$ and $\hcH(U) = \sum_{i \in \cT_k} \Si_2 U \dvi (\vi)^\top$, where $\dvi =\vi - Q^\invert \ovi$. $\cV$ is invertible and self-adjoint. Therefore $\cVih$ and $\cVh$ exist.
Let $\cI: \bR^{d \times r} \to \bR^{d \times r}$ be the identity mapping, such that $\cI(U) = U$.
\begin{align}
\hU- U^*Q &= \cA^\inv(\sum_{i \in \cT_k} \Si_2 \oU Q (Q^\invert \ovi - \vi) (\vi)^\top + \zi (\vi)^\top) \\
&= \cA^\inv(-\hcH(\oU Q) + \sum_{i \in \cT_k} \zi (\vi)^\top) \\
&= \cVih (\cVh \cA^\inv \cVh) \cVih(-\hcH(\oU Q) + \sum_{i \in \cT_k} \zi (\vi)^\top) \\
&= \cVih (\cI + \cE_1)(-(\cVih \cH + \cE_2)(\oU Q) + \cVih (\sum_{i \in \cT_k} \zi (\vi)^\top)) %
\end{align}
where $\cE_1 = (\cVih \cA \cVih)^\inv - \cI$ and $\cE_2 = \cVih \hcH - \cVih \cH$, and $F = \hU - \oU Q +  \cV^{-1} (\cH (\oU Q))$. Let $F = \hU - \oU Q +  \cV^{-1} (\cH (\oU Q))$

\begin{lemma}\label{lem:u_update_rkr_logt}
	Assume that the large probability event in Lemma~\ref{lem:v_update_rkr_logt} holds true. Then,
	\begin{align}
	\|\cVi \cH (\oU Q)\|_F &\leq \Ord\Big(\sqrt{\frac{\eigmax^*}{\eigmin^*} \log(\frac{t}{K})}  \|(\Id - \oU (\oU)^\top)U\|_F + \SNRre \sqrt{\frac{r^2 \log(\frac{t}{K \delta})}{m}} \Big)
	\label{lem:u_update_parallelerr_rkr_logt} 
	\end{align}
	and if %
	$mt \geq \Omega(\mu dr^2 K \log(t/K\delta))$, then with probability at least $1-\delta/3$
	\begin{align}
	&\|F\|_F 
	\leq 
	\Ord\Big(\sqrt{\frac{\eigmax^*}{\eigmin^*}\frac{\mu d r^2 K \log(\frac{t}{K\delta})}{mt}} \|(\Id - \oU (\oU)^\top)U\|_F + \sqrt{\frac{\mu d r^2 K \log(\frac{t}{K\delta}) \log(\frac{r}\delta)}{mt}} \SNRre \sqrt{\frac{r^2 \log(\frac{1}{\delta})}{m}} \Big) 
	\label{lem:u_update_orthoerr_rkr_logt}
	\end{align}
\end{lemma}
A proof is in Section~\ref{sec:u_update_rkr_logt_pf}.
\begin{lemma}\label{lem:qr_r_inverse_logt}
	If $\frac12 \leq \sigma_{\min}(Q)$, $\|F\|_F \leq \frac18$ and $\|\cV^{-1} (\cH (\oU Q))\|_F \leq \frac18$, then $R$ is invertible and $\|R^\invert\| \leq 4$.
\end{lemma}
A proof is in Section~\ref{sec:qr_r_inverse_logt_pf}. Clearly, from \eqref{lem:u_update_parallelerr_rkr_logt} and \eqref{lem:u_update_orthoerr_rkr_logt}, a sufficient condition for the above lemma is
\begin{align}
&\Ord\Big(\sqrt{\frac{\eigmax^*}{\eigmin^*} \log(\frac{t}{K})} \|(\Id - \oU (\oU)^\top)U\|_F + \SNRre \sqrt{\frac{r^2 \log(\frac{t}{K \delta})}{m}} \Big) \leq \frac18 \text{\;, and } \\
&\Ord\Big(\sqrt{\frac{\eigmax^*}{\eigmin^*}\frac{\mu d r^2 K \log(\frac{t}{K\delta})}{mt}} \|(\Id - \oU (\oU)^\top)U\|_F + \sqrt{\frac{\mu d r^2 K \log(\frac{t}{K\delta}) \log(\frac{r}\delta)}{mt}} \SNRre \sqrt{\frac{r^2 \log(\frac{1}{\delta})}{m}} \Big) 
\leq \frac18 
\end{align}
which can be satisfied with
\begin{align}
&\|(\Id - \oU (\oU)^\top)U\|_F \leq \Ord\Big(\sqrt{\frac{\eigmin^*}{\eigmax^*} \frac1{\log(t/K)} }\Big) \text{\,, \;\; } 
m \geq \Omega(\Big(\SNRre \Big)^2 {r^2 \log(\frac{t}{K \delta})} + {r^2 \log(\frac{1}{\delta})}\Big) \text{\;, and } \\
&mt \geq \Omega\Big(\mu dr^2 K \Big( 1 + \Big(\SNRre \Big)^2 \log(\frac{t}{K\delta}) \log(\frac{r}\delta) \Big) \Big)
\end{align}
Finally, we bound the Frobenius norm distance of the next iterate $\pU$ from the optimal $\oU$.
\begin{align}
\|(\Id - &\oU (\oU)^\top)\pU\|_F \\
&= \min_{Q^+} \|\pU - \oU Q^+\|_F \\
&\leq \|\hU R^\invert - \oU Q R^\invert + (\cVi \cH  (\oU Q)) R^\invert\| \\
&\leq \|\hU - \oU Q + \cVi \cH(\oU Q) \|_F \|R^\invert\| \\
&= \|F\|_F \|R^\invert\| \\
&\leq 
\Ord\Big(\sqrt{\frac{\eigmax^*}{\eigmin^*}\frac{\mu d r^2 K \log(\frac{t}{K\delta})}{mt}} \|(\Id - \oU (\oU)^\top)U\|_F + \sqrt{\frac{\mu d r^2 K \log(\frac{t}{K\delta}) \log(\frac{r}\delta)}{mt}} \SNRre \sqrt{\frac{r^2 \log(\frac{1}{\delta})}{m}} \Big) 
\end{align}

If 
\begin{align}
&mt \geq \Omega\Big(\mu dr^2 K {\frac{\eigmax^*}{\eigmin^*}} \Big( \log(\frac{t}{K\delta}) + \Big(\SNRre \Big)^2 \log^2(\frac{t}{K\delta}) \log(\frac{r}\delta) \Big) \Big)\text{\,, and } m \geq \Omega\Big(r^2 \log(\frac{1}\delta)\Big)
\end{align}
then,
\begin{align}
\|(\Id - \oU (\oU)^\top) \pU\|_F &\leq \frac12 \|(\Id - \oU (\oU)^\top)U\|_F + \min \Big( \frac38, \Ord\Big(\sqrt{\frac{\eigmin^*}{\eigmax^*} \frac1{\log(t/K)} } \Big)\Big)
\end{align}
Thus if $\|(\Id - \oU (\oU)^\top)U\|_F \leq \min \Big( \frac34, \Ord\Big(\sqrt{\frac{\eigmin^*}{\eigmax^*} \frac1{\log(t/K)} } \Big)\Big)$, then $\|(\Id - \oU (\oU)^\top)\pU\|_F \leq \min \Big( \frac34, \Ord\Big(\sqrt{\frac{\eigmin^*}{\eigmax^*} \frac1{\log(t/K)} } \Big)\Big)$. 

In the following lemma we prove that tasks subset used for each iteration, satisfy approximate incoherence.
\begin{lemma}[Shuffling and partition of tasks]\label{lem:shuffling_rkr_logt}
Let $\cT_k$ be the $k$-th subset ($k \in [K]$) of the $K$-way partition of the shuffled set of all $t$ tasks. If $t \geq \Omega(\mu^2 r^3 K \log(1/\delta))$, then with a probability of at least $1 - \delta/3$,
\begin{align}
\lambda_{1}(\sum_{i \in \cT_k} \ovi(\ovi)^\top) = \frac1K \Theta(\lambda_{1}((\ov)^\top \ov)) \;\; \text{ and } \;\; \lambda_{r}(\sum_{i \in \cT_k} \ovi(\ovi)^\top) = \frac1K \Theta(\lambda_{r}((\ov)^\top \ov)) \;,\;\;\; \text{ for all $r' \in [r]$}
\end{align}
where are $\lambda_{1}(\cdot)$ and $\lambda_{r}(\cdot)$ are the largest and smallest, respectively, eigenvalue operators of real-symmetric $r \times r$ matrix.
\end{lemma} 
A proof is in Section \ref{sec:shuffling_rkr_logt_pf}.

Therefore, using union-bound, we can un-roll the relation, between current iterate $U$ and the next iterate $\pU$, over $K$ iterations, starting from $\iU$ and ending at some $U$ iterations, to get
\begin{align}
\|(\Id - \oU (\oU)^\top) U\|_F &\leq \frac1{2^K} \|(\Id - \oU (\oU)^\top)\iU\|_F + 
\Ord\Big(\sqrt{\frac{\mu d r^2 K \log(\frac{t}{K\delta}) \log(\frac{r}\delta)}{mt}} \SNRre \sqrt{\frac{r^2 \log(\frac{1}{\delta})}{m}} \Big) 
\end{align}
with probability at least $1 - K \delta$.
Finally setting $K = \lceil \log_2(\frac{({\eigmin^*}/{\eigmax^*}) mt}{\mu dr^2}) \rceil$ and using 
$m \geq \Omega(r^2 \log(\frac1{\delta}))$
we get that, with a probability of at least $1 - K\delta$
\begin{align}
\|(\Id - \oU (\oU)^\top) U\|_F &\leq 
\Ord\Big(\SNRre \sqrt{\frac{\mu d r^2 K \log(\frac{t}{K\delta}) \log(\frac{r}\delta)}{mt}} \Big)
\end{align}

\subsection{Analysis of update on $V$}

\subsubsection{Proof of Lemma~\ref{lem:v_update_rkr_logt}}
\label{sec:v_update_rkr_logt_pf}
\begin{proof}[Proof of Lemma~\ref{lem:v_update_rkr_logt}]
	In this proof for brevity, we will first set that $\cT_k \leftarrow [t]$, $|\cT_k| = t/K \leftarrow t$, $\Si_1 \leftarrow \Si = \frac1m \sum_{j \in [m]} \xij (\xij)^\top$. This can be done due to the approximate equivalence of the subset $\cT_k$ by Lemma~\ref{lem:shuffling_rkr_logt}. Finally at the end of the analysis we will reset $\cT_k \leftarrow \cT_k$, $|\cT_k| = t/K \leftarrow t/K$, $\Si_1 \leftarrow \Si_1 = \frac2m \sum_{j \in [1, m/2]} \xij (\xij)^\top$.
	
	Recall the definition of $\vi$ from the update \eqref{eq:r_vupdate_logt}, and that $Q^{-1}$ is right inverse of $Q$, i.e.~$Q Q^{-1} = \Id$.
\begin{eqnarray}
\vi - Q^\invert \ovi &=&  (U^\top \Si U )^\inv(U^\top \Si (\oU Q-U) ) Q^\invert \ovi  + (U^\top \Si U )^\inv U^\top z^{(i)}  
\end{eqnarray}	
	We can use re-write the first term as,
	\begin{align}
	&\;\;\;\;\; (U^\top \Si U)^\inv U^\top \Si (\oU Q-U) Q^\invert \\ &= (U^\top \Si U)^\inv U^\top \Si (U U^\top + U_\perp U_\perp^\top) (\oU Q-U) Q^\invert \\
	&= U^\top (\oU Q-U) Q^\invert + (U^\top \Si U)^\inv U^\top \Si U_\perp U_\perp^\top  (\oU Q-U) Q^\invert \\
	&= - U^\top (\Id - \oU (\oU)^\top)^2 U Q^\invert + (U^\top \Si U)^\inv U^\top \Si U_\perp U_\perp^\top  \oU  \\
	&= - (U - \oU Q )^\top (U - \oU Q ) Q^\invert + (U^\top \Si U)^\inv U^\top \Si U_\perp U_\perp^\top  \oU
	\end{align}
	where we used the fact that $Q = (\oU)^\top U$. Therefore
	\begin{align}
	& \| \vi - Q^\invert \ovi \|  %
	\leq \nonumber \\ 
	& \|U - \oU Q\| \|(U - \oU Q) Q^\invert \ovi\| + 
	\| (U^\top \Si U)^\inv \| (\| U^\top \Si U_\perp U_\perp^\top \oU \ovi\| +
	\| U^\top z^{(i)}  \,\|)
	\end{align}
	If $m \geq \Omega( {r \log(t/\delta)} )$, then $\alpha = c \sqrt{\frac{r \log(27t/\delta)}{m}} \leq 1/2$ and by 
	Lemma~\ref{lem:v_update_rkr_matrices_logt}, with a probability of at least $1 - \delta$,
	\begin{align}
	\left.\begin{aligned}
	\| (U^\top \Si U )^\inv\big\| &\leq (1+2\alpha) \text{,  } \\
	\big\| U^\top \Si U_\perp U_\perp^\top \oU \ovi \big\|
	&\leq \alpha \|U_\perp^\top \oU \ovi \| \text{, \; and \;\; } \\
	\big\| U^\top z^{(i)}  \,\big\| &\leq \sigma \alpha\,,
	\end{aligned}\right\rbrace
	\text{ for all $i \in [t]$}
	\end{align}
	Now if $m \geq \Omega(r \log(1/\delta))$ and $\|\oU Q-U\| \leq \Ord\Big(\sqrt{ \frac{\log(\frac{t}\delta)}{\log(\frac{1}\delta)} }\Big)$, then 
	\begin{eqnarray}
	\| \vi - Q^\invert \ovi \|  & \leq & \Ord(\sqrt{ \frac{\log(\frac{t}\delta)}{\log(\frac{1}\delta)} } (\|(\oU Q-U) Q^{-1} \ovi \| + \|U_\perp^\top \oU \ovi\big\|) +  \sigma \sqrt{\frac{r \log(\frac{t}{\delta})}{m}}) 
	\label{eq:v_update_vdiff_rkr_logt}
	\end{eqnarray}
	Next we bound $\|\dv\|_F$, which by definition is $\|\dv\|_F = \sqrt{\sum_{i \in [t]} \| \dvi \|^2} = \sqrt{\sum_{i \in [t]} \| \vi - Q^\invert \ovi \|^2}$. Using \eqref{eq:v_update_vdiff_rkr_logt} and the fact that $(a^2 + b^2) \leq 2(a^2 + b^2)$ we get
	\begin{align}
	\|\dv\|^2_F&\leq \frac{\log(\frac{t}\delta)}{\log(\frac{1}\delta)} [\sum_{i \in \cT}  \Ord( \|(\oU Q-U)Q^\invert \ovi\|^2 + \|U_\perp^\top \oU \ovi \|^2)] + t(\sigma \sqrt{\frac{r \log(\frac{t}{\delta})}{m}})^2) \label{eq:v_update_hf1_rkr_logt}
	\end{align}	
	Clearly $\|Q\| = \|(\oU)^\top U\| \leq \|\oU\|\|U\| \leq 1$. If $\|(\Id - \oU (\oU)^\top)U \| \leq \|(\Id - \oU (\oU)^\top)U \|_F \leq \frac34$, then by using Lemma~\ref{lem:distance-relations}, $\|Q^\invert\| \leq 2$.	
	\begin{align}
	\sum_{i \in [t]} \|(\oU Q-U)Q^\invert \ovi\|^2 &= \sum_{i \in [t]} \tr((\ovi)^\top ((\oU Q-U)Q^\invert)^\top (\oU Q-U)Q^\invert \ovi) \\
	&= \tr((\oU Q-U)Q^\invert)^\top (\oU Q-U)Q^\invert) \sum_{i \in [t]} \ovi (\ovi)^\top ) \\
	&\leq \|(\oU Q-U)\|_F^2 \|Q^\invert\|^2 \Ord(\eigmax^*) (t/r) \\
	&\leq 4\|(\oU Q-U)\|_F^2 \Ord(\eigmax^*) (t/r)
	\end{align}
	Similarly we can use Lemma~\ref{lem:distance-relations}, to get
	\begin{align}
	\sum_{i \in [t]} \|U_\perp^\top \oU \ovi \|^2 &= \sum_{i \in [t]} \tr((\ovi)^\top (U_\perp^\top \oU )^\top U_\perp^\top \oU  \ovi) \\
	&= \tr((U_\perp^\top \oU)^\top (U_\perp^\top \oU) \sum_{i \in [t]} \ovi (\ovi)^\top ) \\
	&\leq \|U_\perp^\top \oU\|_F^2 \Ord(\eigmax^*) (t/r) \\
	&\leq \|(\oU Q-U)\|_F^2\Ord(\eigmax^*) (t/r)
	\end{align}
	Therefore substituting the above two inequalities into \eqref{eq:v_update_hf1_rkr_logt} and using the fact that $\sqrt{a + b} \leq \sqrt{a} + \sqrt{b}$ for all $0 \leq a, b$ we get
	\begin{align}
	\|\dv\|_F %
	&\leq \Ord(\sqrt{ \frac{\log(\frac{t}\delta)}{\log(\frac{1}\delta)} } \|\oU Q-U\|_F \sqrt{\eigmax^* (t/r) } + \sqrt{t} \sigma \sqrt{\frac{r \log(\frac{t}{\delta})}{m}}) \label{eq:v_update_hf2_rkr_logt}
	\end{align}
	Then as $\|(\Id - \oU (\oU)^\top)U\|_F \leq \Ord\Big(\sqrt{\frac{\eigmin^*}{\eigmax^*} \frac1{\log(t)} }\Big)$ and $m \geq \Omega\Big( \Big(\SNRre \Big)^2 r^2 \log(\frac{t}\delta)\Big)$, $ \|\dv\|_F  \leq  (1-\frac1{\sqrt{2}})  \sqrt{(t/r) \eigmin^*}$.
	Using $\|Q^\invert\| \leq 2$ in \eqref{eq:v_update_vdiff_rkr_logt} we also get that
	\begin{align}
	\|\dvi\| = \| \vi - Q^\invert \ovi \| 
	\leq \Ord(\sqrt{ \frac{\log(\frac{t}\delta)}{\log(\frac{1}\delta)} } \|(\oU Q-U)\| \|\ovi \| +  \sigma \sqrt{\frac{r \log(\frac{t}{\delta})}{m}}) \label{eq:v_update_hinf_rkr_logt}
	\end{align}
	By definition is $\|\dv\|_\inftyone = \max_{i \in [t]} \| \dvi \| = \max_{i \in [t]} \| \vi - Q^\invert \ovi \|$.
	Then as $\|(\Id - \oU (\oU)^\top)U\| \leq \|(\Id - \oU (\oU)^\top)U\|_F \leq \Ord\Big(\sqrt{\frac{\eigmin^*}{\eigmax^*} \frac1{\log(t)}}\Big) \leq \Ord(1)$, $m \geq \Omega\Big( \Big(\SNRre \Big)^2 r^2 \log(\frac{t}\delta)\Big) \geq \Omega\Big( \Big(\SNRre\Big)^2 r \log(\frac{t}\delta)\Big) $, $\|\dv\|_\inftyone \leq \Ord( \mu \eigmin^*)$.
	Now, using $ \|\dv\|_F  \leq  (1-\frac1{\sqrt{2}})  \sqrt{(t/r) \eigmin^*}$, $\|\dv\|_\inftyone \leq \Ord( \mu \eigmin^*)$, $\|Q\| \leq 1$ and $\frac12 \leq \sigma_{\min}(Q)$, by Lemma~\ref{lem:incoherence_logt}, we get the approximate incoherence relation for the intermediate $\v$
	\begin{align}
	\|\vi\| \leq \Ord\Big(\mu\,\eigmin\Big) \text{\;, and } \eigmin^* \leq 2\eigmin \label{eq:v_update_incoh_rkr_logt}
	\end{align}		
	Using this we bound $\|\dv\|_\inftyone$. Using the above incoherence relation and \eqref{eq:v_update_hinf_rkr_logt}, we get
	\begin{align}
	\sqrt{\frac{r}{t}} \frac{ \|\dv\|_\inftyone}{\sqrt{\eigmin}} \leq 2 \sqrt{\frac{r}{t}} \frac{ \|\dv\|_\inftyone}{\sqrt{\eigmin^*}} &\leq \Ord\Big( \sqrt{\frac{r}{t}} \sqrt{ \frac{\log(\frac{t}\delta)}{\log(\frac{1}\delta)} }  \|\oU Q-U\| \max_{i \in [t]} \frac{\|\ovi \|}{\sqrt{\eigmin^*}}  + 2\sqrt{\frac{r}{t}} \frac{2 c \sigma}{\sqrt{\eigmin^*}} \sqrt{\frac{r \log(\frac{27t}{\delta})}{m}} \\
	&\leq \Ord\Big( \sqrt{ \frac{\log(\frac{t}\delta)}{\log(\frac{1}\delta)} }  \sqrt{\frac{\mu r}{t}} \|\oU Q-U\| + \SNRre \sqrt{\frac{r^2 \log(\frac{t}{\delta})}{mt}} \Big)
	\end{align}
	Using \eqref{eq:v_update_incoh_rkr_logt} in \eqref{eq:v_update_hf2_rkr_logt}, we get
	\begin{align}
	\sqrt{\frac{r}{t}}\frac{\|\dv\|_F}{\sqrt{\eigmin}} \leq 2 \sqrt{\frac{r}{t}} \frac{\|\dv\|_F}{\sqrt{\eigmin^*}} &\leq \Ord\Big( \sqrt{ \frac{\log(\frac{t}\delta)}{\log(\frac{1}\delta)} }  \sqrt{\frac{\eigmax^*}{\eigmin^*}} \|(\Id - \oU (\oU)^\top)U\|_F + \SNRre
	\sqrt{\frac{r^2 \log(\frac{t}{\delta})}{m}} \Big)
	\end{align}	
	Finally, by resetting $\cT_k \leftarrow \cT_k$, $|\cT_k| = t/K \leftarrow t/K$, $\Si_1 \leftarrow \Si_1 = \frac2m \sum_{j \in [1, m/2]} \xij (\xij)^\top$, we obtain the desired result.
\end{proof}

\subsubsection{Supporting lemmas for the analysis of update on $V$}

Here we bound the linear operators in the $\vi$ update.
\begin{lemma}\label{lem:v_update_rkr_matrices_logt}
Let $\alpha = c \sqrt{\frac{r \log(27t/\delta)}{m}}$. With a probability of at least $1 - \delta$, the following are true for all $i \in [t]$
\begin{align}
\| (U^\top \Si U )^\inv\big\| &\leq (1+2\alpha) \text{,  } \\
\big\| (U^\top \Si (\oU Q-U) Q^\invert \ovi\big\| &\leq (\|(\Id - \oU (\oU)^\top)U\| + \alpha) \|(\oU Q-U) Q^\invert \ovi\big\| \\ &\leq (1+\alpha) \|(\oU Q-U) Q^\invert \ovi\big\| \\
\big\| U^\top \Si U_\perp U_\perp^\top \oU \ovi\big\| &\leq \alpha \big\| U_\perp^\top \oU \ovi\big\| \text{, and } \\
\big\| U^\top z^{(i)}  \,\big\| &\leq \sigma \alpha
\end{align}
\end{lemma}
\begin{proof}[Proof of Lemma~\ref{lem:v_update_rkr_matrices_logt}]
Let $i \in [t]$.
	
Let $\mathcal{S} = \{v \in \bR^{r} \,|\, \|v\| = 1\}$ be the set of all real vectors of dimension $r$ with unit Euclidean norm. For $\epsilon \leq 1$, there exists an $\epsilon$-net, $N_\epsilon \subset \mathcal{S}$, of size $(1 + 2/\epsilon)^{r}$ with respect to the Euclidean norm~\citep[Lemma 5.2]{vershynin2010introduction}. That is for any $v' \in \mathcal{S}$, there exists some $v \in N_\epsilon$ such that $\|v'-v\|_F \leq \epsilon$.

Consider a $v \in N_\epsilon$, such that $\|v\|_F = 1$. Now we will prove with high-probability that $\big\langle ((U^\top \Si U) - \Id)v, v \big\rangle$ is small. Consider the the following quadratic form
\begin{align}
v^\top (U^\top \Si U) v = \frac1m \sum_{j \in [m]} \tr(v^\top (U^\top \xij (\xij)^\top U) v) &= \frac1m \sum_{j \in [m]} \tr((\xij)^\top U v v^\top U^\top \xij )
\end{align}
$\xij \sim {\cal N}(0,{\mathbf I}_{d\times d}$) are i.i.d.~standard Gaussian random vectors. We will use Hanson-Wright inequality (Lemma ~\ref{lem:hanson-wright}) to prove that the above quadratic form concentrates around its mean. In  Lemma~\ref{lem:gauss_pseudo_inner} (which is a straightforward Corollary of Hanson-Wright inequality), by setting $a\leftarrow U v, b  \leftarrow U v$, we get that with a probability of at least $1 - \delta$
\begin{equation}
\bigg|v^\top ((U^\top \Si U) - \Id)v \bigg| \leq c \max\bigg(\sqrt{\frac{ \log(1/\delta)}{m}}, \frac{\log(1/\delta)}{m} \bigg) := \Delta_\epsilon
\end{equation}
For brevity, let $E = (U^\top \Si U) - \Id$. Notice that $E$ is a real symmetric matrix, therefore it has an eigen decomposition. Then, let $v' \in \mathcal{S} \subset \bR^{r}$ be the largest ``eigenvector'' of $E$, such that $(v')^\top E v' = \|E\| = \max_{\|\widetilde{v}\| = 1} \widetilde{v}^\top E \widetilde{v} = \max_{\|\widetilde{v}\| = \|\widetilde{v}'\|_F = 1} \widetilde{v}^\top E \widetilde{v}'$. Then there exists some $v \in N_\epsilon$ such that $\|v'-v\| \leq \epsilon$. 
\begin{align}
\|E\|_F = (v')^\top E v &= v^\top E v + (v'-v)^\top E v+  (v')^\top E (v' - v) \\
&\leq v^\top E v + \|v'-v\|\|E\| \|v\| +  \|v'\| \|E\| \|v' - v\|   \\
&\leq v^\top E v  + 2 \epsilon \|E\| 
\end{align}
Re-arranging and setting $\epsilon=1/4$, and $c \gets 2c$, we get
\begin{align}
\|(U^\top \Si U) - \Id\| = \|E\| \leq \Delta_{\frac14} = \Delta.
\end{align}
where $\Delta = c \max\bigg(\sqrt{\frac{r\, \log(9/\delta)}{m}}, \frac{r\,\log(9/\delta)}{m} \bigg)$. 
If $m \geq \max(1, 4c^2) {r \log(27t/\delta)}$, then $\Delta \leq \alpha \leq 1/2$.

Thus with a probability of at least is is also implies that
\begin{align}
\|(U^\top \Si U)^\inv\| = (\sigma_{\min}(U^\top \Si U))^{-1} \leq \frac1{1 - \alpha} \leq 2.
\end{align}

Using similar arguments we can also prove that with a probability of at least $1 - \delta$
\begin{align}
\big\| (U^\top \Si (\oU Q-U) Q^\invert \ovi\big\| &\leq \|U^\top (\oU Q-U) Q^\invert \ovi\big\| + \alpha \|(\oU Q-U) Q^\invert \ovi\big\| \\
&\leq \|U^\top (\Id - \oU (\oU)^\top) U Q^\invert \ovi\big\| + \alpha \|(\oU Q-U) Q^\invert \ovi\big\| \\
&\leq \|U^\top (\Id - \oU (\oU)^\top)^2 U Q^\invert \ovi\big\| + \alpha \|(\oU Q-U) Q^\invert \ovi\big\| \\
&\leq \|U^\top (\Id - \oU (\oU)^\top) (\oU Q-U) Q^\invert \ovi\big\| + \alpha \|(\oU Q-U) Q^\invert \ovi\big\| \\
&\leq \|(\Id - \oU (\oU)^\top) U\| \|(\oU Q-U)Q^\invert \ovi\big\| + \alpha \|(\oU Q-U) Q^\invert \ovi\big\| \\
&\leq (\|(\Id - \oU (\oU)^\top)U\| + \alpha) \|(\oU Q-U) Q^\invert \ovi\big\| \\ &\leq (1+\alpha) \|(\oU Q-U) Q^\invert \ovi\big\|\,,
\end{align}
Using similar arguments we can also prove that with a probability of at least $1 - \delta$
\begin{align}
\big\| U^\top \Si U_\perp U_\perp^\top \oU \ovi\big\| &\leq \alpha \big\| U_\perp^\top \oU \ovi\big\|
\end{align}
and with a probability of at least $1 - \delta$
\begin{align}
\big\| U^\top z^{(i)}  \,\big\| &\leq \sigma \alpha
\end{align}

Finally setting $\delta \leftarrow \delta/3/t$ and taking the union bound over three bounds over all the tasks in $[t]$ gets us the desired result.
\end{proof}
Here we prove the approximate incoherence of the intermediate $\v$ and the spectrum of intermediate $\VV$.
	\begin{lemma}[Incoherence of intermediate $\vi$]\label{lem:incoherence_logt} If 
	$ \|\dv\|_F  \leq  (1-\frac1{\sqrt{2}})  \sqrt{(t/r) \eigmin((r/t) W^*)}$, %
	$\|\dv\|_\inftyone^2 \leq \Ord(\mu \eigmin((r/t) W^*)) $, %
	$\|Q\| \leq 1$ and $\frac12 \leq \sigma_{\min}(Q)$, and \eqref{eq:v_update_hf2_rkr_logt} and \eqref{eq:v_update_hinf_rkr_logt} are true,
	then
	\begin{align}
	\|\vi\| \leq \Ord\Big(\mu\,\eigmin((r/t)W)\Big) \text{\;, and } \eigmin((r/t)W^*) \leq 2\eigmin((r/t)W)
	\end{align}
\end{lemma}

\begin{proof}[Proof of Lemma~\ref{lem:incoherence_logt}]
\begin{align}
\|\vi\| &\leq  \|Q^{-1}\ovi\| + \|\vi - Q^{-1} \ovi\| \leq  2\|\ovi\| + \|\dvi\| \label{eq:v_update_v_norm_ub_rkr_logt} \\ 
\implies \|\vi\|^2 &\leq \Ord(\|\ov\|_\inftyone^2) + \Ord( \|\dv\|_\inftyone^2) \leq \Ord\Big( \mu \eigmin((r/t) W^*) \Big)
\end{align}
where the second inequality use the definition $\dvi = \vi - Q^{-1} \ovi$ and $\|Q^{-1}\| \leq 2$ (as $\sigma_{\min}(Q) \geq \frac12$), the third inequality use the fact that $a+b \leq 2 a^2 + 2 b^2$ an \eqref{eq:v_update_hinf_rkr_logt}, and the final inequality uses $\|\dv\|_\inftyone \leq \|\v\|_\inftyone$.

Notice that $W = \v^T \v$ and $W^* = (\ov)^T \ov$. Thus $\sqrt{\eigmin((r/t) W)} =  \sqrt{(r/t)} \sigmin(V)$ and $\sqrt{\eigmin((r/t) W^*)} =  \sqrt{(r/t)} \sigmin(W^*)$, and both $W$ and $W^*$ are positive semi-definite (PSD). Similarly, using $\sigma_{\min}(Q^{-1}) = \sigma_{\min}(((\oU)^\top U)^{-1}) \geq 1$ and Lemma~\ref{lem:eig_prod} we can get that 
\begin{align}
\sqrt{\eigmin((r/t) W^*)} \leq \sqrt{\sigma_{\min}^2(Q^{-1}) \eigmin((r/t) W^*)} \leq \sqrt{(r/t) \eigmin(Q^{-1} (\ov)^T \ov Q^{-\top})} \leq \sqrt{(r/t)} \sigmin(\ov Q^{-T})
\end{align}
Therefore, instead of analyzing the relation between $\eigmin(W)$ and $\eigmin(W^*)$, we can analyze the relation between  $\sigmin(V)$ and $\sigmin(V^*)$. Notice that $\ov Q^{-T} = \v + \ov Q^{-T} - \v$. Then by Weyl's inequality (Lemma~\ref{lem:weyls}, by setting $A \leftarrow \ov Q^{-T}$, $B \leftarrow \v$, and $C \leftarrow \ov Q^{-T} - \v$) we get that
\begin{align}
\sqrt{\eigmin((r/t) W^*)} 
\leq  \sqrt{(r/t)} \sigmin(\ov Q^{-T}) 
&\leq  \sqrt{(r/t)} \sigmin(\v) +  \sqrt{(r/t)} \|\v - \ov Q^{-T}\| \\
&\leq \sqrt{\eigmin((r/t)W)}  +  \sqrt{(r/t)}  \|\dv\| \\ 
&\leq \sqrt{\eigmin((r/t) W)} + \sqrt{(r/t)} \|\dv\|_F \\
&\leq \sqrt{\eigmin((r/t) W)} + (1-\frac1{\sqrt{2}}) \sqrt{\eigmin((r/t) W^*)}
\end{align}
where %
the last inequality uses $ \|\dv\|_F  \leq  (1-\frac1{\sqrt{2}}) \sqrt{(t/r) \eigmin((r/t) W^*)}$. %
Finally we get the desired result by re-arranging the terms.
\end{proof}

\subsection{Analysis of update on $U$}

\subsubsection{Proof of Lemma~\ref{lem:u_update_rkr_logt}}
\label{sec:u_update_rkr_logt_pf}
\begin{proof}[Proof of Lemma~\ref{lem:u_update_rkr_logt}]
	In this proof for brevity, we will first set that $\cT_k \leftarrow [t]$, $|\cT_k| = t/K \leftarrow t$, $\Si_2 \leftarrow \Si = \frac1m \sum_{j \in [m]} \xij (\xij)^\top$. This can be done due to the approximate equivalence of the subset $\cT_k$ by Lemma~\ref{lem:shuffling_rkr_logt}. Finally at the end of the analysis we will reset $\cT_k \leftarrow \cT_k$, $|\cT_k| = t/K \leftarrow t/K$, $\Si_2 \leftarrow \Si_2 = \frac2m \sum_{j \in [m/2+1, m]} \xij (\xij)^\top$.

	Recall that
	\begin{align}
	\hU- U^*Q %
	&= \cVih (\cI + \cE_1)(-(\cVih \cH + \cE_2)(\oU Q) + \cVih (\sum_{i \in [t]} \zi (\vi)^\top)) %
	\end{align}
	where $\cE_1 = (\cVih \cA \cVih)^\inv - \cI$ and $\cE_2 = \cVih \hcH - \cVih \cH$, and $F = \hU - \oU Q +  \cV^{-1} (\cH (\oU Q))$. Therefore
	
	\begin{align}
	\|F\|_F %
	&\leq %
	\|\cVih\|_F (\|\cE_1\|_F \|\cVih \cH (\oU Q)\|_F + \|\cI + \cE_1\|_F(\|\cE_2(\oU Q)\|_F + \|  \cVih( \sum_{i \in [t]} \zi (\vi)^\top)) \|_F)) 
	\label{eq:u_update_F_rkr_logt}
	\end{align}
	
	We can trivially bound $\|\cVih\|_F$ as follows. For all $\|U\|_F = 1$, the following is true.
	\begin{align} \label{eq:u_update_winv_rkr_logt}
	\|\cVih(U)\|_F = \|U\VVih\|_F \leq \|U\|_F \|\VVih\| \leq \sqrt{\frac{r/t}{\eigmin}}
	\end{align}

		$\Omega({\mu dr^2 \log(1/\delta)}) \leq {mt}$ and approximate incoherence of intermediate $V$ \eqref{eq:v_update_incoherence_rkr_logt} implies that $\Omega(dr \frac{\|\v\|_\inftyone^2 }{\eigmin(W)/t} \log(1/\delta) )\leq\Omega({\mu dr^2 \log(1/\delta)}) \leq {mt}$, then by	Lemma~\ref{lem:tail5} we have that, with a probability of at least $1 - \delta/3$
		\begin{align}
		\|\cE_1\|_F 
		\leq 3c \sqrt{\frac{dr\,\|\v\|_\inftyone^2 \log(27/\delta)}{m \, \eigmin(W)}}
		\leq 3c \sqrt{\frac{\mu d r^2 \log(27/\delta)}{mt}} 
		\leq \frac12 
		\end{align}
		This also implies that
		\begin{align} \label{eq:u_update_operator_rkr_logt}
		\|\cI + \cE_1\|_F \leq \|\cI\| + \|\cE_1\|_F \leq 1 + \Delta \leq \frac32
		\end{align}
	By Lemma~\ref{lem:tail6}, 
	\begin{align} \label{eq:u_update_hexp_rkr_logt}
	\|(\cVih \cH ) (\oU Q)\|_F \leq \|\dv\|_F
	\end{align}
	and with a probability of at least $1 - \delta/3$
	\begin{align}
	\|\cE_2 (\oU Q)\|_F &\leq c (\min(\|\dv\|_F \frac{\|\v\|_\inftyone}{\sqrt{\eigmin(W)}},\|\dv\|_\inftyone )\sqrt{\frac{dr\,\log(15/\delta)}{m}} + \|\dv\|_\inftyone \frac{\|\v\|_\inftyone}{\sqrt{\eigmin(W)}} \frac{dr\,\log(15/\delta)}{m})
	\end{align}
	Using the approximate incoherence of $\v$ \eqref{eq:v_update_incoherence_rkr_logt} in the above inequality, we get that
	\begin{align} \label{eq:u_update_hvar_rkr_logt}
	\|\cE_2 (\oU Q)\|_F &\leq c (\min(\|\dv\|_F \sqrt{\frac{\mu r}{t}},\|\dv\|_\inftyone )\sqrt{\frac{dr\,\log(15/\delta)}{m}} + \|\dv\|_\inftyone 
	\sqrt{\frac{\mu r}{t}} \cdot \frac{dr\,\log(15/\delta)}{m})
	\end{align}
	By Lemma~\ref{lem:tail8} with a probability of at least $1 - \delta/3$
	\begin{align}
	\| \sum_{i \in [t]} \cVih (\zi (\vi)^\top)) \|_F &\leq 
	\Ord\Big(\sigma \sqrt{\frac{dr}{m} \log\Big(\frac{t}\delta\Big)  \log\Big(\frac{r}\delta\Big)} \Big)
	\label{eq:u_update_noise_rkr_logt}
	\end{align}
	Finally taking union bound over the above results and using Lemma~\ref{lem:v_update_rkr_logt}, we can bound each of the terms constituting $F$.
	Using \eqref{eq:u_update_winv_rkr_logt}, \eqref{eq:u_update_hexp_rkr_logt} and \eqref{eq:v_update_hf_rkr_logt} (recall that we set $t \leftarrow t/K$) we get
	\begin{align}
	\|\cVi \cH (\oU Q)\|_F &\leq \|\cVih\|_F \|\cVih \cH (\oU Q)\|_F \\
	&\leq  \sqrt{\frac{r}{t}} \frac{\|\dv\|_F}{\sqrt{\eigmin}} 
	\leq \Ord\Big(\sqrt{\frac{\eigmax^*}{\eigmin^*}} \sqrt{ \frac{\log(\frac{t}\delta)}{\log(\frac{1}\delta)} }  \|(\Id - \oU (\oU)^\top)U\|_F + \SNRre \sqrt{\frac{r^2 \log(\frac{t}{\delta})}{m}} \Big)  \label{eq:u_update_F1_rkr_logt}
	\end{align}
	Using \eqref{eq:u_update_winv_rkr_logt}, \eqref{eq:u_update_operator_rkr_logt},
	\eqref{eq:u_update_hexp_rkr_logt}, and \eqref{eq:v_update_hf_rkr_logt} we get
	\begin{align} 
	&\;\;\;\;\; \|\cVih\|_F \|\cE_1\|_F \|\cVih \cH (\oU Q)\|_F \\
	&\leq  \Ord\Big(\sqrt{\frac{\mu d r^2 \log(\frac1{\delta})}{mt}} \sqrt{\frac{r}{t}}  \frac{\|\dv\|_F}{\sqrt{\eigmin}}\Big) \\
	&\leq \Ord\Big(\sqrt{\frac{\eigmax^*}{\eigmin^*}\frac{\mu d r^2 \log(\frac{t}\delta)}{mt}} \|(\Id - \oU (\oU)^\top)U\|_F + \sqrt{\frac{\mu d r^2 \log(\frac{1}\delta)}{mt}} \SNRre \sqrt{\frac{r^2 \log(\frac{t}{\delta})}{m}} \Big) \label{eq:u_update_F2_rkr_logt}
	\end{align}
	Using  \eqref{eq:u_update_winv_rkr_logt}, \eqref{eq:u_update_operator_rkr_logt}, \eqref{eq:u_update_hvar_rkr_logt}, \eqref{eq:v_update_hf_rkr_logt} and \eqref{eq:v_update_hinfty_rkr_logt} we get
	\begin{align}
	&\;\;\;\;\; \|\cVih\|_F \|\cI + \cE_1\|_F(\|\cE_2(\oU Q)\|_F \\
	&\leq  \Ord\Big(\sqrt{\frac{r}{t}} \min\Big(\frac{\|\dv\|_F}{\sqrt{\eigmin}} \sqrt{\frac{\mu r}{t}},\frac{\|\dv\|_\inftyone}{\sqrt{\eigmin}}\Big)\sqrt{\frac{dr\,\log(\frac1\delta)}{m}} + \sqrt{\frac{r}{t}} \frac{\|\dv\|_\inftyone}{\sqrt{\eigmin}}
	\sqrt{\frac{\mu r}{t}} \frac{dr\,\log(\frac1\delta)}{m} \Big) \\
	&\leq  \Ord\Big(\min\Big(\sqrt{\frac{\eigmax^*}{\eigmin^*}\frac{\mu d r^2 \log(\frac{t}\delta)}{mt}} \|(\Id - \oU (\oU)^\top)U\|_F + \sqrt{\frac{\mu d r^2 \log(\frac{1}\delta)}{mt}} \SNRre \sqrt{\frac{r^2 \log(\frac{t}{\delta})}{m}},\\
	&\;\;\;\;\;\;\;\;\;\;\;\;\;\;\;\;\;\;\;\;\;
	\sqrt{\frac{\mu d r^2 \log(\frac{t}\delta)}{mt}} \|(\Id - \oU (\oU)^\top)U\| + \sqrt{\frac{d r \log(\frac{1}\delta)}{m}} \SNRre \sqrt{\frac{r^2 \log(\frac{t}{\delta})}{mt}}\Big) + \\
	&\;\;\;\;\;\;\;\;\;\;\;\; \frac{\mu d r^2 \log(\frac{t}\delta)}{mt} \|(\Id - \oU (\oU)^\top)U\| + \frac{\sqrt{\mu} d r \sqrt{r} \log(\frac{1}\delta)}{m\sqrt{t}} \SNRre \sqrt{\frac{r^2 \log(\frac{t}{\delta})}{mt}}\Big)  \label{eq:u_update_F3_rkr_logt}
	\end{align}
	Using  \eqref{eq:u_update_winv_rkr_logt}, \eqref{eq:u_update_operator_rkr_logt}, \eqref{eq:u_update_noise_rkr_logt}, and \eqref{eq:v_update_incoherence_rkr_logt} we get
	\begin{align}
	\|\cVih\|_F \|\cI + \cE_1\|_F \| \sum_{i \in [t]} \cVih (\zi (\vi)^\top)) \|_F \leq \Ord\Big(\SNRre \sqrt{\frac{dr^2 \log(\frac{t}\delta)  \log(\frac{r}\delta)}{mt} } \Big)
	\label{eq:u_update_F4_rkr_logt}
	\end{align}
	Substituting \eqref{eq:u_update_F1_rkr_logt}, \eqref{eq:u_update_F2_rkr_logt}, \eqref{eq:u_update_F3_rkr_logt}, and \eqref{eq:u_update_F4_rkr_logt} in \eqref{eq:u_update_F_rkr_logt} we get
	\begin{align}
	\|F\|_F &\leq \|\cVih\|_F (\|\cE_1\|_F \|\cVih \cH (\oU Q)\|_F + \|\cI + \cE_1\|_F(\|\cE_2(\oU Q)\|_F + \| \sum_{i \in [t]} \cVih (\zi (\vi)^\top)\|_F) )\\
	&\leq \Ord\Big(\sqrt{\frac{\eigmax^*}{\eigmin^*}\frac{\mu d r^2 \log(\frac{t}\delta)}{mt}} \|(\Id - \oU (\oU)^\top)U\|_F + \sqrt{\frac{\mu d r^2 \log(\frac{1}\delta)}{mt}} \SNRre \sqrt{\frac{r^2 \log(\frac{t}{\delta})}{m}} \Big) + \\
	&\;\;\;\;\; \Ord\Big(\frac{\mu d r^2 \log(\frac{t}\delta)}{mt} \|(\Id - \oU (\oU)^\top)U\| + \frac{\sqrt{\mu} d r \sqrt{r} \log(\frac{1}\delta)}{mt} \SNRre \sqrt{\frac{r^2 \log(\frac{t}{\delta})}{m}}\Big)  + \Ord\Big(\SNRre \sqrt{\frac{dr^2 \log(\frac{t}\delta)  \log(\frac{r}\delta)}{mt} } \Big) \\
	&\leq \Ord\Big(\sqrt{\frac{\eigmax^*}{\eigmin^*}\frac{\mu d r^2 \log(\frac{t}\delta)}{mt}} \|(\Id - \oU (\oU)^\top)U\|_F + \sqrt{\frac{\mu d r^2 \log(\frac{1}\delta)}{mt}} \SNRre \sqrt{\frac{r^2 \log(\frac{t}{\delta})}{m}} \Big) + \Ord\Big(\SNRre \sqrt{\frac{dr^2 \log(\frac{t}\delta)  \log(\frac{r}\delta)}{mt} } \Big) \\
	&\leq \Ord\Big(\sqrt{\frac{\eigmax^*}{\eigmin^*}\frac{\mu d r^2 \log(\frac{t}\delta)}{mt}} \|(\Id - \oU (\oU)^\top)U\|_F + \sqrt{\frac{\mu d r^2 \log(\frac{t}\delta) \log(\frac{r}\delta)}{mt}} \SNRre \sqrt{\frac{r^2 \log(\frac{1}{\delta})}{m}} \Big) %
	\end{align}
where the second-last inequality used the fact that $mt \geq \Omega(\mu dr^2 \log(\frac{t}{\delta}))$.
Finally, by resetting $\cT_k \leftarrow \cT_k$, $|\cT_k| = t/K \leftarrow t/K$, $\Si_2 \leftarrow \Si_2 = \frac2m \sum_{j \in [m/2+1, m]} \xij (\xij)^\top$, we obtain the desired result.
\end{proof}

\subsubsection{Supporting lemmas for the analysis of update on $U$}

\begin{lemma}
	\label{lem:tail5}
	If 
	$\max(1, 4c^2) dr \frac{\|\v\|_\inftyone^2 }{\eigmin(W)/t} \log(27/\delta) \leq {mt}$,
	then with a probability of at least $1-\delta/3$, 
	\begin{align}
	\|\cE_1\|_F \leq 
	3c \sqrt{\frac{dr\,\|\v\|_\inftyone^2 \log(27/\delta)}{m \, \eigmin(W)}}
	\end{align}
\end{lemma}
\begin{proof}[Proof of Lemma~\ref{lem:tail5}]
	Let $\mathcal{S}_F = \{U \in \bR^{d \times r} \,|\, \|U\|_F = 1\}$ be the set of all real matrices of dimensions $d \times r$ with unit Frobenius norm. For $\epsilon \leq 1$, there exists an $\epsilon$-net, $N_\epsilon \subset \mathcal{S}_F$, of size $(1 + 2/\epsilon)^{dr}$ with respect to the Frobenius norm~\citep[Lemma 5.2]{vershynin2010introduction}. That is for any $U' \in \mathcal{S}_{F}$, there exists some $U \in N_\epsilon$ such that $\|U'-U\|_F \leq \epsilon$.
	
	Consider a $U \in N_\epsilon$, such that $\|U\|_F = 1$. Now we will prove with high-probability that $\big\langle (\cVih \cA \cVih - \cI)(U), U \big\rangle$ is small. Consider the the following quadratic form
	\begin{align}
	\big\langle (\cVih \cA \cVih)(U), U \big\rangle &= \Big\langle \sum_{i \in [t]} \Si U \VVih \vi (\vi)^\top \VVih, U\Big\rangle \\ 
	&= \sum_{i \in [t]} \frac1m \sum_{j \in [m]}  (\xij)^\top (U \VVih \vi (\vi)^\top \VVih U^\top) \xij
	\end{align}
	where $\Si = \frac1m \sum_{j \in [m]}  \xij (\xij)^\top$ and $\xij \sim {\cal N}(0,{\mathbf I}_{d\times d}$) are i.i.d.~standard Gaussian random vectors and $\VV = \sum_{i \in [t]} \vi (\vi)^\top$ is rank-$r$ matrix. We will use Hanson-Wright inequality (Lemma~\ref{lem:hanson-wright}) to prove that the above quadratic form concentrates around its mean. Notice that the the expectation of $\big\langle (\cVih \cA \cVih)(U), U \big\rangle $ is $\Ip{\cI(U)}{U}$.
	\begin{align}\label{eq:tail5_expect}
	\sum_{i \in [t]} \bE \Big[ \Big\langle \Si U \VVih \vi (\vi)^\top \VVih , U\Big\rangle \Big] &= \Big\langle U \VVih  \sum_{i \in [t]} \vi (\vi)^\top \VVih, U \Big\rangle = \Ip{U}{U} = \|U\|_F^2 = 1\,.
	\end{align}
	We will also need the following bounds to apply the Hanson-Wright inequality. Recall that $\|\v\|_\inftyone = \max_{i \in [t]} \|\vi\|$. Then,
	\begin{align}
	\max_{i \in [t]} \| U \VVih \vi (\vi)^\top \VVih U^\top\| = \max_{i \in [t]} \| U \VVih \vi \|^2
	&\leq \max_{i \in [t]} \| U\|^2 \|\VV\|^2 \|\vi\|^2
	\leq \frac{\|\v\|_\inftyone^2}{\eigmin(\VV)}\,\label{eq:tail5_max}
	\end{align}
	Also note that,
	\begin{align}
	\sum_{i \in [t]} \| U \VVih \vi (\vi)^\top \VVih U^\top\|_F^2
	&= \sum_{i \in [t]} \| U \VVih \vi\|^4 \\
	&= \max_{i \in [t]} \| U \VVih \vi\|^2 \sum_{i \in [t]} \Ip{U \VVih \vi}{U \VVih \vi} \\
	&\leq \frac{\|\v\|_\inftyone^2}{\eigmin(\VV)}
	\end{align}
	where the last inequality used \eqref{eq:tail5_expect} and \eqref{eq:tail5_max}. 
	Then by Hanson-Wright inequality (Lemma~\ref{lem:hanson-wright}), with probability at least $1 - \delta/|N_\epsilon|$
	\begin{align}
	\big| \big\langle (\cVih \cA \cVih - \cI)(U), U \big\rangle \big| = \big| \Big\langle \sum_{i \in [t]} \frac1m \sum_{j \in [m]} \xij (\xij)^\top U \VVih \vi (\vi)^\top \VVih, U\Big\rangle - \Ip{U}{U} \big| \leq \Delta_\epsilon
	\end{align}
	where $\Delta_\epsilon = c \max(\sqrt{\frac{\|\v\|_\inftyone^2 \log(|N_\epsilon|/\delta)}{m \, \eigmin(W)}}, \frac{\|\v\|_\inftyone^2 \log(|N_\epsilon|/\delta)}{m \, \eigmin(W)})$. Taking union bound over all $U \in N_\epsilon$ implies that with probability at least $1 - \delta$
	\begin{align}
	\big| \big\langle (\cVih \cA \cVih - \cI)(U), U \big\rangle \big| \leq \Delta_\epsilon \, \text{ , \;for all } U \in N_\epsilon\,.
	\end{align}
	
	For brevity, let $\cE_1' (U) = (\cVih \cA \cVih - \cI)(U)$. Notice that $\cE_1'$ is self-adjoint, therefore it has an eigen decomposition with respect to the Frobenius norm. Then, let $U' \in \mathcal{S}_F \subset \bR^{d \times r}$ be the largest ``eigenmatrix'' of $\cE_1$, such that $\Ip{\cE_1'(U)}{U} = \|\cE_1'\|_F = \max_{\|\widetilde{U}\|_F = 1} \Ip{\cE_1'(\widetilde{U})}{\widetilde{U}} = \max_{\|\widetilde{U}\|_F = \|\widetilde{U}'\|_F = 1} \Ip{\cE_1'(\widetilde{U})}{\widetilde{U}'}$. Then there exists some $U \in N_\epsilon$ such that $\|U'-U\|_F \leq \epsilon$. 
	\begin{align}
	\|\cE_1'\|_F = \Ip{\cE_1' (U')}{U'} &= \Ip{\cE_1' (U)}{U} + \Ip{\cE_1' (U' - U)}{U} +  \Ip{\cE_1' (U')}{U' - U} \\
	&\leq \Ip{\cE_1' (U)}{U} + \|\cE_1'\|_F \|U' - U\|_F (\|U\|_F +  \|U'\|_F)  \\
	&\leq \Ip{\cE_1' (U)}{U} + 2 \epsilon \|\cE_1'\|_F 
	\end{align}
	Re-arranging and setting $\epsilon=1/4$, and $c \gets 2c$, we get
	\begin{align}
	\|\cVih \cA \cVih - \cI\|_F = \|\cE_1'\|_F \leq \Delta_{\frac14} = \Delta.
	\end{align}
	where $\Delta = c \max\Big(\sqrt{\frac{dr\,\|\v\|_\inftyone^2 \log(9/\delta)}{m \, \eigmin(W)}}, \frac{dr\,\|\v\|_\inftyone^2 \log(9/\delta)}{m \, \eigmin(W)}\Big)$. 
	
	For brevity, let $\hcA (U) = (\cVih \cA \cVih)(U)$. Notice that $\hcA$ is self-adjoint, therefore it has an eigen decomposition with respect to the Frobenius norm. Then, let $U' \in \mathcal{S}_F \subset \bR^{d \times r}$ be the smallest ``eigenmatrix'' of $\hcA$, such that $\Ip{\hcA(U)}{U} = \lambda_{\min}(\hcA) = \min_{\|\widetilde{U}\|_F = 1} \Ip{\hcA(\widetilde{U})}{\widetilde{U}} = \min_{\|\widetilde{U}\|_F = \|\widetilde{U}'\|_F = 1} \Ip{\hcA(\widetilde{U})}{\widetilde{U}'}$. Then there exists some $U \in N_\epsilon$ such that $\|U'-U\|_F \leq \epsilon$. 
	\begin{align}
	\lambda_{\min}(\hcA) = \big\langle\hcA (U'), U' \big\rangle &= \Ip{\cI(U)}{U} + \big\langle (\hcA-\cI) (U), U \big\rangle + \big\langle\hcA (U' - U) , U \big\rangle +  \big\langle \hcA (U'), U' - U \big\rangle \\
	&\geq 1 - \big|\big\langle (\hcA-\cI) (U), U \big\rangle \big| - \lambda_{\min}(\hcA) \|U' - U\|_F (\|U\|_F + \|U'\|_F )\\
	&\geq 1 - \Delta_\epsilon - 2 \epsilon \lambda_{\min}(\hcA)
	\end{align}
	Re-arranging and setting $\epsilon=1/4$, and $c \gets 2c$, we get that $\lambda_{\min}(\hcA) \geq \frac23 (1 - \Delta)$. Therefore,
	\begin{align}
	\|(\cVih \cA \cVih)^\inv\|_F = \frac1{\lambda_{\min}(\hcA)} \leq \frac{3}{2(1-\Delta)}.
	\end{align}
	where $\Delta = c \max\Big(\sqrt{\frac{dr\,\|\v\|_\inftyone^2 \log(9/\delta)}{m \, \eigmin(W)}}, \frac{dr\,\|\v\|_\inftyone^2 \log(9/\delta)}{m \, \eigmin(W)}\Big)$. 
If $\max(1, 4c^2) dr \frac{\|\v\|_\inftyone^2 }{\eigmin(W)/t} \log(27/\delta) \leq {mt}$, we get that $\Delta \leq c\sqrt{\frac{dr\,\|\v\|_\inftyone^2 \log(9/\delta)}{m \, \eigmin(W)}}  \leq \frac12$. 

By setting $A + B = \cVih \cA \cVih$ and $A = \cI$ such that $\cE_1 = (A+B)^{-1} - B^{-1}$, in the Woodburry matrix inverse identity~\eqref{eq:woodbury_identity} (Lemma~\ref{lem:woodbury_identity}) we get that, with a probability of at least $1 - \delta$
\begin{align}
\|(A+B)^{-1} - A^{-1}\|_F &\leq \|A^{-1}\|_F\|B\|_F\|(A+B)^{-1}\|_F \\
\implies \|\cE_1\|_F \leq \Big\|(\cVih \cA \cVih)^\inv - \cI\Big\|_F &\leq \|\cI^\inv\|_F \|\cVih \cA \cVih - \cI \|_F \|(\cVih \cA \cVih)^\inv\|_F \\
&\leq 1 \cdot \Delta \cdot \frac3{2(1-\Delta)} 
\leq 3 \Delta
\leq 3c \sqrt{\frac{dr\,\|\v\|_\inftyone^2 \log(9/\delta)}{m \, \eigmin(W)}}
\end{align}	
Finally, setting $\delta \leftarrow \delta/3$ get us the desired result.
\end{proof}

	\begin{lemma}
	\label{lem:tail6}
	$\|(\cVih \cH ) (\oU Q)\|_F \leq \|\dv\|_F$ 
	and with a probability of at least $1 - \delta/3$
	\begin{align}
	\|\cE_2 (\oU Q)\|_F &\leq c (\min(\|\dv\|_F \frac{\|\v\|_\inftyone}{\sqrt{\eigmin(W)}},\|\dv\|_\inftyone )\sqrt{\frac{dr\,\log(15/\delta)}{m}} + \|\dv\|_\inftyone \frac{\|\v\|_\inftyone}{\sqrt{\eigmin(W)}} \frac{dr\,\log(15/\delta)}{m})
	\end{align}
\end{lemma}
\begin{proof}[Proof of Lemma~\ref{lem:tail6}]
	First we prove that the expected value $\bE[(\cVih \hcH ) (\oU Q)] = (\cVih \cH ) (\oU Q)$ is bounded.
	\begin{align}\label{eq:tail6_expect_ub1}
	\|(\cVih \cH ) (\oU Q)\|_F &= \max_{\|U\|_F = 1} \Ip{(\cVih \cH ) (\oU Q)}{U} \\
	&= \max_{\|U\|_F = 1} \sum_{i \in [t]} \Big\langle \oU Q \dvi (\vi)^\top \VVih, U \Big\rangle \\
	&= \max_{\|U\|_F = 1} \sum_{i \in [t]} \Big\langle \oU Q \dvi, U \VVih \vi \Big\rangle \\
	&\leq \max_{\|U\|_F = 1} \sqrt{\sum_{i \in [t]} \|\oU Q \dvi \|^2} \sqrt{\sum_{i \in [t]} \Ip{U \VVih \vi}{U \VVih \vi}} \\
	&\leq \max_{\|U\|_F = 1}  \|Q\|\sqrt{\sum_{i \in [t]} \|\dvi \|^2} \sqrt{\Ip{U \sum_{i \in [t]} \VVih \vi (\vi)^\top \VVih}{U}} \\
	&\leq \max_{\|U\|_F = 1} \|\dv\|_F \|U\|_F = \|\dv\|_F
	\end{align}
	where used the fact that $\Ip{AB}{C} = \Ip{A}{C B^\top}$ and $(\oU)^\top \oU = \Id$. 
	
	Let $\mathcal{S}_F = \{U \in \bR^{d \times r} \,|\, \|U\|_F = 1\}$ be the set of all real matrices of dimensions $d \times r$ with unit Frobenius norm. For $\epsilon \leq 1$, there exists an $\epsilon$-net, $N_\epsilon \subset \mathcal{S}_F$, of size $(1 + 2/\epsilon)^{dr}$ with respect to the Frobenius norm~\citep[Lemma 5.2]{vershynin2010introduction}. That is for any $U' \in \mathcal{S}_{F}$, there exists some $U \in N_\epsilon$ such that $\|U'-U\|_F \leq \epsilon$.
	
	Consider a $U \in N_\epsilon$, such that $\|U\|_F = 1$. Now we will prove with high-probability that $\big\langle (\cVih \cH) (\oU Q)(U) - \cVih(\sum_{i \in [t]} \Si \oU Q \dvi (\vi)^\top), U \big\rangle$ is small. Consider the the following quadratic form 
	\begin{align}
	\big\langle \cVih(\sum_{i \in [t]} \Si \oU Q \dvi (\vi)^\top), U \big\rangle &= \Big\langle \sum_{i \in [t]} \Si \oU Q \dvi (\vi)^\top \VVih, U\Big\rangle \\ 
	&= \sum_{i \in [t]} \frac1m \sum_{j \in [m]}  (\xij)^\top (\oU Q \dvi (\vi)^\top \VVih U^\top) \xij
	\end{align}
	where $\Si = \frac1m \sum_{j \in [m]}  \xij (\xij)^\top$ and $\xij \sim {\cal N}(0,{\mathbf I}_{d\times d}$) are i.i.d.~standard Gaussian random vectors and $\VV = \sum_{i \in [t]} \vi (\vi)^\top$ is rank-$r$ matrix. We will use Hanson-Wright inequality (Lemma~\ref{lem:hanson-wright}) to prove that the above quadratic form concentrates around its mean. Notice that the the expectation of $\big\langle \cVih(\sum_{i \in [t]} \Si \oU Q \dvi (\vi)^\top), U \big\rangle$ is $\big\langle\VVih \cH(U), U \big\rangle$.
	\begin{align}\label{eq:tail6_expect}
	\bE[\cVih(\sum_{i \in [t]} \Si \oU Q \dvi (\vi)^\top)] &=
	\cVih(\sum_{i \in [t]} \oU Q \dvi (\vi)^\top) = (\cVih \cH ) (\oU Q)\,.
	\end{align}
	We will also need the following bounds to apply the Hanson-Wright inequality. Recall that $\|\dv\|_\inftyone = \max_{i \in [t]} \|\dvi\|$ and $\|\v\|_\inftyone = \max_{i \in [t]} \|\vi\|$. Then,
	\begin{align}
	\max_{i \in [t]} \|\oU Q \dvi (\vi)^\top \VVih U^\top\| &\leq \max_{i \in [t]} \|\oU\| \|Q\| \|\dvi\| \max_{i \in [t]} \frac{\|\vi\|}{\sqrt{\eigmin(W)}}  \|U\| 
	\leq \|\dv\|_\inftyone \frac{\|\v\|_\inftyone}{\sqrt{\eigmin(W)}}
	\end{align}
	Also note that
	\begin{align}
	\sum_{i \in [t]} \|\oU Q \dvi (\vi)^\top \VVih U^\top\|_F^2 &= 
	\sum_{i \in [t]} \|\oU Q \dvi\|^2 \|U \VVih  \vi\|^2 \\
	&\leq (\sum_{i \in [t]} \|\oU Q \dvi\|^2)(\max_{i \in [t]} \|U \VVih  \vi\|^2) \\
	&\leq (\|Q\|^2 \sum_{i \in [t]} \|\dvi\|^2)(\max_{i \in [t]} \|U\|^2 \|\VVih\|^2  \|\vi\|^2) \\
	&\leq \|\dv\|_F^2 \frac{\|\v\|_\inftyone^2}{\eigmin(\VV)}
	\end{align}
	and
	\begin{align}
	\sum_{i \in [t]} \|\oU Q \dvi (\vi)^\top \VVih U^\top\|_F^2 &= 
	\sum_{i \in [t]} \|\oU Q \dvi\|^2 \|U \VVih  \vi\|^2 \\
	&\leq (\max_{i \in [t]} \|\oU Q \dvi\|^2) \tr(U \VVih \sum_{i \in [t]}  \vi (\vi)^\top \VVih U^\top) \\
	&\leq \|Q\| \max_{i \in [t]} \|\dvi\|^2 \|U\|_F^2 \\
	&= \|\dv\|_\inftyone^2\,.
	\end{align}
	Therefore, $\sum_{i \in [t]} \|\oU Q \dvi (\vi)^\top \VVih U^\top\|_F^2 \leq \min\{\|\dv\|_F^2 \frac{\|\v\|_\inftyone^2}{\eigmin(\VV)}, \|\dv\|_\inftyone^2\}$. For brevity, let $\cE_2 (U) =  \\ \cVih(\sum_{i \in [t]} \Si U \dvi (\vi)^\top) - (\cVih \cH) (U) $.
	Then by Hanson-Wright inequality (Lemma~\ref{lem:hanson-wright}), with probability at least $1 - \delta/|N_\epsilon|$
	\begin{align}
	\big| \big\langle \cE_2(\oU Q), U \big\rangle \big| = \big| \Big\langle \sum_{i \in [t]} \frac1m \sum_{j \in [m]} \xij (\xij)^\top \oU Q \dvi (\vi)^\top \VVih, U\Big\rangle - \Ip{(\cVih \cH) (\oU Q) }{U} \big| \leq \Delta_\epsilon
	\end{align}
	where $\Delta_\epsilon = c (\min(\|\dv\|_F \frac{\|\v\|_\inftyone}{\sqrt{\eigmin(W)}},\|\dv\|_\inftyone )\sqrt{\frac{\log(|N_\epsilon|/\delta)}{m}} + \|\dv\|_\inftyone \frac{\|\v\|_\inftyone}{\sqrt{\eigmin(W)}} \frac{\log(|N_\epsilon|/\delta)}{m})$. Taking union bound over all $U \in N_\epsilon$ implies that with probability at least $1 - \delta$
	\begin{align}
	\big| \big\langle \cE_2(U), U \big\rangle \big| \leq \Delta_\epsilon \, \text{ , \;for all } U \in N_\epsilon\,.
	\end{align}
	Let $U' \in \mathcal{S}_F \subset \bR^{d \times r}$ be the matrix ``parallel'' to $\cE_1$, that is $\|\cE_2(\oU Q)\|_F = \max_{\|\widetilde{U}\|_F = 1} \Ip{\cE_1(\oU Q)}{\widetilde{U}} = \Ip{\cE_2(\oU Q)}{U'}$. Then there exists some $U \in N_\epsilon$ such that $\|U'-U\|_F \leq \epsilon$. 
	\begin{align}
	\|\cE_2(\oU Q)\|_F = \Ip{\cE_2(\oU Q)}{U'} &= \Ip{\cE_2(\oU Q)}{U} + \Ip{\cE_2(\oU Q)}{U'-U} \\
	&\leq \Ip{\cE_1 (U)}{U} + \|\cE_2(\oU Q)\|_F \|U' - U\|_F \\
	&\leq \Ip{\cE_1 (U)}{U} + \epsilon \|\cE_2(\oU Q)\|_F 
	\end{align}
	Re-arranging and setting $\epsilon=1/2$, and $c \gets 2c$, we get
	\begin{align}
	\|\cE_2(\oU Q)\|_F  \leq \Delta_{\frac12} &= c (\min(\|\dv\|_F \frac{\|\v\|_\inftyone}{\sqrt{\eigmin(W)}},\|\dv\|_\inftyone )\sqrt{\frac{dr\,\log(5/\delta)}{m}} + \|\dv\|_\inftyone \frac{\|\v\|_\inftyone}{\sqrt{\eigmin(W)}} \frac{dr\,\log(5/\delta)}{m}) 
	\end{align}
	Finally setting $\delta \leftarrow \delta/3$ get us the desired result.

\end{proof}

	\begin{lemma}
	\label{lem:tail8}
	With a probability of at least $1 - \delta/3$
	\begin{align}
	\| \sum_{i \in [t]} \cVih (\zi (\vi)^\top)) \|_F &\leq 
	\Ord\Big(\sigma \sqrt{\frac{dr}{m} \log\Big(\frac{t}\delta\Big)  \log\Big(\frac{r}\delta\Big)} \Big)
	\end{align}
\end{lemma}
\begin{proof}[Proof of Lemma~\ref{lem:tail8}]

	Notice that $\zi$ (defined in Appendix~\ref{sec:analysis_rkr_logt}) is a Gaussian random vector of the following form
	\begin{align}
	\zi = \frac1m \sum_{j \in [m]} \epsij \xij = \frac1m \|\epsi\| \gi, \gi \sim \cN(0, \Id_{d \times d})
	\end{align}
	
	Using Hanson-Wright inequality (Lemma~\ref{lem:hanson-wright}, by setting $m \leftarrow 1$, $x_1 \leftarrow \epsi$, and $A_1 \leftarrow \Id_{m \times m}$) and taking union bound over all tasks, we get that, with probability of at least $1- \frac{\delta}2$
	\begin{align}
	\|\epsi\|^2 \leq \sigma^2 m (1 + c\sqrt{\frac{\log(\frac{2t}\delta)}{m}} + c\frac{\log(\frac{2t}\delta)}{m}) \leq 2c\,\sigma^2 m \log\Big(\frac{2t}\delta\Big) \, \text{,\;  for all } i \in [t]
	\end{align}
	where used the fact that $m \geq 1$ and $\log\Big(\frac{2t}\delta\Big) \geq 1$.
	
	Let $\hvi = \VVih \vi$, then
	\begin{align}
	\sum_{i \in [t]} \|\hvi\|^2 = \sum_{i \in [t]} \tr((\vi)^\top \VV^{-1} \vi )
	= \sum_{i \in [t]} \tr( \VV^{-1} \vi (\vi)^\top )
	= r
	\end{align}
	
	Notice that $\sum_{i \in [t]} \frac1m \|\epsi\| \gi \hvij$ is a Gaussian random vector of the following form
	\begin{align}
	\sum_{i \in [t]} \frac1m \|\epsi\| \gi \hvij = \frac1m \sqrt{\sum_{i \in [t]} \|\epsi\|^2 (\hvij)^2}\;\hgj \,, \hgj \sim \cN(0, \Id_{d \times d})
	\end{align}
	
	Using Hanson-Wright inequality (Lemma~\ref{lem:hanson-wright}, by setting $m \leftarrow 1$, $x_1 \leftarrow \hgj$, and $A_1 \leftarrow \Id_{d \times d}$) and taking union bound over all $j \in [r]$, we get that, with probability of at least $1- \frac{\delta}2$
	\begin{align}
	\|\hgj\|^2 \leq  d (1 + c\sqrt{\frac{\log(\frac{2r}\delta)}{d}} + c\frac{\log(\frac{2r}\delta)}{d}) \leq 2c d \log\Big(\frac{2r}\delta\Big) \, \text{,\;  for all } j \in [r]
	\end{align}
	where used the fact that $d \geq 1$ and $\log\Big(\frac{2r}\delta\Big) \geq 1$.
	
	Combining the above results and using union bound, we get that, with a probability of at least $1- {\delta}$,
	\begin{align}
	\Big\| \sum_{i \in [t]} \cVih (\zi (\vi)^\top)) \Big\|_F^2 = \Big\| \sum_{i \in [t]} \zi (\vi)^\top \VVih \Big\|_F^2 
	&= \Big\| \sum_{i \in [t]} \frac1m \|\epsi\| \gi (\hvi)^\top \Big\|_F^2 \\
	&= \sum_{j \in [r]} \Big\| \sum_{i \in [t]} \frac1m \|\epsi\| \gi \hvij \Big\|^2 \\
	&\leq \sum_{j \in [r]} \sum_{i \in [t]} \frac{\|\epsi\|^2}{m^2} (\hvij)^2 \|\hgj\|^2 \\
	&\leq \sum_{j \in [r]} \sum_{i \in [t]} \Ord\Big(\frac{m \sigma^2}{m^2} \log\Big(\frac{t}\delta\Big) \Big) (\hvij)^2 \Ord\Big(d \log\Big(\frac{r}\delta\Big) \Big) \\
	&\leq\Ord\Big(\frac{d \sigma^2}{m} \log\Big(\frac{t}\delta\Big)  \log\Big(\frac{r}\delta\Big) \Big) \sum_{i \in [t]} \|\hvi\|^2 \\
	&\leq \Ord\Big(\frac{\sigma^2 dr}{m} \log\Big(\frac{t}\delta\Big)  \log\Big(\frac{r}\delta\Big) \Big)\,.
	\end{align}
	Finally, we get the desired result by setting $\delta \leftarrow \delta/3$.
\end{proof}

\subsection{Analysis of QR decomposition}\label{sec:qr_r_inverse_logt_pf}
\begin{proof}[Proof of Lemma~\ref{lem:qr_r_inverse_logt}]
	\begin{align}
	\sigma_{\min}(R) \geq \min_{\|z\| = 1} {\|R z\|} 
	= \min_{\|z\| = 1} {\|\pU 
		R z\|} 
	&= \min_{\|z\| = 1} {\| 
		\hU z\|} \\
	&\geq \min_{\|z\| = 1} {\| 
		(\oU Q + \cV^\inv \cH(\oU Q) + F) z\|} \\
	&\geq \min_{\|z\| = 1} \sqrt{z^\top Q^\top Q z} - \|\cV^\inv \cH(\oU Q)\| - \|F\| \\
	&\geq \min_{\|z\| = 1} \sigma_{\min}(Q) - \|\cV^\inv \cH(\oU Q)\| - \|F\| \\
	&\geq \frac12 - \frac18 - \frac18 \geq \frac14
	\end{align}
	There fore $R$ is invertible and $\|R^\invert\| = (\sigma_{\min} (R))^{-1} \leq 4$
\end{proof}

\subsection{Analysis of shuffling and partitioning}
\label{sec:shuffling_rkr_logt_pf}
\begin{proof}[Proof of Lemma~\ref{lem:shuffling_rkr_logt}]
We will assume that the set of tasks $[t]$ is shuffled. We will prove that incoherence holds for the all subset $\cT_k = [1 + \frac{t(k-1)}{K},  \frac{tk}{K} ] $ of size $t/K$. Shuffling and $K$-way partitioning to get $\cT_k$ is equivalent to uniformly sampling without replacement $t/K$ elements from $[t]$. We prove that incoherrence holds for the first subset $\cT_1$, then this is equivalent to proving that incoherence holds for the $k$-th partition $\cT_k$ by symmetry.
Let the tasks sampled for $\cT_1$ without replacement be $\{i_l\}_{l=1}^{t/k}$, where $i_l$ is the $l$-th sample. 

Let $\mathcal{S}_F = \{z \in \bR^{r} \,|\, \|z\| = 1\}$ be the set of all real vectors of dimensions $r$ with unit Euclidean norm. For $\epsilon \leq 1$, there exists an $\epsilon$-net, $N_\epsilon \subset \mathcal{S}_F$, of size $(1 + 2/\epsilon)^{r}$ with respect to the Euclidean norm~\citep[Lemma 5.2]{vershynin2010introduction}. That is for any $z' \in \mathcal{S}_{F}$, there exists some $z \in N_\epsilon$ such that $\|z'-z\| \leq \epsilon$.

Consider a $z \in N_\epsilon$, such that $\|z\| = 1$. Now we will prove with high-probability that $z^\top (\sum_{l=1}^{t/K}  v^{*(i_l)} (v^{*(i_l)})^\top) z$ is approximately equal to $z^\top \bE[\sum_{l=1}^{t/K} v^{*(i_l)} (v^{*(i_l)})^\top] z$. Now consider the martingale $X_l$, such that $X_0 = 0$ and $X_l = X_{l-1} +z^\top (v^{*(i_l)} (v^{*(i_l)})^\top -  \bE[v^{*(i_l)} (v^{*(i_l)})^\top | X_0, \ldots, X_{l-1}]) z$, for all $l \in [t/K]$. Clearly this is a martginagle as $\bE[ X_{l} | X_0, \ldots, X_{l-1}] = 0$, for all $l \in [t/K]$. The maximum difference two consecutive steps is $\max_{l} |X_l - X_{l-1}| \leq 2 \|v^{*(i_l)}\|^2 \leq 2 \|\ov\|_\inftyone^2$. Therefore by Azuma-Hoeffding martingale inequality, 
\begin{align}
|\sum_{l=1}^{t/K} z^\top v^{*(i_l)} (v^{*(i_l)})^\top z -  z^\top \bE[\sum_{l=1}^{t/K}  v^{*(i_l)} (v^{*(i_l)})^\top ] z| = |X_{t/K} | \leq \sqrt{\frac{2t}{K} \|\v\|_\inftyone^4 \log(\frac{2|N_\epsilon|}{\delta})}
\end{align}
with a probability of at least $1 -\delta/|N_\epsilon|$.

For brevity, let $E = \sum_{l=1}^{t/K} v^{*(i_l)} (v^{*(i_l)})^\top  -  \bE[\sum_{l=1}^{t/K}  v^{*(i_l)} (v^{*(i_l)})^\top ]$. Notice that $E$ is a real symmetric matrix, therefore it has an eigen decomposition. Then, let $v' \in \mathcal{S} \subset \bR^{r}$ be the largest ``eigenvector'' of $E$, such that $(v')^\top E v' = \|E\| = \max_{\|\widetilde{v}\| = 1} \widetilde{v}^\top E \widetilde{v} = \max_{\|\widetilde{v}\| = \|\widetilde{v}'\|_F = 1} \widetilde{v}^\top E \widetilde{v}'$. Then there exists some $v \in N_\epsilon$ such that $\|v'-v\| \leq \epsilon$. 
\begin{align}
\|E\|_F = (v')^\top E v &= v^\top E v + (v'-v)^\top E v+  (v')^\top E (v' - v) \\
&\leq v^\top E v + \|v'-v\|\|E\| \|v\| +  \|v'\| \|E\| \|v' - v\|   \\
&\leq v^\top E v  + 2 \epsilon \|E\| 
\end{align}
Re-arranging and setting $\epsilon=1/4$, and $c \gets 2c$, we get
\begin{align}
\| \sum_{l=1}^{t/K} v^{*(i_l)} (v^{*(i_l)})^\top -  \bE[\sum_{l=1}^{t/K}  v^{*(i_l)} (v^{*(i_l)})^\top ] \| = \|E\| \leq \sqrt{\frac{2tr}{K} \|\v\|_\inftyone^4 \log(\frac{18}{\delta})} \leq \frac12  \eigmin(\bE[\sum_{l=1}^{t/K}  v^{*(i_l)} (v^{*(i_l)})^\top] ).
\end{align}
with probability at least $1 -\delta/k$, where the last inequality used the fact that $t \geq \Omega(\mu^2 r^3 K \log(1/\delta))$. Additionally note that $\bE[\sum_{l=1}^{t/k}  v^{*(i_l)} (v^{*(i_l)})^\top] = \frac1{K} \sum_{i=1}^{t}  \ovi (\ovi)^\top = \frac1K (\ov)^\top \ov$, Therefore
\begin{align}
\lambda_{r'}(\sum_{i \in \cT_k} \ovi(\ovi)^\top) = \frac1K \Theta(\lambda_{r'}((\ov)^\top \ov)) \text{ for all $r' \in [r]$}
\end{align}
where $\lambda_i(\cdot)$ is the $r'$-th largest eigenvalue matrix operator. 

\end{proof}

\section{Technical Lemmas}
\label{sec:technical_lemmas}

This section contains some technical lemmas used in this paper.

\begin{lemma}\label{lem:eig_prod}
For a real matrix $A \in \bR^{m \times n}$ and a real symmetric positive semi-definite (PSD) matrix $B \in \bR^{n \times n}$, the following holds true: $\sigma_{\min}^2(A) \lambda_{\min}(B) \leq \lambda_{\min}(ABA^\top)$, where $\sigma_{\min}(\cdot)$ and $\lambda_{\min}(\cdot)$ represents the minimum singular value and minimum eigenvalue operators respectively.
\end{lemma}
\begin{proof} The proof directly follows from the definitons of $\sigma_{\min}$ and $\lambda_{\min}$. Since $B$ is a PSD matrix, therefore $ABA^\top$ is also PSD, i.e.~$\lambda_{\min}(ABA^\top) \geq 0$. This is because since $B$ is PSD, it has a PSD matrix square root $B^{1/2}$ such that $B = (B^{1/2})^\top B^{1/2}$ and $B^{1/2}$ is PSD. Then
\begin{align}
z^\top A B A^\top z &= z^\top A (B^{1/2})^\top B^{1/2} A^\top z = \|B^{1/2} A^\top z\|^2 \geq 0
\end{align}	
First assume that $\sigma_{\min} (A) > 0$, then
\begin{align}
\lambda_{\min}(ABA^\top) &= \min_{\|z\| = 1} z^\top ABA^\top z \\
&= \sigma_{\min}^2(A) \min_{\|z\| = 1} (\frac{A^\top z}{\sigma_{\min}(A)})^\top B (\frac{A^\top z}{\sigma_{\min}(A)}) \\
&\geq \sigma_{\min}^2(A) \min_{1\leq \|z\| \leq \frac{\sigma_{\max}(A)}{\sigma_{\min}(A)} } z^\top B z \\
&\geq \sigma_{\min}^2(A) \min_{\|z\| = 1} z^\top B z \\
&= \sigma_{\min}^2(A) \lambda_{\min}(B)
\end{align}
The second last inequality above follows from the fact that $B$ is a PSD matrix,i.e.~$\min_{\|z\| = 1} z^\top B z = \lambda_{\min}(B) \geq 0$. Secondly if $\sigma_{\min} (A) = 0$, then $A$ is rank deficient and hence $ABA^\top$ is also rank deficient, i.e.~$\lambda_{\min} (ABA^\top) = 0$. Therefore $\lambda_{\min} (ABA^\top) = 0 = \sigma_{\min}^2(A) \lambda_{\min}(B)$.
\end{proof}

\begin{lemma}[Weyl's inequality~\citep{amir1956extreme}]\label{lem:weyls}
For three real $r$-rank matrices, satisfying  $A - B = C$, Weyl's inequality~\citep[Theorem 3.6]{amir1956extreme}, tells that 
\begin{align}
\sigma_{k}(A) - \sigma_{k}(B) \leq  \|C\| \text{\,,  for all $k \in [r]$}
\end{align}
where $\sigma_k(\cdot)$ is the $k$-th largest singular value operator.
\end{lemma}

\begin{lemma}[a variant of Woodburry matrix identity~\cite{henderson1981deriving}]
\label{lem:woodbury_identity}
For linear operators $A$ and $B$ such that $A$ and $A+B$ are invertible, then
\begin{align}
(A + B)^{-1} - A^{-1} = - A^{-1}B(A + B)^{-1} \label{eq:woodbury_identity}
\end{align} 

\end{lemma}

\begin{lemma}\label{lem:distance-relations}
	Let $U \in \bR^{d \times r}$ and $\oU \in \bR^{d \times r}$ be two orthonormal matrices. Let $\{\sin \theta_j (U, \oU) \}_{j=1}^r$ be the singular values of $(\oU)^\top U$. Then following are true.
	\begin{align}
	\|U - \oU (\oU)^\top U \|_F &\geq \|\Id -  (\oU)^\top U\|_F \text{ , } \\
	\|U - \oU (\oU)^\top U \|_F &\geq r - \|(\oU)^\top U\|_F^2 \geq  \sum_{k \in [r]} \sin^2 \theta_k(U, \oU) \text{, } \\
	\|(\Id -  \oU (\oU)^\top) U\| = \|(\oU_\perp)^\top U\| &= \|U_\perp^\top \oU\| = \|(\Id -  U (U)^\top) \oU\| \text{, } \\
	\|(\Id -  \oU (\oU)^\top) U\|_F = \|(\oU_\perp)^\top U\|_F &= \|U_\perp^\top \oU\|_F = \|(\Id -  U (U)^\top) \oU\|_F \text{, and } \\
	\sigma_r((\oU)^\top U) &\geq \sqrt{1 - \|(\Id - \oU (\oU)^\top) U\|}
	\end{align}
\end{lemma}
\begin{proof}
	\begin{align}
	\|U - \oU (\oU)^\top U \|_F^2 &= \Ip{U - \oU (\oU)^\top U}{U - \oU (\oU)^\top U} \\
	&= \Ip{U}{U} - 2 \Ip{\oU (\oU)^\top U }{U} + \Ip{\oU (\oU)^\top U }{\oU (\oU)^\top U} \\
	&= r - 2 \tr(((\oU)^\top U)^\top ((\oU)^\top U)) + \tr(((\oU)^\top U)^\top ((\oU)^\top U)) \\
	&= r -  \tr(((\oU)^\top U)^\top ((\oU)^\top U)) \\
	&= r - \sum_{k \in [r]} \cos^2 \theta_k(U, \oU) 
	= \sum_{k \in [r]} \sin^2 \theta_k(U, \oU) 
	\geq \sin^2 \theta_1(U, \oU) \\
	&\geq \sum_{k \in [r]} (1-\cos^2 \theta_k(U, \oU)) \\
	&\geq \sum_{k \in [r]} (1-\cos \theta_k(U, \oU))^2 \\
	&= \|\Id - (\oU)^\top U \|_F^2
	\end{align}
	
	\begin{align}
    \|U_\perp^\top \oU\| = \sigma_{\max}(U_\perp^\top \oU)  &= \sqrt{\lambda_{\max}((\oU)^\top U_\perp U_\perp^\top \oU)} \\
    &= \sqrt{\lambda_{\max}((\oU)^\top U_\perp U_\perp^\top U_\perp U_\perp^\top \oU)} = \|U_\perp U_\perp^\top \oU\| = \|(\Id - U U^\top) \oU\|
	\end{align}
	Note that for $\|z\| = 1$
	\begin{align}
	&1 = z^\top U^\top U z = z^\top U^\top \oU (\oU)^\top U z + z^\top U^\top \oU_\perp (\oU_\perp)^\top U z \\
	\implies &1 - z^\top U^\top \oU (\oU)^\top U z = z^\top U^\top \oU_\perp (\oU_\perp)^\top U z \\
	\implies &1 - \min_{\|z\| = 1} z^\top U^\top \oU (\oU)^\top U z = \max_{\|z\| = 1} z^\top U^\top \oU_\perp (\oU_\perp)^\top U z \\
	\implies &1 - \sigma_{\min}^2 ((\oU)^\top U) = \|(\oU_\perp)^\top U\|^2
	\end{align}
	Therefore 
	\begin{align}
	\sigma^2_{\min}(U^\top \oU) + \|U_\perp^\top \oU\|^2 = 1 = \sigma^2_{\min}((\oU)^\top U) + \|(\oU_\perp)^\top U\|^2 \implies \|U_\perp^\top \oU\| = \|(\oU_\perp)^\top U\|
	\end{align}
	Rest of the equality can be obtained in a similar fashion using the above two relations.
	
	\begin{align}
	\|U_\perp^\top \oU\|_F^2 = \tr((\oU)^\top U_\perp U_\perp^\top \oU) &= \tr((\oU)^\top (\Id - U U^\top) \oU) \\
	&= \tr((\oU)^\top (\Id - U U^\top)^2 \oU) \\
	&= \|(\Id - U U^\top) \oU\|_F^2 \\
	&= \|(\Id - \oU (\oU)^\top) U\|_F^2 = \|(\oU_\perp)^\top U\|_F^2 
	\end{align}
	
	Let $E = (\Id - \oU (\oU)^\top) U$ and $Q =(\oU)^\top U$. Then $U^\top E = \Id - Q^\top Q$. Then by Weyl's inequality (Lemma~\ref{lem:weyls}, by setting $A \leftarrow \Id$, $B \leftarrow Q^\top Q$, and $C \leftarrow U^\top E$) we get that
	\begin{align}
	1 - \sigma_{r}(Q)^2 = \sigma_{r}(\Id) - \sigma_{r}(Q^\top Q) \leq  \|U^\top E\| \leq \|U\|\|E\| \leq \|(\Id - \oU (\oU)^\top) U\|
	\end{align}
	This implies that $\sigma_r((\oU)^\top U) \geq \sqrt{1 - \|(\Id - \oU (\oU)^\top) U\|}$
\end{proof}

\begin{lemma}[Hanson-Wright inequality, Theorem 6.2.1~\cite{vershynin2018high}] Let $x_1, \ldots, x_m \sim {\cal N}(0,{\mathbf I}_{d\times d})$ be $m$ i.i.d.~standard isotropic Gaussian random vectors of dimension $d$. Then, for some universal constant $c \geq 0$, the following holds true with a probability of at least $1 - \delta$.
	\begin{equation}
	\bigg|\frac1m \sum_{j=1}^m x_j^{\top} A_j x_j - \frac1m \sum_{j=1}^m \tr{A_j} \bigg| \leq c \max\bigg(\sqrt{\sum_{j=1}^m \|A_j\|^2_F \frac{ \log(1/\delta)}{m^2}}, \max_{j=1,\ldots,n}\|A_j\|_2 \frac{\log(1/\delta)}{m} \bigg)
	\end{equation}
	\label{lem:hanson-wright}
\end{lemma}

\begin{lemma} Let $x_1, \ldots, x_m \sim {\cal N}(0,{\mathbf I}_{d\times d})$ be $m$ i.i.d.~standard isotropic Gaussian random vectors of dimension $d$. Then, for some universal constant $c \geq 0$, the following holds true with a probability of at least $1 - \delta$.
	\begin{equation}
	\bigg|\frac1m \sum_{j=1}^m a^\top (x_j x_j^{\top}) b- a^\top b \bigg| \leq c \|a\|\|b\| \max\bigg(\sqrt{\frac{ \log(1/\delta)}{m}}, \frac{\log(1/\delta)}{m} \bigg)
	\end{equation}
	\label{lem:gauss_pseudo_inner}
\end{lemma}
\begin{proof}
	First notice that $a^\top (x_j x_j^{\top}) b = \tr(a^\top (x_j x_j^{\top}) b) = \tr(x_j^\top b a^\top x_j) = x_j^\top b a^\top x_j$ and $a^\top b = \tr(b a^\top)$. Then desired result follows from Lemma~\ref{lem:hanson-wright}, by setting $A_j = b a^\top$. .
\end{proof}

\begin{lemma} Let $x_1, \ldots, x_m \sim {\cal N}(0,{\mathbf I}_{d\times d})$ be $m$ i.i.d.~standard isotropic Gaussian random vectors of dimension $d$. Then, for some universal constant $c \geq 0$, the following holds true with a probability of at least $1 - \delta$.
	\begin{equation}
	\bigg\|\frac1m \sum_{j=1}^m a_j x_j x_j^{\top} - \frac1m \sum_{j=1}^m a_j \Id \bigg\| \leq c \max\bigg(\frac{\|a\|_2}{\sqrt{m}} \sqrt{\frac{d \log(9) + \log(1/\delta)}{m}}, \|a\|_\infty \frac{d \log(9) + \log(1/\delta)}{m} \bigg)
	\end{equation}
	\label{lem:gauss_sing_val}
\end{lemma}
\begin{proof}
	For $\epsilon \leq 1$, consider a unit vector $u \in N_\epsilon$ from the $\epsilon$-net of size $|N_\epsilon| = (1 + 2/\epsilon)^d$, of the sphere $\bS^{d-1}$~\citep[Lemma 5.2]{vershynin2010introduction}. That is for any $u' \in \bS^{d-1}$, there exists some $u \in N_\epsilon$ such that $\|u'-u\| \leq \epsilon$.
	
	Now we will prove a concentration for $\frac1m \sum_{j=1}^m a_j u^\top x_j x_j^{\top} u - \frac1m \sum_{j=1}^m a_j$.
	Notice that, $a_j u^\top (x_j x_j^{\top}) u = a \tr(u^\top (x_j x_j^{\top}) u) = a_j \tr(x_j^\top u u^\top x_j) =  x_j^\top (a_j u u^\top) x_j$ and $\tr(a_j u u^\top) = a_j$. Then, by Hanson-Wright inequality (Lemma~\ref{lem:hanson-wright}), for some universal constant $c \geq 0$, the following holds true with a probability of at least $1 - \delta'$.
	\begin{equation}
	\bigg| \frac1m \sum_{j=1}^m a_j u^\top x_j x_j^{\top} u - \frac1m \sum_{j=1}^m a_j \bigg| \leq c \max\bigg(\frac{\|a\|_2}{\sqrt{m}} \sqrt{\frac{ \log(1/\delta')}{m}}, \|a\|_\infty \frac{ \log(1/\delta')}{m} \bigg)
	\end{equation}
	This implies that, through union bound, for the matrix $A' = \frac1m \sum_{j=1}^m a_j x_j x_j^{\top} - \frac1m \sum_{j=1}^m a_j \Id$ the following holds true with probability at least $1 - \delta$
	\begin{equation}
	u^\top A' u \leq c \max\bigg(\frac{\|a\|_2}{\sqrt{m}} \sqrt{\frac{ \log(|N_\epsilon|/\delta)}{m}}, \|a\|_\infty \frac{\log(|N_\epsilon|/\delta)}{m} \bigg)\,, \;\;\; \text{any $u \in N_\epsilon$}
	\end{equation}
	Let $u' \in \bS^{d-1}$ be the top singular-value of $A'$, then there exists some $u \in N_\epsilon$ such that $\|u'-u\| \leq \epsilon$.
	\begin{align}
	\sigma_{\max}(A') = (u')^\top A' u' &= (u' - u)^\top A' u' + u^\top A' u + u^\top A' u \\
	&\leq \|u' - u\|\sigma_{\max}(A') \|u'\| + \|u\|\sigma_{\max}(A') \|u' - u\| + u^\top A' u \\
	\end{align}
	Re-arranging and setting $\epsilon=1/4$ and setting $c \gets 2c$, we get
	\begin{align}
	\sigma_{\max}(A') &\leq \frac{u^\top A' u}{1 - 2\epsilon} \leq 2c \max\bigg(\frac{\|a\|_2}{\sqrt{m}} \sqrt{\frac{d \log(9) + \log(1/\delta)}{m}}, \|a\|_\infty \frac{d \log(9) + \log(1/\delta)}{m} \bigg)
	\end{align}
\end{proof}

\bibliographystyle{plain}
\bibliography{meta_learn}